\newcommand{\sidecaption}[1]
{\raisebox{\abovecaptionskip}{\begin{subfigure}[t]{1.6em}
  \caption[singlelinecheck=off]{}
  \label{#1}
\end{subfigure}}\ignorespaces}
\title{Procrustean Orthogonal Sparse Hashing}
\author{%
  Mariano Tepper, Dipanjan Sengupta, Ted Willke\\
  Intel Labs\\
  \texttt{\{mariano.tepper,dipanjan.sengupta,ted.willke\}@intel.com} \\
}
\begin{document}

\maketitle

\begin{abstract}
Hashing is one of the most popular methods for similarity search because of its speed and efficiency. Dense binary hashing is prevalent in the literature. Recently, insect olfaction was shown to be structurally and functionally analogous to sparse hashing~\cite{dasguptaNeuralAlgorithmFundamental2017}. Here, we prove that this biological mechanism is the solution to a well-posed optimization problem. Furthermore, we show that orthogonality increases the accuracy of sparse hashing. Next, we present a novel method, Procrustean Orthogonal Sparse Hashing (POSH), that unifies these findings, learning an orthogonal transform from training data compatible with the sparse hashing mechanism.
We provide theoretical evidence of the shortcomings of Optimal Sparse Lifting (OSL)~\cite{liFastSimilaritySearch2018} and BioHash~\cite{ryaliBioInspiredHashingUnsupervised2020}, two related olfaction-inspired methods, and propose two new methods, Binary OSL and SphericalHash, to address these deficiencies.
We compare POSH, Binary OSL, and SphericalHash to several state-of-the-art hashing methods and provide empirical results for the superiority of the proposed methods across a wide range of standard benchmarks and parameter settings.
\end{abstract}

\section{Introduction}

Similarity search is a key technique that underpins many machine learning applications like recommender systems, visual search engines, drug discovery, and genomics.
In similarity search, given a database of high-dimensional vectors and a query vector $q$ of the same dimension, we seek the database vectors that are similar or closer to $q$, based on some similarity function, e.g., cosine similarity. 
In modern applications, these vectors represent the content of images, sounds, or bioinformatic data, extracted and summarized by deep learning systems. For example, collaborative filtering in recommendation systems uses similarity search to determine the stereotypical characteristics of a new observation (either a user or a product) by finding its best match in the training set.

The sheer volume and richness of data make similarity search a challenging problem that is both compute and memory intensive. Biological systems exhibit a remarkable ability for storing and retrieving complex data. In recent years, these systems have emerged as a source of inspiration to develop artificial counterparts. It is commonly accepted that biological systems operate as hashing methods, representing input data with high-dimensional sparse hash keys~\cite{olshausenSparseCodingOvercomplete1997a,wixtedCodingEpisodicMemory2018}. Additionally, these biological systems often learn in a completely unsupervised manner.

Our work is deeply inspired by a recent discovery in computational neuroscience: the olfactory system of the Drosophila melanogaster (commonly known as the fruit fly) behaves, both structurally and functionally, like a similarity search system, generating high-dimensional sparse hash codes~\cite{dasguptaNeuralAlgorithmFundamental2017}.

We begin with a review of fruit fly hashing and other related work in \cref{eq:related_work}.
Then, in \cref{sec:posh}, we present a novel method for similarity search: Procrustean Orthogonal Sparse Hashing (POSH).  Differently from~\cite{dasguptaNeuralAlgorithmFundamental2017}, which uses randomly computed hashing functions, POSH learns from training data unsupervisedly. This learning procedure is formally derived from two key principles.
We first prove that the fruit-fly hashing scheme is the solution to an optimization problem with a unique closed-form solution. Then, we show that orthogonality is a desired trait that drives accuracy up. POSH combines both findings into a well-defined optimization problem whose solution can be implemented efficiently.

In \cref{sec:angle_preservation}, we analyze in depth two recently proposed algorithms, Optimal Sparse Lifting (OSL)~\cite{liFastSimilaritySearch2018} and BioHash~\cite{ryaliBioInspiredHashingUnsupervised2020}, that also take inspiration from the fruit-fly hashing method~\cite{dasguptaNeuralAlgorithmFundamental2017}. For OSL, we point out critical deficiencies and propose an optimization method that overcomes them. We also prove that the objective function used in BioHash is equivalent to spherical k-means. This discovery explains the observation in~\cite{ryaliBioInspiredHashingUnsupervised2020} that BioHash performs well in extremely sparse settings. Furthermore, we use this discovery to derive a new hashing algorithm, SphericalHash.

We also present in \cref{sec:refinement} a candidate refinement technique to boost the accuracy of hashing methods.
In \cref{sec:experiments}, we show through numerous and varied numerical experiments that SphericalHash and POSH outperform other state-of-the-art unsupervised hashing methods in terms of accuracy.
Finally, we provide some concluding remarks in \cref{sec:conclusions}.

\textbf{Mathematical notation.}
Matrices and vectors are respectively denoted by lowercase and uppercase bold letters, e.g., $\vect{A} \in \Real^{m \times n}$ and $\vect{v} \in \Real^{n}$. Individual entries are denoted by $(\vect{A})_{ij}$ and $(\vect{v})_i$. We use a subscript to denote different samples, e.g., $\vect{x}_i$.

\section{Related work}
\label{eq:related_work}

Given the plethora of work in similarity search,\footnote{\url{https://learning2hash.github.io/papers.html}} we limit our brief review to methods that are closely related to those proposed in this work.

Locality Sensitive Hashing (LSH)~\cite{gionisSimilaritySearchHigh1999,indykApproximateNearestNeighbors1998} is a seminal method that overcomes the curse of dimensionality in similarity search through the use of randomized hashing. Many variants have been proposed over the past 20 years, e.g.,~\cite{andoniNearoptimalHashingAlgorithms2008a,charikarSimilarityEstimationTechniques2002,datarLocalitysensitiveHashingScheme2004} that also scale in time and dataset size and provide theoretical guarantees on the quality of the returned nearest neighbors.
Iterative Quantization~\cite{gongIterativeQuantizationProcrustean2013} was one of the first methods that proposed to learn a hashing function from training data. Later efforts either improve ITQ's accuracy, e.g., KNNH~\cite{heKNearestNeighborsHashing2019}, or its performance, e.g., \cite{zhangFastOrthogonalProjection2015}.

Deep learning variants have also been recently proposed, some unsupervised, e.g.,~\cite{doLearningHashBinary2016,linLearningCompactBinary2016,liongDeepHashingCompact2015}, some supervised, e.g.,~\cite{junwangSemiSupervisedHashingLargeScale2012,liongDeepHashingCompact2015}. Supervised methods use class labels to learn hashing functions such that samples from the same (different) class end up having similar (dissimilar) hash codes.

\subsection{Brain-inspired sparse hashing}
\label{sec:fruit_fly}

The structure and functionality of the olfactory system of the Drosophila melanogaster has provided inspiration for hashing methods~\cite{dasguptaNeuralAlgorithmFundamental2017}.
The sparse set of neurons that fire in presence of a certain odor can be regarded as a ``tag'' or sparse code for that odor. This tag's role is to trigger behavioral responses specific to each odor, e.g., to seek the reward associated with sugar water.

This olfactory system can be formally described as follows~\cite{dasguptaNeuralAlgorithmFundamental2017}.
For $d$-dimensional data, choose the length $D$ of the output hash code ($D \gg d$) and the number $\alpha$ of set bits in the hash code ($\alpha \ll D$). In the biological system, $d$ is the number of olfactory receptor neurons (ORNs), $D$ is the number of Kenyon cells (KCs), and $\alpha$ is the number of KCs showing activity after inhibition (in~\cite{dasguptaNeuralAlgorithmFundamental2017}, $\alpha/D \approx 0.05$).
In the original formalization of the system~\cite{dasguptaNeuralAlgorithmFundamental2017}, the weight matrix $\mat{W} \in \Real^{D \times d}$ is created by sampling its entries from a Bernoulli distribution with parameter $p \in (0, 1)$, i.e., $(\mat{W})_{ij} \sim \operatorname{Bernoulli}(p)$. Note that other insects, such as locusts~\cite{jortnerSimpleConnectivityScheme2007}, have a similar olfactory system but with a dense matrix $\mat{W}$.
Given an input vector $\vect{x} \in \Real^{d}$, its hash code $\vect{h} \in \{ 0, 1 \}^{D}$ is created with
\begin{equation}
	\vect{h} = \operatorname{\textsc{wta}}_{\alpha} (\mat{W} \vect{x}) ,
	\label{eq:fruit_fly_hashing}
\end{equation}
where the winner-take-all non-linearity $\operatorname{\textsc{wta}}_{\alpha}$ is defined for each entry $i$ by
\begin{equation}
	(\operatorname{\textsc{wta}}_{\alpha} (\vect{y}))_i
	=
	\begin{cases}
	1 & \text{if $(\vect{y})_i$ is among the $\alpha$ largest values of $\vect{y}$;} \\
	0 & \text{otherwise.}
	\end{cases}
	\label{eq:wta_nl}
\end{equation}

The transformation in \cref{eq:fruit_fly_hashing} exhibits the following characteristics. First, Hamming distances between hash codes preserve the cosine similarity between the data points that produced them. Second, these distances can be written as
\begin{equation}
	\norm{\vect{h}_q - \vect{h}_i}{H} = 2 \alpha - 2 \operatorname{popcount}(\vect{h}_q \wedge \vect{h}_i) .
	\label{eq:hamming_fixed_bits}
\end{equation}
Since $\alpha$ is a hyperparameter, we only need to compute the non-constant second term.
This computation can be performed in $\alpha$ operations. As $\alpha \ll D$, this represents a meaningful computational advantage.
Finally, \Cref{eq:fruit_fly_hashing} substantially improves the accuracy of similarity search~\cite{dasguptaNeuralAlgorithmFundamental2017} when compared to the traditional LSH~\cite{gionisSimilaritySearchHigh1999} defined by $\operatorname{sign}(\mat{W}' \vect{x})$, with $\mat{W}' \in \Real^{D' \times d}$ and commonly $D' \leq d$.

Both Optimal Sparse Lifting (OSL)~\cite{liFastSimilaritySearch2018} and BioHash~\cite{ryaliBioInspiredHashingUnsupervised2020} propose to use learning and leverage, a posteriori, \cref{eq:fruit_fly_hashing} for similarity search. See \cref{sec:osl} and \cref{sec:biohash} for in-depth analyses of these methods. Although not based on \cref{eq:fruit_fly_hashing},~\cite{masciSparseSimilaritypreservingHashing2014} also proposes the use of sparse hash codes.

\section{Procrustean Orthogonal Sparse Hashing}
\label{sec:posh}

Motivated by the appealing features of the hashing scheme presented above, we ask and answer in the affirmative the question: Can we improve the hashing scheme by replacing a randomly sampled matrix $\mat{W}$ by a matrix that is specifically constructed for a given database?

The first ingredient in our approach is casting \cref{eq:fruit_fly_hashing} as the solution to an optimization problem.\footnote{Proofs are in \cref{sec:posh_additional}.\label{posh_proofs}}

\begin{restatable}{proposition}{fruitflyoptimization}
	Let $\operatorname{\textsc{wta}}_{\alpha}$ be defined as in \cref{eq:fruit_fly_hashing,eq:wta_nl}.
	The operation $\vect{h} = \operatorname{\textsc{wta}}_{\alpha} (\mat{W} \vect{x})$ is the unique solution to the optimization problem
	\begin{equation}
	\min_{\vect{h}} \norm{ \vect{h} - \mat{W} \vect{x} }{2}^2
	\quad\text{s.t.}\quad
	\begin{gathered}
	\vect{h} \in \{ 0, 1 \}^{D} ,\
	\transpose{\vect{1}} \vect{h}  = \alpha .
	\end{gathered}
	\end{equation}
	\label{theorem:fruit_fly_optimization}
\end{restatable}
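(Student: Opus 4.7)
The plan is to unfold the squared norm, exploit the fact that binary variables satisfy $h_i^2 = h_i$, and reduce the problem to maximizing a linear functional over a combinatorially structured feasible set, which can be solved by a direct greedy/exchange argument.

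First, I would set $\vect{y} = \mat{W}\vect{x}$ and expand
\begin{equation}
\| \vect{h} - \vect{y} \|_2^2 = \transpose{\vect{h}}\vect{h} - 2\transpose{\vect{h}}\vect{y} + \transpose{\vect{y}}\vect{y}.
\end{equation}
Because every $h_i \in \{0,1\}$, one has $h_i^2 = h_i$, so $\transpose{\vect{h}}\vect{h} = \transpose{\vect{1}}\vect{h} = \alpha$ is fixed by the constraints. Since $\transpose{\vect{y}}\vect{y}$ does not depend on $\vect{h}$, minimizing the objective is equivalent to
\begin{equation}
\max_{\vect{h} \in \{0,1\}^D,\ \transpose{\vect{1}}\vect{h} = \alpha}\ \transpose{\vect{h}}\vect{y}.
\end{equation}

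Next I would argue that this linear 0/1 program is solved by selecting, as the support of $\vect{h}$, the indices of the $\alpha$ largest entries of $\vect{y}$. A standard exchange argument suffices: if some optimal $\vect{h}^\star$ sets $h^\star_i = 1$ for an index $i$ whose $y_i$ is not among the $\alpha$ largest, then there exists an index $j$ with $h^\star_j = 0$ and $y_j > y_i$; swapping their values preserves feasibility and strictly increases $\transpose{\vect{h}}\vect{y}$, contradicting optimality. Thus the maximizer has precisely the support selected by $\operatorname{\textsc{wta}}_{\alpha}$, yielding $\vect{h}^\star = \operatorname{\textsc{wta}}_{\alpha}(\mat{W}\vect{x})$.

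For uniqueness, the exchange argument also shows that any optimal $\vect{h}$ must contain in its support every index whose $y_i$ value is strictly greater than the $\alpha$-th largest; as long as the $\alpha$-th and $(\alpha+1)$-th order statistics of $\vect{y}$ are distinct, the maximizer is unique. This is exactly the same well-definedness assumption implicitly required for $\operatorname{\textsc{wta}}_{\alpha}$ in \cref{eq:wta_nl} (without it, one needs a tie-breaking rule, and uniqueness must be understood up to that rule). No step is really hard; the only subtlety worth flagging is that uniqueness relies on this no-tie assumption, which is generic for the continuous-valued inputs considered.
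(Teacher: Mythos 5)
Your proof follows essentially the same route as the paper's: expand the squared norm, use $(\vect{h})_i^2 = (\vect{h})_i$ so that $\transpose{\vect{h}}\vect{h} = \alpha$ is fixed, reduce to maximizing $\transpose{\vect{h}}\vect{y}$ with $\vect{y} = \mat{W}\vect{x}$, and select the $\alpha$ largest entries of $\vect{y}$. Your explicit exchange argument and your remark that uniqueness holds only when the $\alpha$-th and $(\alpha+1)$-th order statistics of $\vect{y}$ are distinct are a slightly more careful treatment of the final step, which the paper asserts without flagging the tie-breaking caveat.
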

 

The next important ingredient is adding an orthogonality constraint on $\mat{W}$.\footref{posh_proofs}

\begin{restatable}{proposition}{fruitflyorthogonality}
	Let $\mat{W}$ be a binary matrix with entries sampled from a Bernoulli distribution with parameter $p \in (0, 1)$. The diagonal and off-diagonal entries of $\transpose{\mat{W}} \mat{W}$ follow Binomial distributions with parameters $p$ and $p^2$, respectively. Their first moments are:
	\begin{subequations}
	\begin{align}
		\expectation{(\transpose{\mat{W}} \mat{W})_{ii}} &= D p , &
		\variance{(\transpose{\mat{W}} \mat{W})_{ii}} &= D p (1 - p) , \\
		(\forall i \neq j)\ \expectation{(\transpose{\mat{W}} \mat{W})_{ij}} &= D p^2 , &
		(\forall i \neq j)\ \variance{(\transpose{\mat{W}} \mat{W})_{ij}} &= D p^2 (1 - p^2) .
	\end{align}
	\end{subequations}
	\label{theorem:fruit_fly_orthogonality}
\end{restatable}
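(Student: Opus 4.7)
The plan is to compute each entry of $\transpose{\mat{W}} \mat{W}$ directly from the definition and then recognize the resulting sums as binomial random variables. Write
\begin{equation}
(\transpose{\mat{W}} \mat{W})_{ij} = \sum_{k=1}^{D} (\mat{W})_{ki} (\mat{W})_{kj},
\end{equation}
and split the analysis into the diagonal and off-diagonal cases.

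For the diagonal case $i = j$, I would use the fact that each entry $(\mat{W})_{ki}$ takes values in $\{0,1\}$, so $(\mat{W})_{ki}^2 = (\mat{W})_{ki}$. Thus $(\transpose{\mat{W}} \mat{W})_{ii}$ is a sum of $D$ i.i.d.\ Bernoulli$(p)$ random variables, which is exactly Binomial$(D, p)$. The stated mean $Dp$ and variance $Dp(1-p)$ then follow from the standard formulas for the binomial distribution.

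For the off-diagonal case $i \neq j$, the key observation is that, by construction, the entries $(\mat{W})_{ki}$ and $(\mat{W})_{kj}$ are independent Bernoulli$(p)$ variables for each $k$. Hence their product $(\mat{W})_{ki} (\mat{W})_{kj}$ is $\{0,1\}$-valued and equals $1$ with probability $p^2$, i.e., it is Bernoulli$(p^2)$. Summing over the $D$ independent rows $k$ yields a Binomial$(D, p^2)$ random variable, whose mean is $Dp^2$ and whose variance is $Dp^2(1-p^2)$, matching the stated moments.

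There is no real obstacle here: the statement is essentially a bookkeeping exercise in distributions of sums of Bernoullis, with the only subtlety being the harmless simplification $x^2 = x$ on $\{0,1\}$ that collapses the diagonal sum. The proof is thus entirely elementary once independence across rows and across columns of $\mat{W}$ is invoked.
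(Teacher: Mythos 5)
Your proposal is correct and follows essentially the same argument as the paper's own proof: the diagonal entries are sums of $D$ Bernoulli$(p)$ variables (using $x^2=x$ on $\{0,1\}$) and the off-diagonal entries are sums of $D$ independent products, each Bernoulli$(p^2)$, giving Binomial distributions whose standard moment formulas yield the stated means and variances. Your write-up just spells out the indexing and independence assumptions a bit more explicitly than the paper does.
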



Now, in expectation $\transpose{\mat{W}} \mat{W} = D p \mat{I} + D p^2 \left( \mat{E} - \mat{I} \right)$, see \cref{fig:fruit_fly_orthogonality}. This approximate orthogonality is one of the key ingredients that make \cref{eq:fruit_fly_hashing} good for similarity search, analogous to the use of a matrix with Gaussian entries in compressed sensing. Orthogonality is key because it makes the projection $\mat{W} \vect{x}$ invertible, i.e., $\transpose{\mat{W}} \mat{W} \vect{x} = \vect{x}$, ensuring that no information is lost in this step.

Furthermore, we find that replacing the sparse binary matrix by an orthogonal matrix (obtained by sampling its entries from a Gaussian distribution and orthogonalizing its columns) provides consistent improvements in accuracy, see \cref{fig:sparse_vs_orthogonal}. Thus, we consider orthogonality to be an important ingredient in our machine-learning-driven solution.

\begin{figure}[t]
	\centering
	\sidecaption{fig:fruit_fly_orthogonality}
	\hspace{0.3em}
    \raisebox{-0.9\height}{\includegraphics[width=0.4\linewidth]{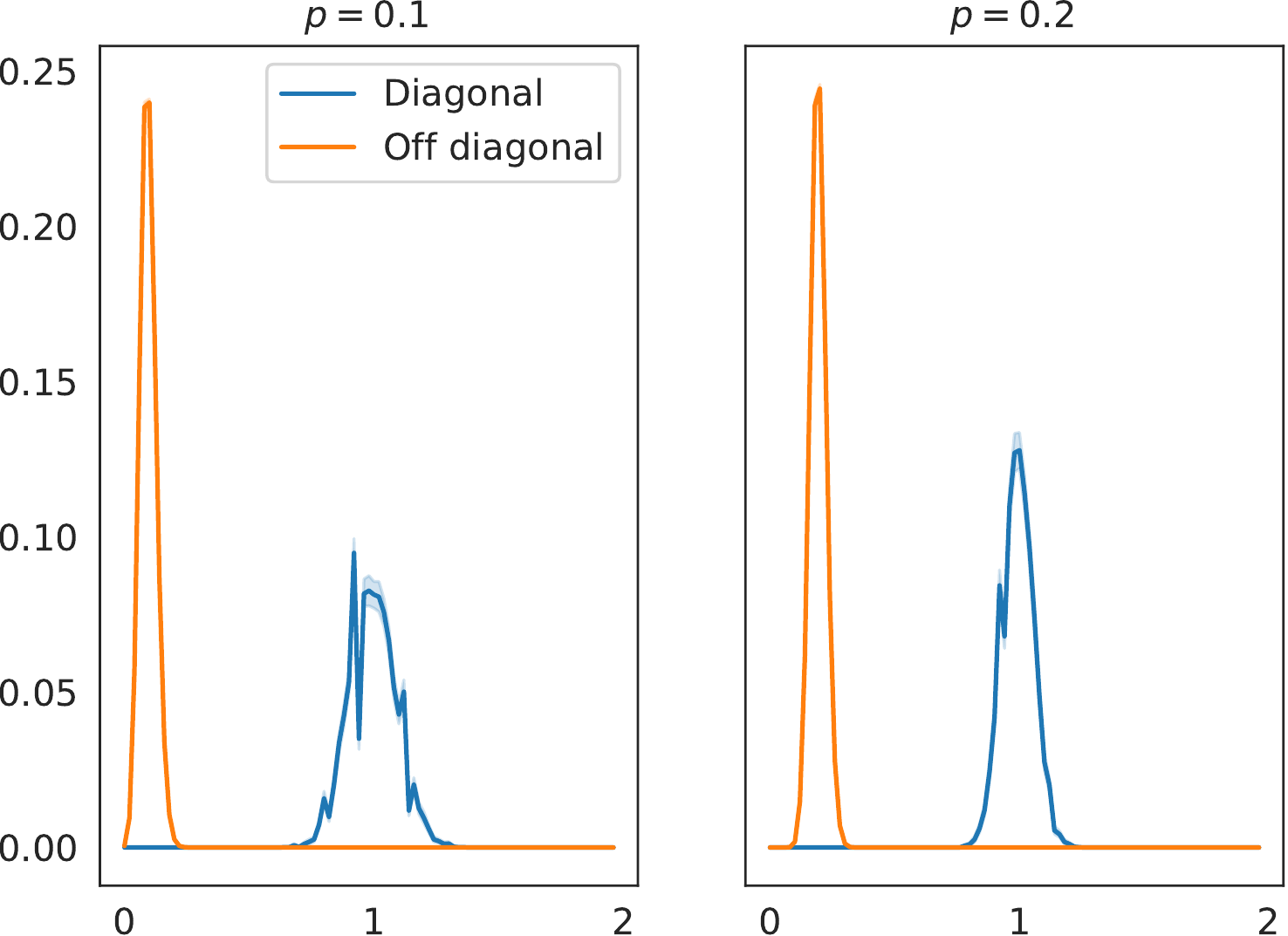}}
	\hfill%
	\sidecaption{fig:sparse_vs_orthogonal}
	\hspace{0.3em}
    \raisebox{-0.9\height}{\includegraphics[width=0.45\linewidth]{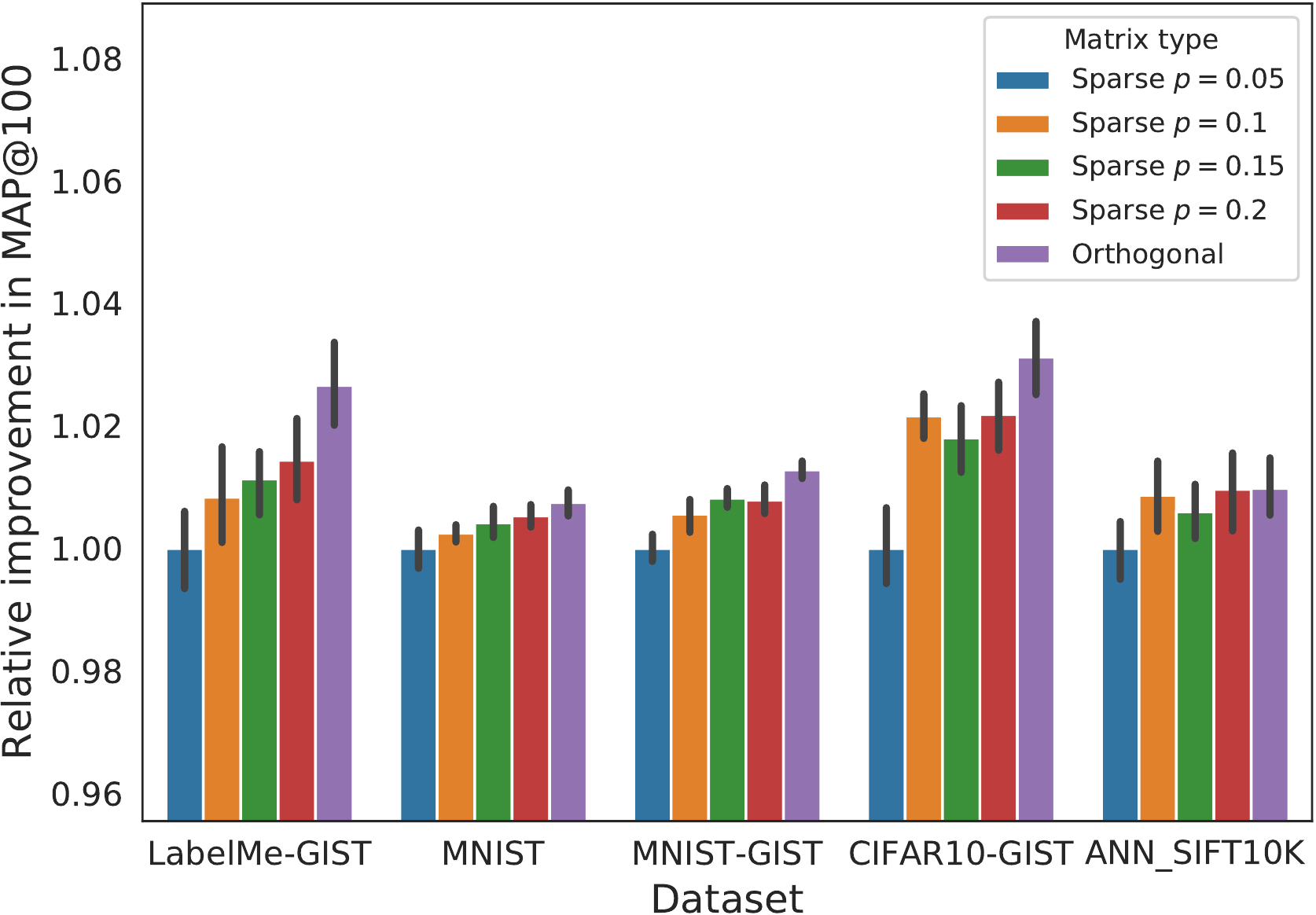}}
	
	\caption{%
		\protect\subref{fig:fruit_fly_orthogonality} Distribution of entries of the matrix $(Dp)^{-1} \transpose{\mat{W}} \mat{W}$ over 100 randomly generated matrices $\mat{W} \in \Real^{D \times d}$, where each entry is sampled from a Bernoulli distribution with parameter $p$. The shaded area represents the 95\% confidence interval (barely visible for the off diagonal values).
		\protect\subref{fig:sparse_vs_orthogonal} Comparative similarity search accuracy using matrices $\mat{W}$ with Bernoulli distributed entries versus using a dense orthogonal matrix $\mat{W}$ (bars represent the 95\% confidence interval). Orthogonality provides a consistent, albeit small, accuracy advantage.
	}
	\label{fig:orthogonality}
\end{figure}

We propose to take the problem in \cref{theorem:fruit_fly_optimization} and incorporate the orthogonality constraint. Given samples $\{ \vect{x}_i \in \Real^d \}_{i=1}^{n}$, we obtain a joint optimization for the hash codes and the weight matrix,
\begin{equation}
	\min_{\mat{W}, \{ \vect{h}_i \}_{i=1}^{n}} \sum_{i=1}^{n} \norm{ \vect{h}_i - \mat{W} \vect{x}_i }{2}^2
	\quad\text{s.t.}\quad
	\begin{gathered}
		\transpose{\mat{W}} \mat{W} = \mat{I} , \
		\vect{h}_i \in \{ 0, 1 \}^{D} , \
		\transpose{\vect{1}} \vect{h}_i  = \alpha .
	\end{gathered}
	\tag{POSH}
	\label[problem]{prob:posh}
\end{equation}
This problem shares its basic structure with~\cite{gongIterativeQuantizationProcrustean2013}, with the important differences of $D \gg d$ and the sparsity constraint.
We use an alternating optimization scheme to find a solution to \ref{prob:posh}.
First, given $\mat{W}$, we solve the subproblem for $\{ \vect{h}_i \}_{i=1}^{n}$.
For each $i$, \ref{prob:posh} decomposes into individual problems
\begin{equation}
	\min_{\vect{h}_i} \norm{ \vect{h}_i - \mat{W} \vect{x}_i }{2}^2
	\quad\text{s.t.}\quad
	\begin{gathered}
	\vect{h}_i \in \{ 0, 1 \}^{D} , \
	\transpose{\vect{1}} \vect{h}_i  = \alpha .
	\end{gathered}
	\label[problem]{prob:posh_H}
\end{equation}
The solution to this problem is $\vect{h}_i = \operatorname{\textsc{wta}}_{\alpha} (\mat{W} \vect{x}_i)$ per \cref{theorem:fruit_fly_optimization}.
Then, given $\{ \vect{h}_i \}_{i=1}^{n}$, we solve
\begin{equation}
	\min_{\mat{W}} \sum_{i=1}^{n} \norm{ \vect{h}_i - \mat{W} \vect{x}_i }{2}^2
	\quad\text{s.t.}\quad
	\begin{gathered}
		\transpose{\mat{W}} \mat{W} = \mat{I} .
	\end{gathered}
	\label[problem]{prob:posh_W}
\end{equation}
This is the classical orthogonal Procrustes problem, whose solution is $\mat{W} = \mat{U} \transpose{\mat{V}}$ where $\mat{U} \mat{S} \transpose{\mat{V}}$ is the singular value decomposition (SVD) of the matrix $\mat{M} = \sum_i \vect{h}_i \transpose{\vect{x}_i}$.

The POSH subproblems (\ref{prob:posh_H}) and (\ref{prob:posh_W}) are amenable to a stochastic (i.e., mini-batch) treatment.
This treatment is inspired by online dictionary learning techniques~\cite{mairalOnlineLearningMatrix2010}. Because of space constraints, the presentation of POSH's pseudocode is deferred to \cref{sec:posh_additional}.

%
%
%
%
%
%

\section{A new perspective on Fruit Fly inspired hashing methods}
\label{sec:angle_preservation}

Following arguments similar to those in \cite{charikarSimilarityEstimationTechniques2002}, \cref{eq:fruit_fly_hashing} preserves cosines similarities, i.e., $\cos \theta(\vect{x}_i, \vect{x}_j) = \hat{\vect{x}}_i \cdot \hat{\vect{x}}_j \approx \vect{h}_i \cdot \vect{h}_j$,
where $\theta$ denotes the angle between its arguments and $\hat{\vect{x}} = \vect{x} / \norm{\vect{x}}{2}$.
This is because the norm of any input $\vect{x}$ cannot be retained, as $(\forall \beta \neq 0)\, \operatorname{\textsc{wta}}_{\alpha} \left( \beta \vect{x} \right) = \operatorname{\textsc{wta}}_{\alpha} \left( \vect{x} \right)$.
However, not every matrix $\mat{W}$ possesses this property. Next, we will present three alternatives to build angle-preserving hashing methods, the last two of which will be extensively discussed in \cref{sec:biohash,sec:osl}.

First, we have the strategy followed by POSH: constrain the matrix $\mat{W}$ to be orthogonal ($\transpose{\mat{W}} \mat{W} = \mat{I}$). Orthogonality is the strictest constraint we can impose since it preserves angles, distances and norms.

Second, we can quantize the angles in any dataset $\{ \vect{x}_i \}_{i=1}^n$, approximating each point with one of $K$ representatives $\{ \vect{c}_k \}_{k=1}^K$ such that $\norm{\vect{c}_k}{2} = 1$. This quantization can be learned using spherical k-means. Then, $\cos \theta(\vect{x}_i, \vect{x}_j) \approx \vect{c}_{k(i)} \cdot \vect{c}_{k(j)}$, where $k(i)$ and $k(j)$ represent the index of closest representative for $i$ and $j$, respectively. As we will formally show in \cref{sec:biohash}, this is the strategy followed by BioHash~\cite{ryaliBioInspiredHashingUnsupervised2020} \emph{during training}. However, a different strategy is followed \emph{during hashing}: each point $i$ is associated with a set $\varkappa(i)$ of $\alpha$ representatives, resulting in $\cos \theta(\vect{x}_i, \vect{x}_j) \approx ( \sum_{k \in \varkappa(i)} \vect{c}_{k} ) \cdot ( \sum_{k' \in \varkappa(k)} \vect{c}_{k'} )$. This approximation loses its accuracy as $\alpha$ increases. See \cref{sec:biohash} for more details.

Third, we can learn to preserve angles. Given a dataset $\{ \hat{\vect{x}}_i \}_{i=1}^n$ of unit-norm vectors, we can learn $\{ \vect{y}_i \}_{i=1}^n$, with $	\vect{y}_i \in \{ 0, 1 \}^{D} , \transpose{\vect{1}} \vect{y}_i  = \alpha$, such that $\sum_{ij} (\hat{\vect{x}}_i \cdot \hat{\vect{x}}_j - \vect{y}_i \cdot \vect{y}_j )^2$ is minimized. Then, we can find the matrix $\mat{W}$ that best transforms $\hat{\vect{x}}_i$ into $\vect{y}_i$. This is the strategy chosen by Optimal Sparse Lifting (OSL)~\cite{liFastSimilaritySearch2018}. It is worth noting that this optimization is much more computationally demanding than the two previous ones as it involves $n \times n$ matrices. The OSL optimization method in~\cite{liFastSimilaritySearch2018} is broken and in \cref{sec:osl} we re-formulate it in a sound way.

\subsection{Interpreting BioHash as spherical k-means}
\label{sec:biohash}

Given data $\{ \vect{x}_i \in \Real^d \}_{i=1}^{n}$, BioHash~\cite{ryaliBioInspiredHashingUnsupervised2020} learns a matrix of weights $\mat{W} \in \Real^{D \times d}$ by minimizing the problem (we use the values $p=2$ and $\Delta = 0$ as in~\cite{ryaliBioInspiredHashingUnsupervised2020}, which simplify the formulation)
\begin{equation}
	\max_{\mat{W}}
	\sum_{i=1}^{n} \sum_{j=1}^{D} 
	\indicator{ j = \argmax_l \transpose{\vect{w}}_l \vect{x}_i ) }
	\frac{
		\transpose{\vect{w}}_j \vect{x}_i
	}{
		\norm{\vect{w}_j}{2}
	}
	\label[problem]{eq:biohash_loss}
\end{equation}
where the $\operatorname{Rank}$ returns the indices that would sort the array, in decreasing order, $\vect{w}_j$ denotes the $j$-th row of $\mat{W}$, and $\indicator{\cdot}$ denotes the indicator function.
The BioHash~\cite{ryaliBioInspiredHashingUnsupervised2020} learning dynamic, given learning rate $\tau$, is
\begin{equation}
	\tau \frac{d (\vect{w}_j)_k}{dt}
	=
	\indicator{ j = \argmax_l \transpose{\vect{w}}_l \vect{x}_i ) }
	\left( (\vect{x}_i)_k - \transpose{\vect{w}}_j \vect{x}_i (\vect{w}_j)_k \right) .
	\label{eq:biohash_dynamics}
\end{equation}
Under this dynamic, the rows $\vect{w}_j$ converge to have unit norm~\cite{ryaliBioInspiredHashingUnsupervised2020}.
The connection with \cref{eq:fruit_fly_hashing} is heuristic, using it post hoc  for hashing once the optimization is complete.

Next, we cast \cref{eq:biohash_loss} into a more familiar form in the machine learning literature: spherical k-means. See, for example, its use in unsupervised feature extraction in~\cite{coatesLearningFeatureRepresentations2012}.

\begin{restatable}{proposition}{biohashspherical}
	\cref{eq:biohash_loss} is equivalent to spherical k-means, defined as follows:
	\begin{equation}
		\min_{\mat{W} , \{ \vect{s}_i \}_{i=1}^{n}}
		\sum_{i=1}^{n}
		\norm{ \transpose{\mat{W}} \vect{s}_i -  \vect{x}_i }{2}^2
		\quad\text{s.t.}\quad
		\begin{gathered}
			(\forall i)\ \vect{s}_i \in \{ 0, 1 \}^D ,\
			(\forall i)\  \norm{\vect{s}_i}{0} \leq 1 ,\
			\norm{\vect{w}_j}{2} = 1.
		\end{gathered}
		\label[problem]{eq:spherical_kmeans}
	\end{equation}
	\label{prop:biohash_spherical}
\end{restatable}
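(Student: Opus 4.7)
The plan is to prove the equivalence by eliminating the auxiliary variable $\vect{s}_i$ in the spherical k-means problem and showing both objectives reduce to the same maximization in $\mat{W}$ alone.

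First I would massage the BioHash loss into a cleaner form. The text already notes that, under the learning dynamics in \cref{eq:biohash_dynamics}, the rows converge to satisfy $\norm{\vect{w}_j}{2} = 1$, so I would proceed under this normalization (which matches the corresponding constraint in \cref{eq:spherical_kmeans}). Under it, the inner sum over $j$ in \cref{eq:biohash_loss} just picks out the winning row, so the objective collapses to $\sum_{i=1}^n \max_l \transpose{\vect{w}}_l \vect{x}_i$ subject to $\norm{\vect{w}_l}{2} = 1$ for all $l$.

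Next I would handle \cref{eq:spherical_kmeans} by optimizing over $\vect{s}_i$ with $\mat{W}$ fixed. If $\vect{s}_i$ has its single non-zero entry at index $j(i)$, then $\transpose{\mat{W}} \vect{s}_i = \vect{w}_{j(i)}$, and expanding the squared norm yields
\begin{equation*}
\norm{\transpose{\mat{W}} \vect{s}_i - \vect{x}_i}{2}^2
= \norm{\vect{w}_{j(i)}}{2}^2 - 2\, \transpose{\vect{w}}_{j(i)} \vect{x}_i + \norm{\vect{x}_i}{2}^2
= 1 - 2\, \transpose{\vect{w}}_{j(i)} \vect{x}_i + \norm{\vect{x}_i}{2}^2,
\end{equation*}
where the last equality uses the unit-norm constraint on the rows. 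Since $\norm{\vect{x}_i}{2}^2$ is a constant, the per-sample minimum over $\vect{s}_i$ is attained at $j(i) = \argmax_l \transpose{\vect{w}}_l \vect{x}_i$. Substituting back, the spherical k-means problem reduces to $\min_{\mat{W}} \sum_i (1 + \norm{\vect{x}_i}{2}^2 - 2 \max_l \transpose{\vect{w}}_l \vect{x}_i)$, which up to constants is exactly the maximization derived for BioHash; the two problems therefore have the same set of optimal $\mat{W}$'s.

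The main obstacle is the slack in $\norm{\vect{s}_i}{0} \leq 1$: strictly speaking, setting $\vect{s}_i = \vect{0}$ gives cost $\norm{\vect{x}_i}{2}^2$, which beats the single-winner assignment whenever $\max_l \transpose{\vect{w}}_l \vect{x}_i < \tfrac{1}{2}$, an option BioHash does not model. I would deal with this by observing that at a meaningful optimum (so that the learned codebook actually represents the data), the winning row aligns well with $\vect{x}_i$ and this threshold is comfortably satisfied, so the $\vect{s}_i = \vect{0}$ option is inactive and the two reductions coincide. A minor bookkeeping point worth stating explicitly is that the normalized ratio $\transpose{\vect{w}}_j \vect{x}_i / \norm{\vect{w}_j}{2}$ in \cref{eq:biohash_loss} and the unit-norm constraint on the rows in \cref{eq:spherical_kmeans} are exactly what lets us match the two formulations once the winner has been selected.
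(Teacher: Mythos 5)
Your proof is correct and follows essentially the same route as the paper's: eliminate each $\vect{s}_i$ per sample, use the unit-norm rows to reduce the quadratic term to a constant so that the optimal assignment is $\argmax_l \transpose{\vect{w}}_l \vect{x}_i$ (i.e., $\operatorname{\textsc{wta}}_1(\mat{W}\vect{x}_i)$), and plug back to recover \cref{eq:biohash_loss}. Your remark about the $\vect{s}_i = \vect{0}$ slack in $\norm{\vect{s}_i}{0} \leq 1$ flags a corner case that the paper's own proof silently assumes away (it takes $\transpose{\vect{s}_i}\mat{W}\transpose{\mat{W}}\vect{s}_i = 1$, i.e., exactly one set bit), so your treatment is, if anything, slightly more careful on that point.
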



The BioHash learning algorithm (i.e., Eq.~(1) in~\cite{ryaliBioInspiredHashingUnsupervised2020}) is relatively slow.
Fortunately, \cref{prop:biohash_spherical} offers a path to an efficient and fast alternative. We use a spherical k-means~\cite{coatesLearningFeatureRepresentations2012} to learn $\mat{W}$ and then use \cref{eq:fruit_fly_hashing} during hashing.\footnote{SphericalHash is formally described in \cref{sec:spherical_hash}.} We name this novel combination SphericalHash.

Now that we have two similar algorithms, BioHash and SphericalHash, that use the same hashing scheme and optimize the \emph{same} objective function (\cref{eq:biohash_loss}), we can contrast their optimization algorithms. SphericalHash attains significantly higher objective value than the dynamics in~\cref{eq:biohash_dynamics}  do (e.g., see \cref{fig:bio_vs_spherical}). However, a higher training objective value does not positively correlate with better similarity search accuracy, as can be observed for both algorithms in \cref{fig:bio_vs_spherical}. We hypothesize that the source of this misalignment is the different values of $\alpha$ used when training ($\alpha=1$) and hashing ($\alpha > 1$). In agreement with our thesis, when $\alpha$ is relatively small ($\alpha  \leq 32$) during hashing, guaranteeing an alignment with training, SphericalHash performs very well. In this regime, SphericalHash significantly outperforms BioHash. Finally, we remark that this discrepancy between training and hashing is not present in POSH, which uses a consistent value throughout (see \cref{fig:bio_vs_spherical}).

\begin{figure}
	\centering
	\includegraphics[width=0.95\textwidth]{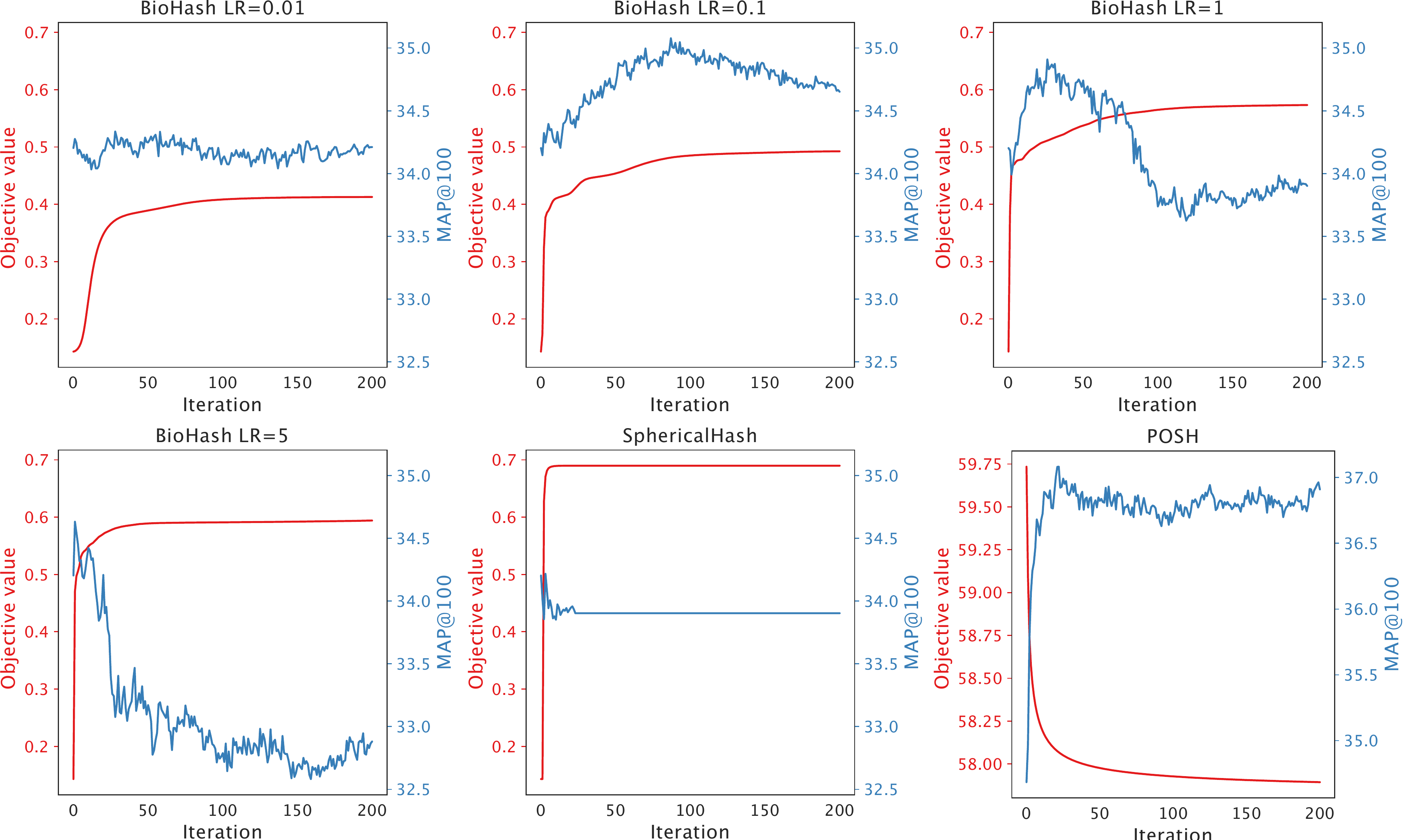}
	
	\caption{Similarity search accuracy (in blue, higher is better) is not positively correlated with the BioHash objective function (in red, higher is better) when $\alpha \gg 1$ (here, $D=1024$ and $\alpha=64$). As we increase the initial learning rate (LR) used in the original BioHash iterations (first four plots), they converge to an increasingly better objective value. However, with this training improvement, we observe worse accuracy. Running just a few optimization iterations seems to improve things, but running them all the way is clearly detrimental. SphericalHash optimizes the same objective and obtains a significantly better objective value. Again, the objective function and similarity search accuracy appear to be unrelated. For POSH, a better objective function value (in red, lower is better) correlates with improved similarity search accuracy (in blue). We use a subset of CIFAR10-GIST where, for each class, we randomly sample 1000 and 100 target and query vectors, respectively. 
	}
	\label{fig:bio_vs_spherical}
\end{figure}

\subsection{Optimal Sparse Lifting: an alternative optimization}
\label{sec:osl}

To preserve angular similarities, OSL~\cite{liFastSimilaritySearch2018} starts by learning hash codes and then learns an appropriate $\mat{W}$ (see \cref{sec:angle_preservation}). Given data $\mat{X} \in \Real^{d \times n}$, with unit-norm columns, the first problem is~\cite{liFastSimilaritySearch2018}
\begin{equation}
	\mat{Y}^* = \argmin_{\mat{Y}} \tfrac{1}{2} \norm{\transpose{\mat{X}} \mat{X} - \mat{Y} \mat{Y}}{F}^2 + \gamma \norm{\mat{Y}}{p}
	\quad\text{s.t.}\quad
	\begin{gathered}
		\transpose{\vect{Y}} \mat{1} = \alpha \vect{1} ,\
		\mat{0} \leq \mat{Y} \leq \mat{1} .
	\end{gathered}
	\label[problem]{eq:osl}
\end{equation}
where  the $\ell_p$ pseudo-norm ($0 < p < 1$) is used to promote sparsity.
The second problem is~\cite{liFastSimilaritySearch2018}
\begin{equation}
	\mat{W}^* = \argmin_{\mat{W}} \tfrac{1}{2} \norm{\mat{W} \mat{X} - \mat{Y}^*}{F}^2 + \beta \norm{\mat{W}}{p}
	\quad\text{s.t.}\quad
	\mat{W} \vect{1} = c \vect{1} ,\ \mat{0} \leq \mat{W} \leq \mat{1} .
	\label[problem]{eq:oso}
\end{equation}
In~\cite{liFastSimilaritySearch2018}, the connection to \cref{eq:fruit_fly_hashing} is heuristic, and it is used post hoc for hashing once the optimization is complete. We note that the values of $\gamma$, $\beta$, and $p$ are not specified in~\cite{liFastSimilaritySearch2018}, making their results impossible to reproduce.

The authors of~\cite{liFastSimilaritySearch2018} propose to use the Frank-Wolfe (a.k.a, conditional gradient)~\cite{frankAlgorithmQuadraticProgramming1956a,jaggiRevisitingFrankWolfeProjectionfree2013a} method to solve these two problems. However, in order to apply Frank-Wolfe, the objective function needs to be differentiable. The $\ell_p$ norms in \cref{eq:osl,eq:oso} are non-differentiable. As such, the optimization method in~\cite[Algorithm 1]{liFastSimilaritySearch2018} is not correct.

Given the shortcomings and uncertainties in the original OSL formulation, we propose a variant that is theoretically sound and, in practice, does not have \emph{critical} hyperparameters. The proposed technique starts by optimizing a problem similar to \cref{eq:osl} but eliminates the $\ell_p$ terms and returns to the binary constraints stemming from \cref{theorem:fruit_fly_optimization}. We define the problem
\begin{equation}
	\mat{Y}^* = \argmin_{\mat{Y}} \tfrac{1}{4} \norm{\transpose{\mat{X}} \mat{X} - \transpose{\mat{Y}} \mat{Y}}{F}^2
	\quad\text{s.t.}\quad
	\begin{gathered}
		\transpose{\vect{Y}} \mat{1} = \alpha \vect{1} ,\
		\mat{Y} \in \{ 0, 1 \}^{D \times n} .
	\end{gathered}
	\label[problem]{eq:osl_binary}
\end{equation}
Because of space constraints, we defer the optimization algorithm to \cref{sec:OSL_optimization}.
After finding $\mat{Y}^*$ and following~\cite{liFastSimilaritySearch2018}, \cref{eq:oso} would be solved to find $\mat{W}$.
Instead, we solve
\begin{equation}
	\mat{W}^* = \argmin_{\mat{W}} \tfrac{1}{2} \norm{\mat{W} \mat{X} - \mat{Y}_{t}}{F}^2 .
	\label[problem]{eq:oso_binary_relaxed}
\end{equation}
Removing the sparsity constraints from $\mat{W}$ simplifies the optimization by enlarging the feasible set. As such, solving the unconstrained \cref{eq:oso_binary_relaxed} provides an upper bound for the accuracy of the proposed OSL variant. Additionally, a dense $\mat{W}$ does not preclude biological plausibility~\cite{jortnerSimpleConnectivityScheme2007}. The proposed approach, Binary Optimal Sparse Lifting (BOSL), is described in \cref{sec:OSL_optimization}.

\begin{figure}[t]
	\centering
	\begin{small}
	\begin{tabu} to 0.95\textwidth {*{3}{ @{\hspace{0pt}} X[c,m] @{\hspace{0pt}} }}
		MNIST & CIFAR10-GIST & LabelMe-12-50K-GIST \\
		\includegraphics[width=\linewidth]{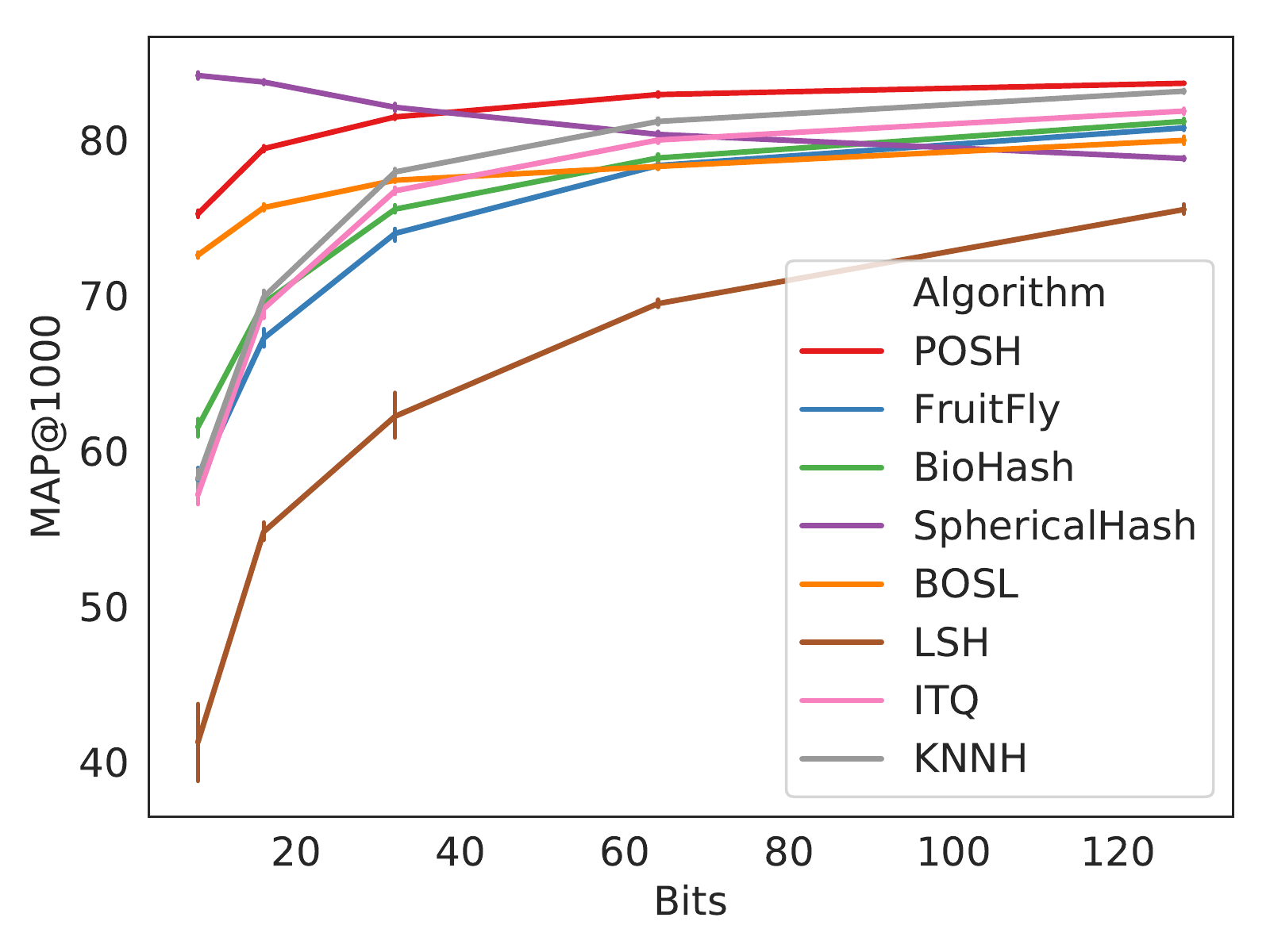} &
		\includegraphics[width=\linewidth]{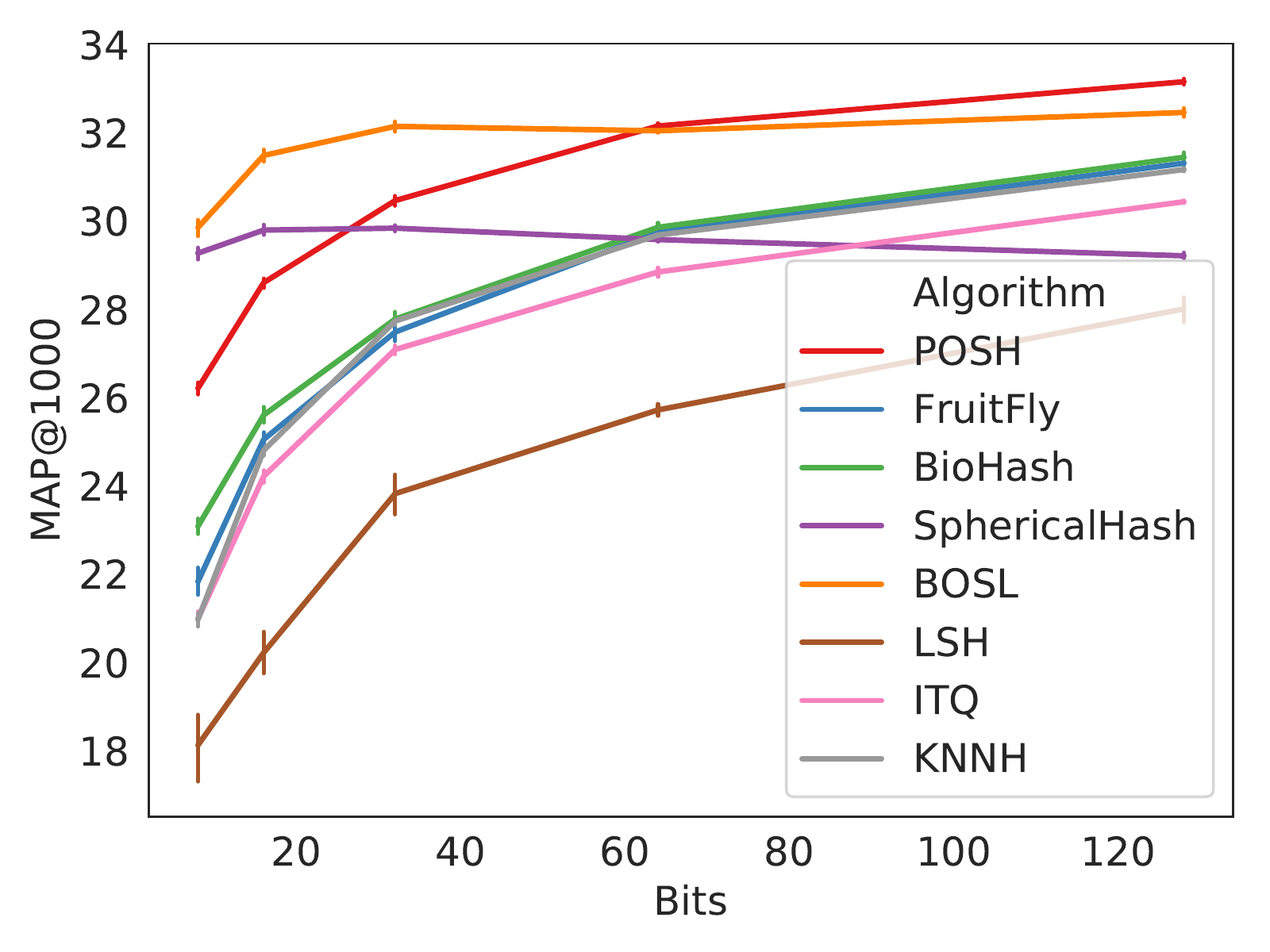} &
		\includegraphics[width=\linewidth]{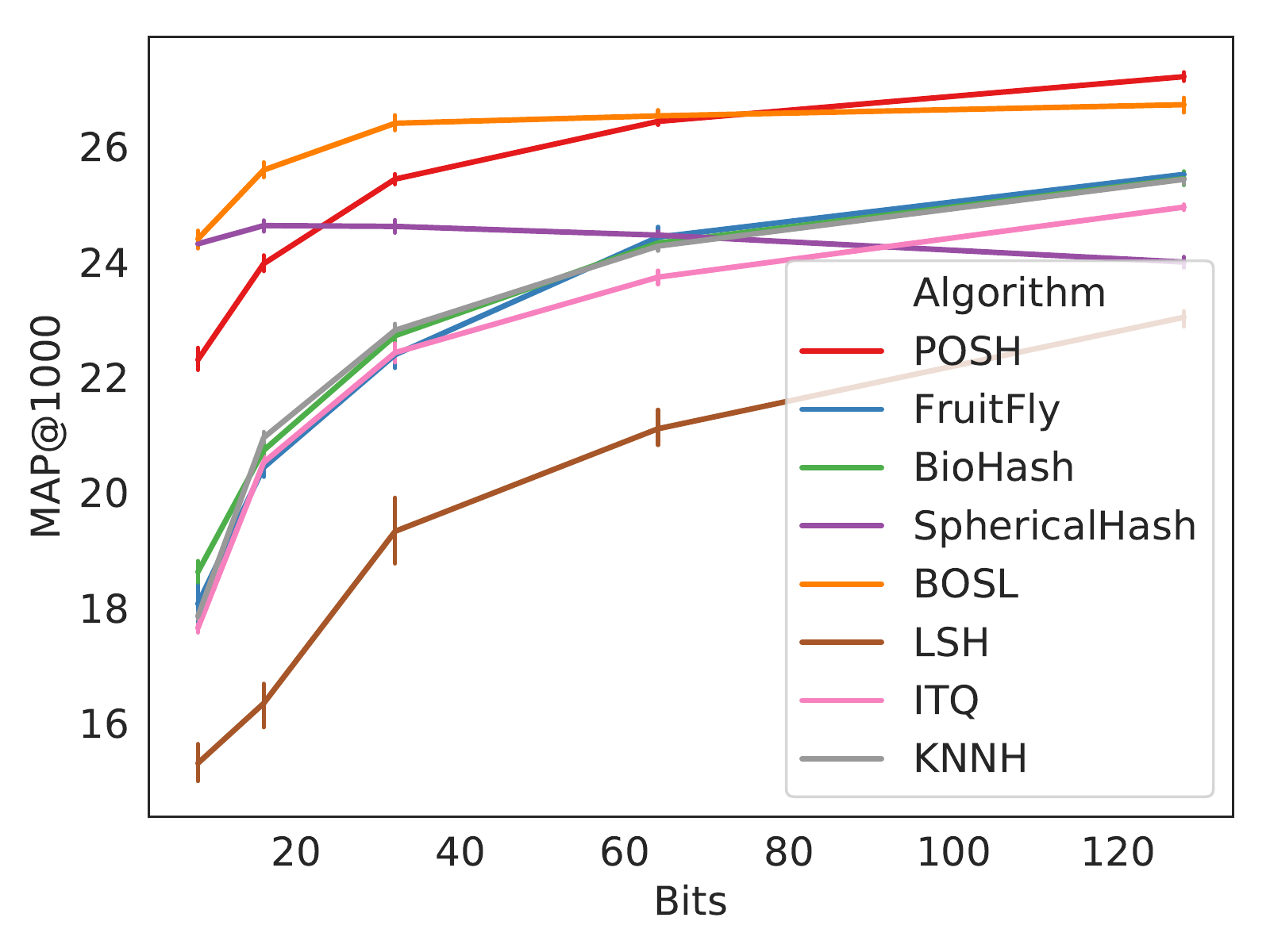} \\
		MNIST-GIST & CIFAR10-VGG & LabelMe-12-50K-VGG \\
		\includegraphics[width=\linewidth]{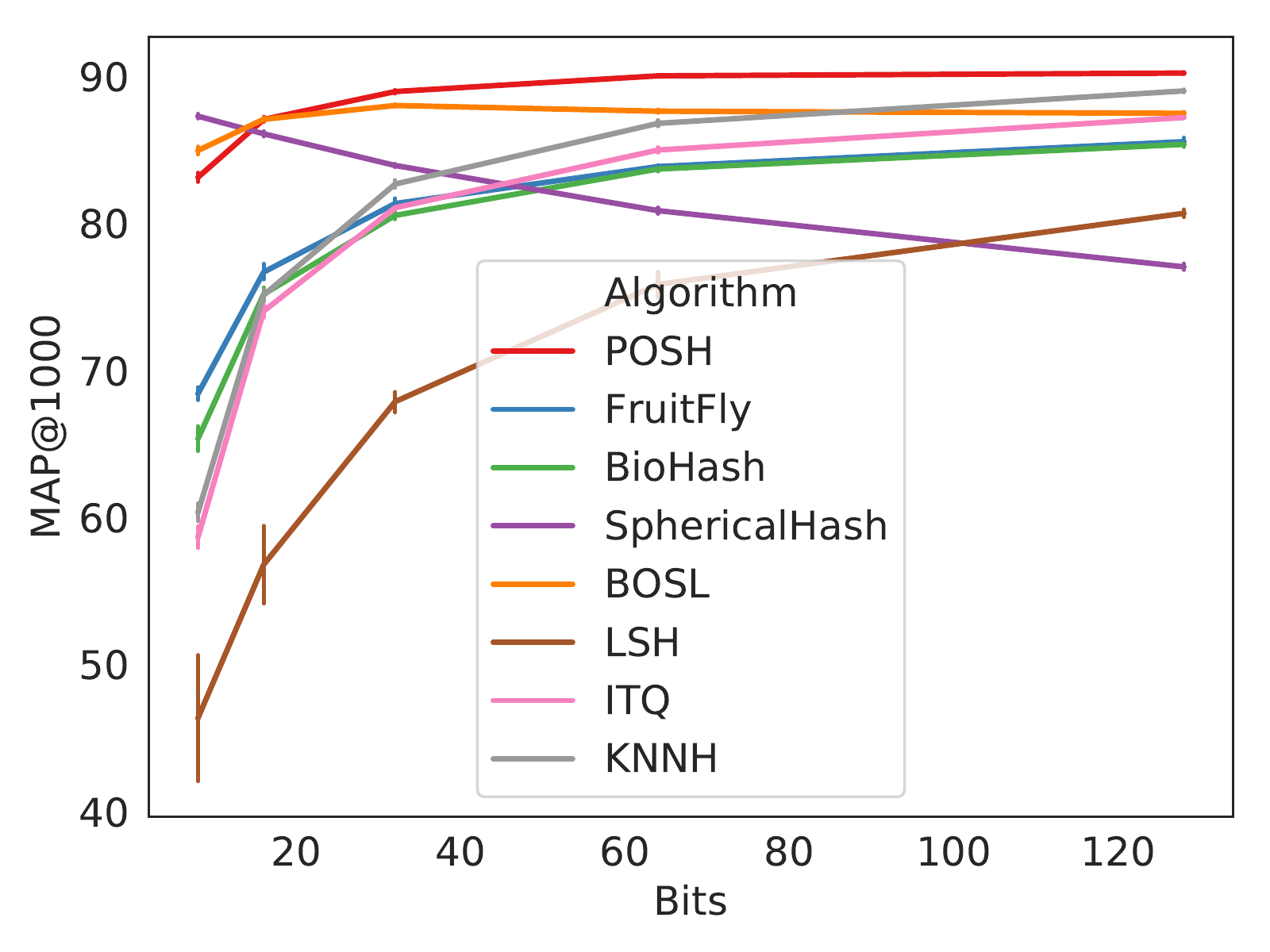} &
		\includegraphics[width=\linewidth]{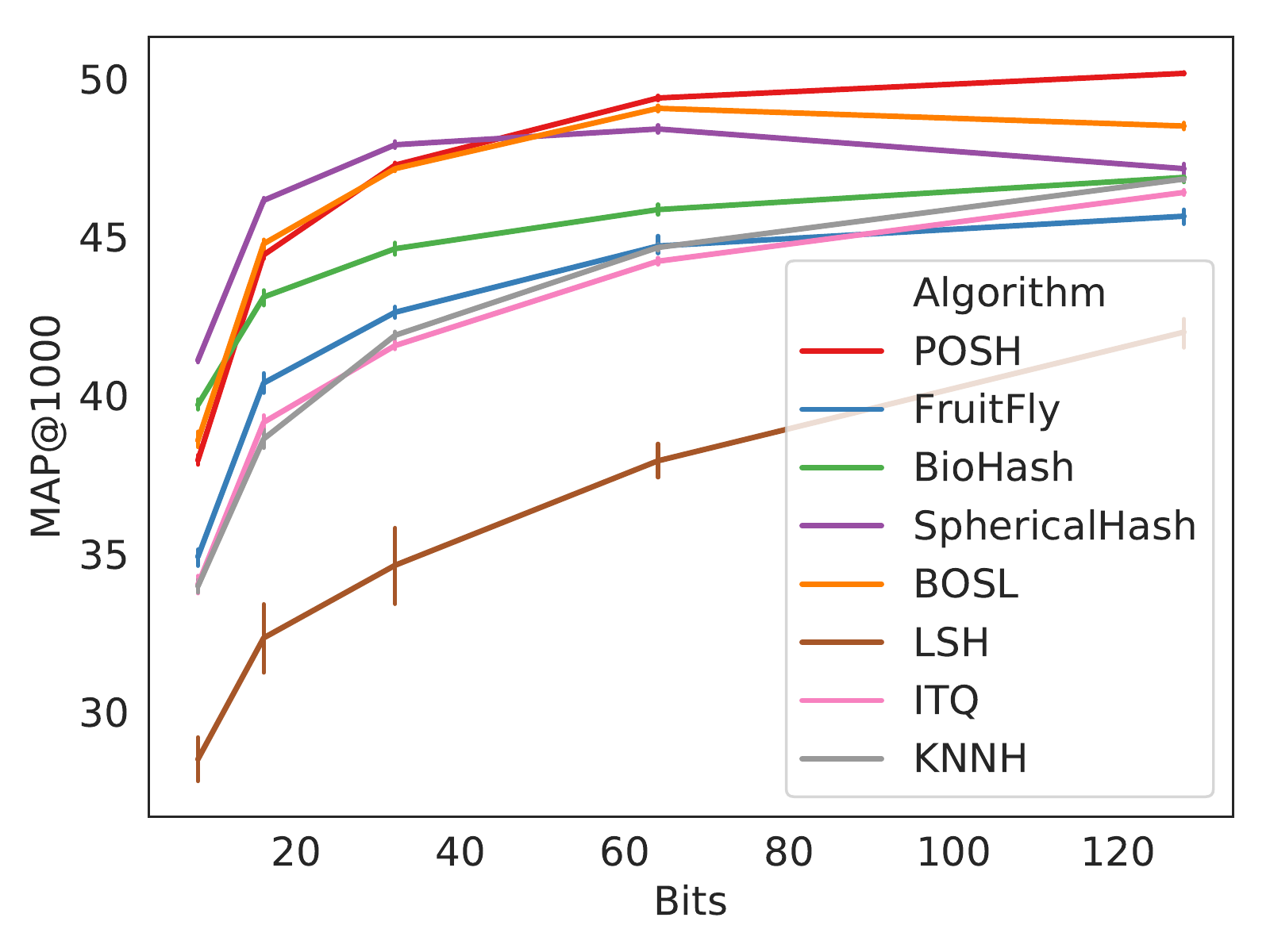} &
		\includegraphics[width=\linewidth]{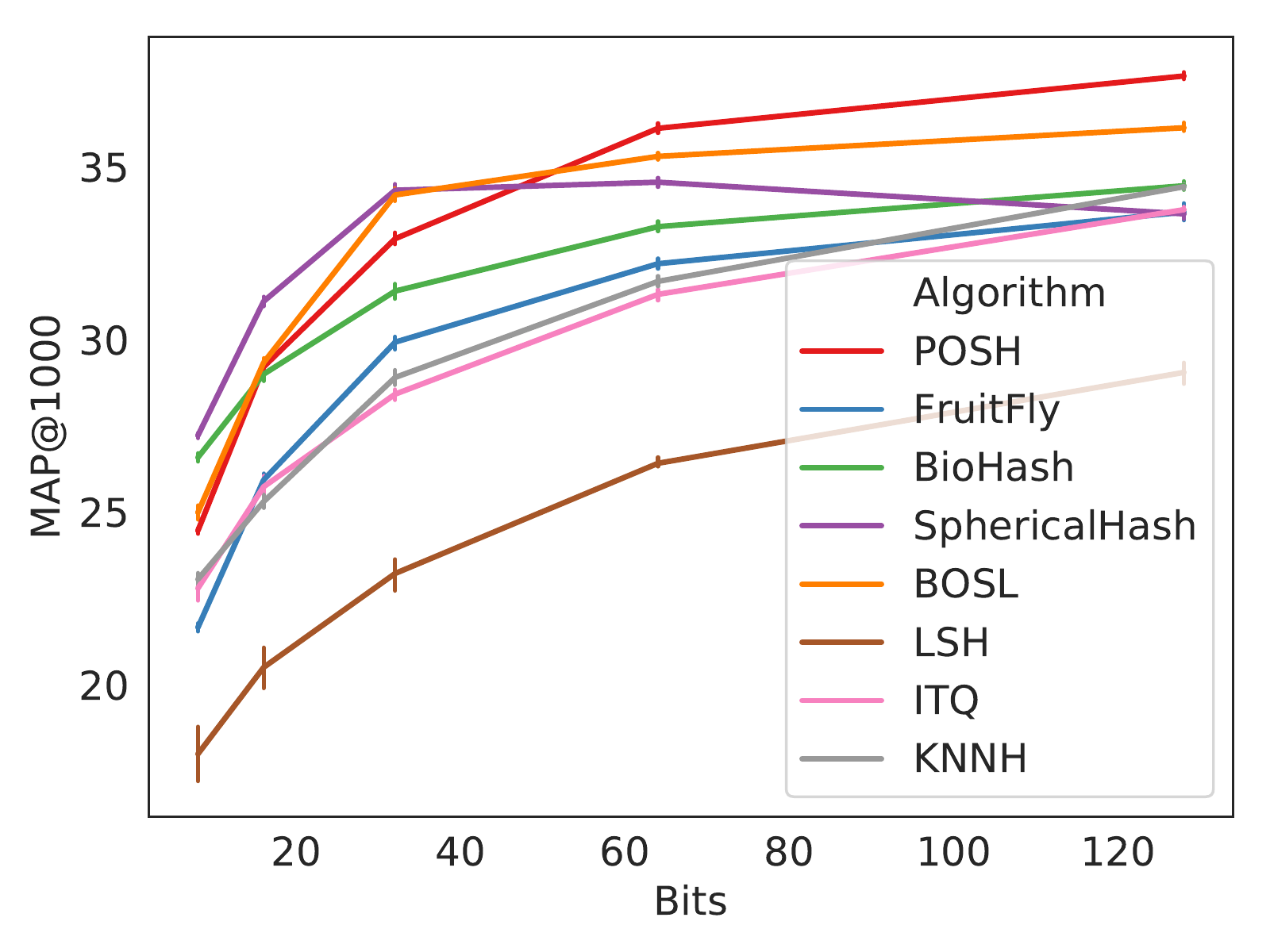} \\
	\end{tabu}
	\end{small}
	
	\caption{Comparison of different hashing methods under different configurations. The abscissa represents the hash length $D$ for dense codes (LSH, ITQ, KNNH) and the number $\alpha$ of set bits for sparse codes (POSH, FruitFly, BioHash, SphericalHash, BOSL). For sparse codes, $D=1024$. Error bars represent 95\% confidence intervals. Additional results are available in \cref{sec:additional_experiments}.}
	\label{fig:per_bits}
\end{figure}

\begin{table}[t]
	\caption{Comparison between shallow and deep hashing methods. Here, we use MAP to directly compare with the reported results of deep methods. Note that for KNNH, the results differ from those reported in~\cite{heKNearestNeighborsHashing2019}, as we run it using the experimental protocol from \cref{sec:experimental_setup}. At 64 bits, POSH results are superior to those of SphericalHash when considering MAP@1000, see \cref{fig:per_bits}.}
	\label{tab:comparison_deep_hashing}

	\centering
	\begin{small}
	\begin{tabu} to \textwidth {l @{\hspace{5pt}} c *{7}{ @{\hspace{5pt}} S}}
		\toprule
		& {\multirow{2}{*}{Bits}} & \multicolumn{3}{c}{Deep} & \multicolumn{4}{c}{Shallow} \\
		\cmidrule(r{1em}){3-5} \cmidrule(r){6-9}
		&& {Deepbit~\cite{linLearningCompactBinary2016}} & {DH~\cite{liongDeepHashingCompact2015}} & {UHBDNN~\cite{doLearningHashBinary2016}} & {KNNH~\cite{heKNearestNeighborsHashing2019}} & {FruitFly~\cite{dasguptaNeuralAlgorithmFundamental2017}} & {SphericalHash} & {POSH} \\
		\midrule
		\multirow{3}{*}{\footnotesize  \shortstack{CIFAR10\\GIST}}
		& 16 & 14.35 & 16.17 & 17.83 & 16.94 & 14.41 & 17.63 & \bfseries 17.85 \\
		& 32 & 16.33 & 16.62 & 18.52 & 17.72 & 16.20 & \bfseries 18.74 & 18.45\\
		& 64 & 17.97 & 16.96 & {-}   & 18.34 & 17.39 & 18.92 & \bfseries 19.15 \\

		\midrule

        \multirow{3}{*}{\footnotesize  \shortstack{MNIST}}
		& 16 & {-} & 43.14 & 45.38 & 42.77 & 29.68 & \bfseries 51.44 & 48.28 \\
		& 32 & {-} & 44.97 & 47.21 & 45.69 & 35.02 & \bfseries 54.03 & 49.60 \\
		& 64 & {-} & 46.74 & {-}   & 49.10 & 39.60 & \bfseries 50.92 & 50.64 \\

		\bottomrule
 	\end{tabu}
	\end{small}
\end{table}

\section{Candidate refinement}
\label{sec:refinement}

Some information is lost when performing similarity search with \cref{eq:fruit_fly_hashing}. In particular, some ties are introduced: two database points that are at different distances from the query in the input space may be at the same distance from the query in Hamming space. 

\Cref{eq:fruit_fly_hashing} can be regarded as an encoder neural network, $f_{\text{enc}}(\vect{x}) = \operatorname{\textsc{wta}}_{\alpha} (\mat{W} \vect{x})$, which consists of a fully connected layer followed by a non-linearity. It produces sparse high-dimensional output vectors. In the following, we describe how, by adding a decoder $f_{\text{dec}}$, we can obtain further improvements in accuracy. In short, the decoder will be tasked with breaking artifactual ties.

The bulk similarity search will be performed in Hamming space using $f_{\text{enc}}$, but, instead of just retrieving the $k$ elements we need, we retrieve a larger number of them, say $2 \cdot k$. Then, on this enlarged subset of candidates, we refine the search using the distance
$\norm{ \vect{x}_q - f_{\text{dec}}(f_{\text{enc}}(\vect{x}_i))}{2}$ for the final top-$k$ ranking (other distances could be used instead).

To ensure that the additional decoding step encompasses a minimal detrimental effect on the search speed, we use a linear decoder $f_{\text{dec}} (\vect{h}) = \mat{D} \vect{h}$, where $\mat{D} \in \Real^{d \times D}$. For simplicity, we assume that the encoder and decoder are trained separately.\footnote{Joint end-to-end learning is left as future work.}  Given the training set $\{ \vect{x}_i \}_{i=1}^n$, we seek $\mat{D}$ such that
\begin{equation}
	\min_{\mat{D}} \sum_{i=1}^{n} \norm{\vect{x}_i - \mat{D}\vect{h}_i }{2}^2 ,
	\label[problem]{eq:decoder}
\end{equation}
where $\vect{h}_i = \operatorname{\textsc{wta}}_{\alpha} (\mat{W} \vect{x}_i)$.
This is simply a least squares problem that can be solved in a single pass over the data. In \cref{sec:sbiht}, we explore a more elaborate decoder that is interesting from a theoretical perspective. However, we did not find a meaningful improvement in accuracy over \cref{eq:decoder}.

\section{Experimental results}
\label{sec:experiments}

Because of space limitations, the experimental setup is deferred to \cref{sec:experimental_setup,sec:additional_experiments}.

In~\cref{fig:per_bits} we compare different unsupervised hashing methods. FruitFly, which uses no training, generally outperforms dense hashing methods. In turn, the proposed SphericalHash and BOSL dominate at lower values of $\alpha$, while POSH dominates at higher values.

We also compare the shallow hashing methods (one matrix multiplication, followed by a nonlinearity) with deep hashing methods in \cref{tab:comparison_deep_hashing}. SphericalHash and POSH are competitive against these more computationally demanding alternatives.

The effect of candidate refinement (see \cref{sec:refinement}) is explored in \cref{tab:candidate_refinement}. We observe a consistent increase in accuracy when using this technique. As described in \cref{sec:refinement}, a larger candidate oversampling factor would be expected to offer higher accuracy. However, the decoder does not achieve perfect reconstruction and, thus, the optimal oversampling factor needs to be computed experimentally. Moreover, a smaller oversampling factor leads to a lesser impact on the search speed.

\begin{table}[t]
	\caption{Effect in MAP@100 of candidate refinement (CR) under different oversampling factors (2, 5, and 10). CR improves the performance with little overhead, particularly when using a factor of 2. See \cref{sec:refinement} for more details about this technique.}
	\label{tab:candidate_refinement}
	
	\centering
	\begin{small}
	\begin{tabular} {l *{4}{S[table-format=2.2(2),separate-uncertainty]}}
		\toprule
		& {POSH} & {POSH+CR2} & {POSH+CR5} & {POSH+CR10} \\
		\midrule
		MNIST      & 91.36 \pm 0.21 & \bfseries 92.78 \pm 0.12 & 92.71 \pm 0.06 & 92.61 \pm 0.07 \\
		MNIST-GIST & 95.23 \pm 0.06 & \bfseries 95.56 \pm 0.04 & 95.18 \pm 0.05 & 94.96 \pm 0.05 \\
		CIFAR10-GIST & 41.34 \pm 0.11 & \bfseries 43.30 \pm 0.15 & 43.05 \pm 0.10 & 42.89 \pm 0.11 \\
		CIFAR10-VGG & 56.81 \pm 0.12 & \bfseries 57.02 \pm 0.07 & 56.54 \pm 0.11 & 56.39 \pm 0.11 \\
		LabelMe-12-50K-GIST & 33.96 \pm 0.34 & \bfseries 36.31 \pm 0.33 & 35.87 \pm 0.37 & 35.58 \pm 0.27 \\
		LabelMe-12-50K-VGG & 46.42 \pm 0.23 & \bfseries 47.01 \pm 0.23 & 46.49 \pm 0.22 & 46.16 \pm 0.24 \\
		\bottomrule
	\end{tabular}
	\end{small}	
\end{table}

We compare different methods on a large dataset with approximately 2.5M target elements. Here, SphericalHash does not outperform the training-free FruitFly. POSH clearly outperforms FruitFly, obtaining the best results. Additionally, we report the increase in accuracy due to candidate refinement.

\begin{table}[t]
	\caption{Results (MAP@1000) for the large-scale dataset Places205 (approximately 2.5M target and 20K query elements). Additional results are available in \cref{sec:additional_experiments}. We highlight the best hashing method, without considering candidate refinement (CR).}
	\label{tab:places205_map}
	
	\centering
	\begin{small}
	\begin{tabular} {l *{7}{S[table-format=2.2]}}
		\toprule
		& {LSH} & {ITQ} & KNNH & {FruitFly} & {SphericalHash} & {POSH} & {POSH+CR2} \\
		\midrule
		16 bits &  9.70 & 13.70 & 13.64 & 27.47 & 27.14 & \bfseries 29.32 & 31.11 \\
		32 bits & 18.29 & 22.52 & 22.12 & 30.31 & 27.52 & \bfseries 31.49 & 32.55 \\
		64 bits & 25.62 & 28.30 & 28.08 & 32.02 & 27.61 & \bfseries 32.83 & 33.20 \\
		\bottomrule
	\end{tabular}
	\end{small}	
\end{table}

\textbf{Performance.}
Here, we use a coarse quantizer \cite{johnsonBillionscaleSimilaritySearch2019}, which clusters the data into $\lceil n / 1000 \rceil$ groups. During querying, we only explore the clusters whose centroids are closest to the query (20 by default). The coarse quantizer causes MAP@1000 to drop from 32.83 to 30.67\% in Places205. Using a parallelized C code and efficient linear algebra techniques, our implementation computes hamming distances at approximately 14000 queries per second in Places205 (\textasciitilde2.5M target elements).

\section{Conclusions}
\label{sec:conclusions}

In this work, we presented a novel method for similarity search: Procrustean Orthogonal Sparse Hashing (POSH). POSH takes inspiration from the insect olfaction  that has been shown to be structurally and functionally analogous to sparse hashing~\cite{dasguptaNeuralAlgorithmFundamental2017}.
We also analyzed in depth two recently proposed algorithms, Optimal Sparse Lifting (OSL)~\cite{liFastSimilaritySearch2018} and BioHash~\cite{ryaliBioInspiredHashingUnsupervised2020}, that also take inspiration from the fruit-fly hashing method~\cite{dasguptaNeuralAlgorithmFundamental2017}. We characterized OSL and BioHash formally and introduced new methods, BOSL and SphericalHash, that are conceptually equal and yet superior to the original ones.
We showed, through numerous and varied numerical experiments, that POSH and SphericalHash outperform, in terms of accuracy, other state-of-the-art unsupervised hashing methods.

\ifthenelse{\not\boolean{preprint}}{
    \section*{Broader Impact}
    
    The proposed approach will carry any biases induced by the cosine similarity between input samples. As such, measures to promote fairness should be put in place as a preprocessing step (i.e., during feature building/learning) to ensure that the cosine similarity does not present any undesired biases.
}{}

\begin{ack}
The authors would like to acknowledge the valuable discussions and interactions with Sourabh Dongaonkar, Chetan Chauhan, Jawad Khan, and Rick Coulson.
\end{ack}

\bibliographystyle{plain}
\bibliography{simsearch,one-bit_compressive_sensing}

\begin{thebibliography}{10}

\bibitem{andoniNearoptimalHashingAlgorithms2008a}
Alexandr Andoni and Piotr Indyk.
\newblock Near-optimal hashing algorithms for approximate nearest neighbor in
  high dimensions.
\newblock {\em Communications of the ACM}, 51(1):117--122, January 2008.

\bibitem{baraniukOneBitCompressiveSensing2016a}
Rich Baraniuk, Simon Foucart, Deanna Needell, Yaniv Plan, and Mary Wootters.
\newblock One-{{Bit Compressive Sensing}} of {{Dictionary}}-{{Sparse Signals}}.
\newblock {\em arXiv:1606.07531 [cs, math]}, June 2016.

\bibitem{blumensathIterativeHardThresholding2009}
Thomas Blumensath and Mike~E. Davies.
\newblock Iterative hard thresholding for compressed sensing.
\newblock {\em Applied and Computational Harmonic Analysis}, 27(3):265--274,
  November 2009.

\bibitem{charikarSimilarityEstimationTechniques2002}
Moses~S. Charikar.
\newblock Similarity estimation techniques from rounding algorithms.
\newblock In {\em Proceedings of the Thiry-Fourth Annual {{ACM}} Symposium on
  {{Theory}} of Computing - {{STOC}} '02}, page 380, {Montreal, Quebec,
  Canada}, 2002. {ACM Press}.

\bibitem{coatesLearningFeatureRepresentations2012}
Adam Coates and Andrew~Y. Ng.
\newblock Learning {{Feature Representations}} with {{K}}-{{Means}}.
\newblock In Gr{\'e}goire Montavon, Genevi{\`e}ve~B. Orr, and Klaus-Robert
  M{\"u}ller, editors, {\em Neural {{Networks}}: {{Tricks}} of the {{Trade}}:
  {{Second Edition}}}, Lecture {{Notes}} in {{Computer Science}}, pages
  561--580. {Springer}, {Berlin, Heidelberg}, 2012.

\bibitem{dasguptaNeuralAlgorithmFundamental2017}
Sanjoy Dasgupta, Charles~F. Stevens, and Saket Navlakha.
\newblock A neural algorithm for a fundamental computing problem.
\newblock {\em Science}, 358(6364):793--796, November 2017.

\bibitem{datarLocalitysensitiveHashingScheme2004}
Mayur Datar, Nicole Immorlica, Piotr Indyk, and Vahab~S. Mirrokni.
\newblock Locality-sensitive hashing scheme based on p-stable distributions.
\newblock In {\em Proceedings of the Twentieth Annual Symposium on
  {{Computational}} Geometry - {{SCG}} '04}, page 253, {Brooklyn, New York,
  USA}, 2004. {ACM Press}.

\bibitem{doLearningHashBinary2016}
Thanh-Toan Do, Anh-Dzung Doan, and Ngai-Man Cheung.
\newblock Learning to {{Hash}} with {{Binary Deep Neural Network}}.
\newblock In Bastian Leibe, Jiri Matas, Nicu Sebe, and Max Welling, editors,
  {\em Computer {{Vision}} \textendash{} {{ECCV}} 2016}, Lecture {{Notes}} in
  {{Computer Science}}, pages 219--234, {Cham}, 2016. {Springer International
  Publishing}.

\bibitem{frankAlgorithmQuadraticProgramming1956a}
Marguerite Frank and Philip Wolfe.
\newblock An algorithm for quadratic programming.
\newblock {\em Naval Research Logistics Quarterly}, 3(1-2):95--110, March 1956.

\bibitem{gionisSimilaritySearchHigh1999}
Aristides Gionis, Piotr Indyk, and Rajeev Motwani.
\newblock Similarity {{Search}} in {{High Dimensions}} via {{Hashing}}.
\newblock In {\em Proceedings of the 25th {{International Conference}} on
  {{Very Large Data Bases}}}, {{VLDB}} '99, pages 518--529, {San Francisco, CA,
  USA}, 1999. {Morgan Kaufmann Publishers Inc.}

\bibitem{gongIterativeQuantizationProcrustean2013}
Yunchao Gong, Svetlana Lazebnik, Albert Gordo, and Florent Perronnin.
\newblock Iterative {{Quantization}}: {{A Procrustean Approach}} to {{Learning
  Binary Codes}} for {{Large}}-{{Scale Image Retrieval}}.
\newblock {\em IEEE Transactions on Pattern Analysis and Machine Intelligence},
  35(12):2916--2929, December 2013.

\bibitem{heKNearestNeighborsHashing2019}
Xiangyu He, Peisong Wang, and Jian Cheng.
\newblock K-{{Nearest Neighbors Hashing}}.
\newblock In {\em {{CVPR}}}, 2019.

\bibitem{indykApproximateNearestNeighbors1998}
Piotr Indyk and Rajeev Motwani.
\newblock Approximate nearest neighbors: Towards removing the curse of
  dimensionality.
\newblock In {\em Proceedings of the Thirtieth Annual {{ACM}} Symposium on
  {{Theory}} of Computing - {{STOC}} '98}, pages 604--613, {Dallas, Texas,
  United States}, 1998. {ACM Press}.

\bibitem{jaggiRevisitingFrankWolfeProjectionfree2013a}
Martin Jaggi.
\newblock Revisiting {{Frank}}-{{Wolfe}}: Projection-free sparse convex
  optimization.
\newblock In {\em {{ICML}}}, {{ICML}}'13, pages I--427--I--435, {Atlanta, GA,
  USA}, June 2013. {JMLR.org}.

\bibitem{johnsonBillionscaleSimilaritySearch2019}
Jeff Johnson, Matthijs Douze, and Herve Jegou.
\newblock Billion-scale similarity search with {{GPUs}}.
\newblock {\em IEEE Transactions on Big Data}, pages 1--1, 2019.

\bibitem{jortnerSimpleConnectivityScheme2007}
R.~A. Jortner, S.~S. Farivar, and G.~Laurent.
\newblock A {{Simple Connectivity Scheme}} for {{Sparse Coding}} in an
  {{Olfactory System}}.
\newblock {\em Journal of Neuroscience}, 27(7):1659--1669, February 2007.

\bibitem{junwangSemiSupervisedHashingLargeScale2012}
{Jun Wang}, S.~Kumar, and {Shih-Fu Chang}.
\newblock Semi-{{Supervised Hashing}} for {{Large}}-{{Scale Search}}.
\newblock {\em IEEE Transactions on Pattern Analysis and Machine Intelligence},
  34(12):2393--2406, December 2012.

\bibitem{knudsonOneBitCompressiveSensing2016}
Karin Knudson, Rayan Saab, and Rachel Ward.
\newblock One-{{Bit Compressive Sensing With Norm Estimation}}.
\newblock {\em IEEE Transactions on Information Theory}, 62(5):2748--2758, May
  2016.

\bibitem{koepBinaryIterativeHard2017}
Niklas Koep and Rudolf Mathar.
\newblock Binary {{Iterative Hard Thresholding}} for {{Frequency}}-{{Sparse
  Signal Recovery}}.
\newblock In {\em {{WSA}} 2017; 21th {{International ITG Workshop}} on {{Smart
  Antennas}}}, pages 1--7, March 2017.

\bibitem{krizhevskyLearningMultipleLayers2009}
Alex Krizhevsky.
\newblock Learning multiple layers of features from tiny images.
\newblock Technical report, {University of Toronto}, 2009.

\bibitem{lecunGradientbasedLearningApplied1998a}
Y.~Lecun, L.~Bottou, Y.~Bengio, and P.~Haffner.
\newblock Gradient-based learning applied to document recognition.
\newblock {\em Proceedings of the IEEE}, 86(11):2278--2324, Nov./1998.

\bibitem{liFastSimilaritySearch2018}
Wenye Li, Mao Jingwei, Zhang Yin, and Cui Shuguang.
\newblock Fast {{Similarity Search}} via {{Optimal Sparse Lifting}}.
\newblock In {\em {{NIPS}}}, 2018.

\bibitem{linLearningCompactBinary2016}
Kevin Lin, Jiwen Lu, Chu-Song Chen, and Jie Zhou.
\newblock Learning {{Compact Binary Descriptors}} with {{Unsupervised Deep
  Neural Networks}}.
\newblock In {\em 2016 {{IEEE Conference}} on {{Computer Vision}} and {{Pattern
  Recognition}} ({{CVPR}})}, pages 1183--1192, {Las Vegas, NV, USA}, June 2016.
  {IEEE}.

\bibitem{liongDeepHashingCompact2015}
Venice~Erin Liong, {Jiwen Lu}, {Gang Wang}, Pierre Moulin, and {Jie Zhou}.
\newblock Deep hashing for compact binary codes learning.
\newblock In {\em 2015 {{IEEE Conference}} on {{Computer Vision}} and {{Pattern
  Recognition}} ({{CVPR}})}, pages 2475--2483, {Boston, MA, USA}, June 2015.
  {IEEE}.

\bibitem{mairalOnlineLearningMatrix2010}
Julien Mairal, Francis Bach, Jean Ponce, and Guillermo Sapiro.
\newblock Online learning for matrix factorization and sparse coding.
\newblock {\em Journal of Machine Learning Research}, 11:19--60, 2010.

\bibitem{masciSparseSimilaritypreservingHashing2014}
Jonathan Masci, Alex~M. Bronstein, Michael~M. Bronstein, Pablo Sprechmann, and
  Guillermo Sapiro.
\newblock Sparse similarity-preserving hashing.
\newblock {\em arXiv:1312.5479 [cs]}, February 2014.

\bibitem{olshausenSparseCodingOvercomplete1997a}
Bruno~A. Olshausen and David~J. Field.
\newblock Sparse coding with an overcomplete basis set: {{A}} strategy employed
  by {{V1}}?
\newblock {\em Vision Research}, 37(23):3311--3325, December 1997.

\bibitem{pehlevanWhySimilarityMatching2018}
C.~Pehlevan, A.~Sengupta, and D.~Chklovskii.
\newblock Why do similarity matching objectives lead to
  {{Hebbian}}/anti-{{Hebbian}} networks?
\newblock {\em Neural Computation}, 30(1):84--124, 2018.

\bibitem{pehlevanClusteringNeuralNetwork2017b}
Cengiz Pehlevan, Alexander Genkin, and Dmitri~B. Chklovskii.
\newblock A clustering neural network model of insect olfaction.
\newblock In {\em 2017 51st {{Asilomar Conference}} on {{Signals}},
  {{Systems}}, and {{Computers}}}, pages 593--600, {Pacific Grove, CA, USA},
  October 2017. {IEEE}.

\bibitem{ryaliBioInspiredHashingUnsupervised2020}
Chaitanya~K. Ryali, John~J. Hopfield, Leopold Grinberg, and Dmitry Krotov.
\newblock Bio-{{Inspired Hashing}} for {{Unsupervised Similarity Search}}.
\newblock {\em arXiv:2001.04907}, January 2020.

\bibitem{senguptaManifoldtilingLocalizedReceptive2018}
Anirvan~M. Sengupta, Mariano Tepper, Cengiz Pehlevan, Alexander Genkin, and
  Dmitri~B. Chklovskii.
\newblock Manifold-tiling {{Localized Receptive Fields}} are {{Optimal}} in
  {{Similarity}}-preserving {{Neural Networks}}.
\newblock In {\em {{NIPS}}}, 2018.

\bibitem{uetzLargescaleObjectRecognition2009}
Rafael Uetz and Sven Behnke.
\newblock Large-scale object recognition with {{CUDA}}-accelerated hierarchical
  neural networks.
\newblock In {\em 2009 {{IEEE International Conference}} on {{Intelligent
  Computing}} and {{Intelligent Systems}}}, volume~1, pages 536--541, November
  2009.

\bibitem{wixtedCodingEpisodicMemory2018}
John~T. Wixted, Stephen~D. Goldinger, Larry~R. Squire, Joel~R. Kuhn, Megan~H.
  Papesh, Kris~A. Smith, David~M. Treiman, and Peter~N. Steinmetz.
\newblock Coding of episodic memory in the human hippocampus.
\newblock {\em Proceedings of the National Academy of Sciences},
  115(5):1093--1098, January 2018.

\bibitem{zhangFastOrthogonalProjection2015}
Xu~Zhang, Felix~X. Yu, Ruiqi Guo, Sanjiv Kumar, Shengjin Wang, and Shi-Fu
  Chang.
\newblock Fast {{Orthogonal Projection Based}} on {{Kronecker Product}}.
\newblock In {\em 2015 {{IEEE International Conference}} on {{Computer Vision}}
  ({{ICCV}})}, pages 2929--2937, {Santiago, Chile}, December 2015. {IEEE}.

\bibitem{zhouLearningDeepFeatures2014}
Bolei Zhou, Agata Lapedriza, Jianxiong Xiao, Antonio Torralba, and Aude Oliva.
\newblock Learning {{Deep Features}} for {{Scene Recognition Using Places
  Database}}.
\newblock In {\em Proceedings of the 27th {{International Conference}} on
  {{Neural Information Processing Systems}} - {{Volume}} 1}, {{NIPS}}'14, pages
  487--495, {Cambridge, MA, USA}, 2014. {MIT Press}.

\end{thebibliography}

\appendix

\ifthenelse{\not\boolean{preprint}}{
    \newpage
    \setcounter{page}{1}
    
    \begin{center}
    \begin{huge}
    Procrustean Orthogonal Sparse Hashing\\[10pt]
    Supplementary material
    \end{huge}
    \end{center}
    \vspace{0.2in}
}{}

\section{Additional material for POSH}
\label{sec:posh_additional}

\fruitflyoptimization*

\begin{proof}
	Let $\vect{y} = \mat{W} \vect{x}$. Expanding the loss $\norm{ \vect{h} - \vect{y} }{2}^2$, removing the constant term, and using
	$\transpose{\vect{h}} \vect{h} = \sum_i (\vect{h})_i^2 = \sum_i (\vect{h})_i = \transpose{\vect{1}} \vect{h} = \alpha$,
	the problem becomes
	\begin{equation}
	\max_{\vect{h}} \transpose{\vect{h}} \vect{y}
	\quad\text{s.t.}\quad
	\begin{gathered}
	\vect{h} \in \{ 0, 1 \}^{D} , \ 
	\transpose{\vect{1}} \vect{h}  = \alpha .
	\end{gathered}
	\end{equation}
	To maximize the quantity $\transpose{\vect{h}} \vect{y} = \sum_i (\vect{h})_i (\vect{y})_i$ under the specified constraints, we must place the allocated number $\alpha$ of ones in $\vect{h}$ where it counts the most, i.e., the $\alpha$ largest entries of $\vect{y}$.
\end{proof}

\fruitflyorthogonality*

\begin{proof}
	Given two vectors $\vect{v}_1$ and $\vect{v}_2$ such that $(\vect{v}_1)_i, (\vect{v}_2)_i \sim \operatorname{Bernoulli}(p)$, we have that $(\vect{v}_1)_i^2 \sim \operatorname{Bernoulli}(p)$ and $(\vect{v}_1)_i \cdot (\vect{v}_2)_i \sim \operatorname{Bernoulli}(p^2)$. Their sum follows a Binomial distribution.
\end{proof}

\cref{algo:posh_incremental} formally describes the learning algorithm for POSH.

\begin{algorithm2e}[ht]
	\caption{Procrustean Orthogonal Sparse Hashing}
	\label{algo:posh_incremental}
	
	\begin{small}
	\SetKwInOut{Input}{input}
	\SetKwInOut{Output}{output}
	
	\Input{Data $\{ \vect{x}_i \in \Real^d \}_{i=1}^{n}$, output dimension $D$, sparsity level $\alpha$, number $N$ of training epochs, mini-batch size $N_{\text{mini-batch}}$.}
	\Output{Weight matrix $\mat{W} \in \Real^{D \times d}$.}
	
	Create a matrix $\mat{W}_0 \in \Real^{D \times d}$ by sampling its entries from the standard normal distribution\;
	$\mat{W} \gets \mat{U} \transpose{\mat{V}}$, where $\mat{U} \mat{S} \transpose{\mat{V}}$ is the SVD of $\mat{W}_0$\;
	
	$\mat{M} \gets \mat{W}$\;

	\ForEach{epochs $t = 1 \dots N$}{
		\For{$i = 1 \dots n$}{
			$\vect{s}_i \gets \operatorname{\textsc{wta}}_{\alpha} \left( \mat{W} \vect{x}_i \right)$\;
			$\mat{M} \gets \mat{M} + \vect{s}_i \transpose{\vect{x}_i}$\;
			
			\If{$i \bmod{N_{\text{mini-batch}}} = 0$}{
				$\mat{W} \gets \mat{U} \transpose{\mat{V}}$, where $\mat{U} \mat{S} \transpose{\mat{V}}$ is the SVD of $\mat{M}$\;
			}
		}
	}
	\end{small}
\end{algorithm2e}

\section{Additional material for SphericalHash}
\label{sec:spherical_hash}

\biohashspherical*

\begin{proof}
	From the constraints in \cref{eq:spherical_kmeans}, we have $\transpose{\vect{s}}_i \mat{W} \transpose{\mat{W}} \vect{s}_i = 1$.
	The problem for $\mat{W}$ becomes
	\begin{equation}
		\max_{\mat{W}}
		\sum_{i=1}^{n}
		\transpose{\vect{s}_i} \mat{W} \vect{x}_i
		\quad\text{s.t.}\quad
		\norm{\vect{w}_j}{2} = 1 .
		\label[problem]{eq:spherical_kmeans_W}
	\end{equation}
	The problem for each $\vect{s}_i$ becomes
	\begin{equation}
		\max_{\vect{s}_i}
		\transpose{\vect{s}_i} \mat{W} \vect{x}_i
		\quad\text{s.t.}\quad
		\begin{gathered}
			\norm{\vect{s}_i}{0} \in \{ 0, 1 \}^D ,\
			\norm{\vect{s}_i}{0} \leq 1,\\
		\end{gathered}
	\end{equation}
	and its solution given by
	\begin{equation}
		\vect{s}_i^*
		=
		\operatorname{\textsc{wta}}_{1} \left( \mat{W} \vect{x}_i \right) ,
		\label{eq:spherical_kmeans_S_solution}
	\end{equation}
	where function $\operatorname{\textsc{wta}}$ is defined in \cref{eq:wta_nl}.
	Plugging this solution in \cref{eq:spherical_kmeans_W}, we get
	\begin{equation}
		\max_{\mat{W}}
		\sum_{i=1}^{n} \sum_{j=1}^{D} 
		\indicator{j = \argmax_{l} \transpose{\vect{w}}_l \vect{x}_i}
		\transpose{\vect{w}}_j \vect{x}_i
		\quad\text{s.t.}\quad
		\norm{\vect{w}_j}{2} = 1 ,
	\end{equation}
	which is equivalent to \cref{eq:biohash_loss}
\end{proof}

We now present SphericalHash, a new hashing method that combines of spherical k-means for learning and \cref{eq:wta_nl} for hashing. 
The solution to \cref{eq:spherical_kmeans_W} is given by~\cite{coatesLearningFeatureRepresentations2012}
\begin{equation}
	\mat{W} = 
	\operatorname{normalize}
	\left(
	\sum_{i=1}^{n}
	\hat{\vect{s}}_i \transpose{\vect{x}_i}
	\right) ,
\end{equation}
where $\hat{\vect{s}}_i$ is given by \cref{eq:spherical_kmeans_S_solution} and the function $\operatorname{normalize}$ operates independently on each row, computing $\vect{w}_j = \vect{w}_j / \norm{\vect{w}_j}{2}$.
\Cref{algo:spherical_kmeans} formally describes the learning algorithm. 

Compared to the original BioHash learning update in \cref{eq:biohash_dynamics} in page \pageref{eq:biohash_dynamics}~\cite[Eq.~(1)]{ryaliBioInspiredHashingUnsupervised2020}, \cref{algo:spherical_kmeans} is not biologically plausible, mainly because of the non-local weight normalization step. However, biologically plausible alternatives (i.e., with Hebbian and local updates) are available in the literature~\cite{pehlevanClusteringNeuralNetwork2017b}.

\begin{algorithm2e}[h]
	\caption{Spherical k-means}
	\label{algo:spherical_kmeans}
	
	\begin{small}
	\SetKwInOut{Input}{input}
	\SetKwInOut{Output}{output}
	
	\Input{Data $\{ \vect{x}_i \in \Real^d \}_{i=1}^{n}$, output dimension $D$, number $N$ of training epochs.}
	\Output{Weight matrix $\mat{W} \in \Real^{D \times d}$.}
	
	Create a matrix $\mat{W}_0 \in \Real^{D \times d}$ by sampling its entries from the standard normal distribution\;
	$\mat{W} \gets \operatorname{normalize} (\mat{W}_0)$\tcp*[l]{Normalize each column to unit norm}
	
	\ForEach{epochs $t = 1 \dots N$}{
	
		$\mat{M} \gets \mat{0}$\;
		
		\For{$i = 1 \dots n$}{
			$\vect{x}_i' \gets \vect{x}_i / \norm{\vect{x}_i}{2}$\;
			$\vect{s}_i \gets \operatorname{\textsc{wta}}_{1} \left( \mat{W} \vect{x}_i' \right)$\;
			$\mat{M} \gets \mat{M} + \vect{s}_i \transpose{\vect{x}_i'}$\;
%
%
%
		}
		$\mat{W} \gets \operatorname{normalize} (\mat{M})$\tcp*[l]{Normalize each column to unit norm}
	}
	\end{small}
\end{algorithm2e}

\begin{algorithm2e}[h]
	\caption{Binary Optimal Sparse Lifting (BOSL)}
	\label{algo:OSL_variant}
	
	\SetKwInOut{Input}{input}
	\SetKwInOut{Output}{output}
	
	\Input{Data $\{ \vect{x}_i \in \Real^d \}_{i=1}^{n}$, output dimension $D$, sparsity level $\alpha$.}
	\Output{Weight matrix $\hat{\mat{W}} \in \Real^{D \times d}$.}
	
	Create a matrix $\mat{W}_1$ by sampling its entries from the standard normal distribution\;
	$\mat{Y}_1 \gets \mat{W}_1 \mat{X}$\;
	$\mat{L} \gets \mat{0}$\;
	$\lambda \gets 1$\;
	\For{$k = 1 \dots K - 1$}{
		$\mat{H}_{k+1} \gets \operatorname{\textsc{wta}}_{\alpha} \left( \lambda \mat{Y}_{k} + \mat{L}_{k} \right)$\;
		$\displaystyle \mat{Y}_{k+1} \gets \argmin_{\mat{Y}} \tfrac{1}{2} \norm{\transpose{\mat{X}} \mat{X} - \transpose{\mat{Y}} \mat{Y}}{F}^2
		+ \tfrac{\lambda}{2} \norm{\mat{H}_{k+1} - \mat{Y} + \lambda^{-1} \mat{L}_{k}}{F}^2
		\enskip\text{s.t.}\enskip
		\mat{0} \leq \mat{Y} \leq \mat{1}
		$\;
		$\mat{L}_{k+1} \gets \mat{L}_{k} + \lambda (\mat{H}_{k+1} - \mat{Y}_{k+1})$\;
	}
	$\displaystyle \hat{\mat{W}} \gets \argmin_{\mat{W}} \tfrac{1}{2} \norm{\mat{W} \mat{X} - \mat{Y}_{K}}{F}^2$\;
\end{algorithm2e}

\section{Solving Binary Sparse Lifting}
\label{sec:OSL_optimization}

To solve \cref{eq:osl_binary}, we follow an ADMM approach and introduce an auxiliary variable $\mat{H}$, obtaining the problem
\begin{equation}
	\min_{\mat{Y}, \mat{Z}} \tfrac{1}{4} \norm{\transpose{\mat{X}} \mat{X} - \transpose{\mat{Y}} \mat{Y}}{F}^2
	\quad\text{s.t.}\quad
	\begin{gathered}
		\transpose{\vect{H}} \mat{1} = \alpha \vect{1} ,\
		\mat{H} \in \{ 0, 1 \}^{D \times n} ,\\
		\mat{0} \leq \mat{Y} \leq \mat{1} ,\
		\mat{H} = \mat{Y} .
	\end{gathered}
\end{equation}
This problem is non-convex and does not have a unique solution.
To find a solution, we use the augmented Lagrangian,
\begin{equation}
	\min_{\mat{Y}, \mat{Z}} \tfrac{1}{4} \norm{\transpose{\mat{X}} \mat{X} - \transpose{\mat{Y}} \mat{Y}}{F}^2
	+ \tfrac{\lambda}{2} \norm{\mat{H} - \mat{Y} + \lambda^{-1} \mat{L}}{F}^2
	\quad\text{s.t.}\quad
	\begin{gathered}
		\transpose{\mat{H}} \mat{1} = \alpha \vect{1} ,\\
		\mat{H} \in \{ 0, 1 \}^{D \times n} ,\\
		\mat{0} \leq \mat{Y} \leq \mat{1} ,
	\end{gathered}
\end{equation}
where $\mat{L}$ is the Lagrange multiplier and $\lambda$ is the penality coefficient.
We perform a series of iterations, alternatively fixing $\mat{H}$ and $\mat{Y}$ while solving for the other. The iterations are
\begin{subequations}
\begin{align}
	\mat{H}_{t+1} &= \argmin_{\mat{Y}}
	\tfrac{\lambda}{2} \norm{\mat{H} - \mat{Y}_{t} + \lambda^{-1} \mat{L}_{t}}{F}^2
	\enskip\text{s.t.}\enskip
	\begin{gathered}
		\transpose{\vect{H}} \mat{1} = \alpha \vect{1} ,\
		\mat{H} \in \{ 0, 1 \}^{D \times n} .
	\end{gathered}
	\label[problem]{eq:osl_binary_H}
	\\
	\mat{Y}_{t+1} &= \argmin_{\mat{Y}} \tfrac{1}{4} \norm{\transpose{\mat{X}} \mat{X} - \transpose{\mat{Y}} \mat{Y}}{F}^2
	+ \tfrac{\lambda}{2} \norm{\mat{H}_{t+1} - \mat{Y} + \lambda^{-1} \mat{L}_{t}}{F}^2 ,
	\enskip\text{s.t.}\enskip
	\begin{gathered}
		\mat{0} \leq \mat{Y} \leq \mat{1} ,
	\end{gathered}
	\label[problem]{eq:osl_binary_Y}
	\\
	\mat{L}_{t+1} &= \mat{L}_{t} + \lambda (\mat{H}_{t+1} - \mat{Y}_{t+1}) .
	\label[problem]{eq:osl_binary_L}
\end{align}
\end{subequations}
From the proof of \cref{theorem:fruit_fly_optimization}, \Cref{eq:osl_binary_H} has the closed-form solution
\begin{equation}
	\mat{H}_{t+1} = \operatorname{\textsc{wta}}_{\alpha} \left( \lambda \mat{Y}_{t} + \mat{L}_{t} \right) ,
\end{equation}
where, by a slight abuse of notation, we apply the function $\operatorname{\textsc{wta}}$ column-wise.
\Cref{eq:osl_binary_Y} is a symmetric NMF problem for which there are many good solvers. For simplicity, in our implementation we use the off-the-shelf L-BFGS-B method.
We set $\lambda = 1$ once and for all.

\paragraph{Observation.}
Here, we point out that \cref{eq:osl_binary_Y} is also an instance of the similarity matching framework~\cite{pehlevanWhySimilarityMatching2018,senguptaManifoldtilingLocalizedReceptive2018}. As such, it can be implemented by a biologically plausible neural network in a streaming fashion~\cite{pehlevanWhySimilarityMatching2018,senguptaManifoldtilingLocalizedReceptive2018}. Potentially, \cref{eq:osl_binary_H,eq:osl_binary_Y,eq:osl_binary_L} could be implemented in a biologically plausible way. We leave this line of work for the future.

\section{Candidate refinement: A signal recovery perspective}
\label{sec:sbiht}

An alternative viewpoint to the one proposed above would be to pose decoding as an inverse problem.
For this, we take inspiration in the compressive sensing literature, which provides algorithms and theoretical recovery results for quantized (one bit) compressive sensing (QCS) problems, e.g.,~\cite{baraniukOneBitCompressiveSensing2016a,knudsonOneBitCompressiveSensing2016,koepBinaryIterativeHard2017}. In QCS, we observe a quantized version $\vect{y} = \operatorname{sign} \left( \mat{W} \vect{x} \right)$ of the sparse signal $\vect{x}$. The objective is to recover $\vect{x}$. Our decoding problem is strikingly similar: we observe a quantized vector $\vect{h} = \operatorname{\textsc{wta}}_{\alpha} \left( \mat{W} \vect{x} \right)$ and seek to recover the vector $\vect{x}$.
Formally, we can write this recovery problem as
\begin{equation}
	\min_{\vect{x}} \norm{\vect{h} - \mat{W} \vect{x}}{2}^2
	\quad\text{s.t.}\quad
	\vect{h} = \operatorname{\textsc{wta}}_{\alpha} \left( \mat{W} \vect{x} \right) .
	\label{eq:decoder_consistent}
\end{equation}

In compressive sensing, Iterative Hard Thresholding (IHT)~\cite{blumensathIterativeHardThresholding2009} is a sparse signal recovery algorithm that consists of the iteration of two steps. First, a gradient descent to reduce the least squares objective $\norm{\vect{h} - \mat{W} \vect{x}}{2}^2$, i.e.,
\begin{equation}
	\hat{\vect{z}} \gets \hat{\vect{z}} - \tau \transpose{\mat{W}} \left( \mat{W} \hat{\vect{z}} - \vect{h}_i \right) .
	\label{eq:iht_gradient_descent}
\end{equation}
Then, $\hat{\vect{z}}$ is projected onto the “$\ell_0$ ball,” i.e., the selection of the largest in magnitude elements, to obtain a sparse estimate $\hat{\vect{x}}$. Quantized compressive sensing can be solved with Binary Iterative Hard Thresholding (BIHT)~\cite{koepBinaryIterativeHard2017}, a modification of the gradient step in IHT defined as follows
\begin{equation}
	\hat{\vect{x}} \gets \hat{\vect{x}} - \tau \transpose{\mat{W}} \left( \operatorname{sign} \left( \mat{W} \hat{\vect{x}} \right) - \vect{h}_i \right) .
	\label{eq:biht_gradient_descent}
\end{equation}

In our problem, we do not have sparsity constraints on the signal $\vect{x}$. Thus, in our algorithm we drop the $\ell_0$ projection step. Finally, we switch the sign quantization in QCS by our specific form of quantization. The resulting algorithm, that we term Sparse Binary Iterative Hard Thresholding (SBIHT), is formally specified in \cref{algo:sbiht}. 
Demonstrating theoretical convergence guarantees for SBIHT is a subject of future work. Of course, SBIHT is slower than linear decoder, which corresponds to zero iterations of SBIHT.

\begin{algorithm2e}[t]
	\caption{Sparse Binary Iterative Hard Thresholding (SBIHT)}
	\label{algo:sbiht}
	
	\SetKwInOut{Input}{input}
	\SetKwInOut{Output}{output}
	
	\Input{Sparsity level $\alpha$, hash code $\vect{h} \in \{ 0, 1 \}^D$ such that $\transpose{\vect{1}} \vect{h} = \alpha$, weight matrix $\mat{W} \in \Real^{D \times d}$, initial decoding weight matrix $\mat{D}$ (see \cref{eq:decoder}).}
	\Output{Reconstructed vector $\hat{\vect{x}} \in \Real^d$.}
	
	$\hat{\vect{x}} \gets \mat{D} \vect{h}$\;
	
	\Repeat{until convergence of $\norm{\vect{b} - \vect{h}}{2}^2$}{
		$\vect{b} \gets \operatorname{\textsc{wta}}_{\alpha} \left( \mat{W} \hat{\vect{x}} \right)$\;
		$\hat{\vect{x}} \gets \hat{\vect{x}} - \tfrac{1}{\sqrt{d D}} \transpose{\mat{W}} \left( \vect{b} - \vect{h} \right)$\;
	}
\end{algorithm2e}

For some examples, SBIHT achieves perfect recovery, i.e., $\vect{b} = \vect{h}$ , resulting in a ``lossless'' decoder, see \cref{fig:SBIHT}. Interestingly, SBIHT, despite its theoretical appeal, does not offer a significant accuracy benefit over the linear decoder from \cref{eq:decoder}. Thus, we perform the bulk of our experiments with the fast linear decoder.

\begin{figure}
	\centering
	\includegraphics[width=0.5\textwidth]{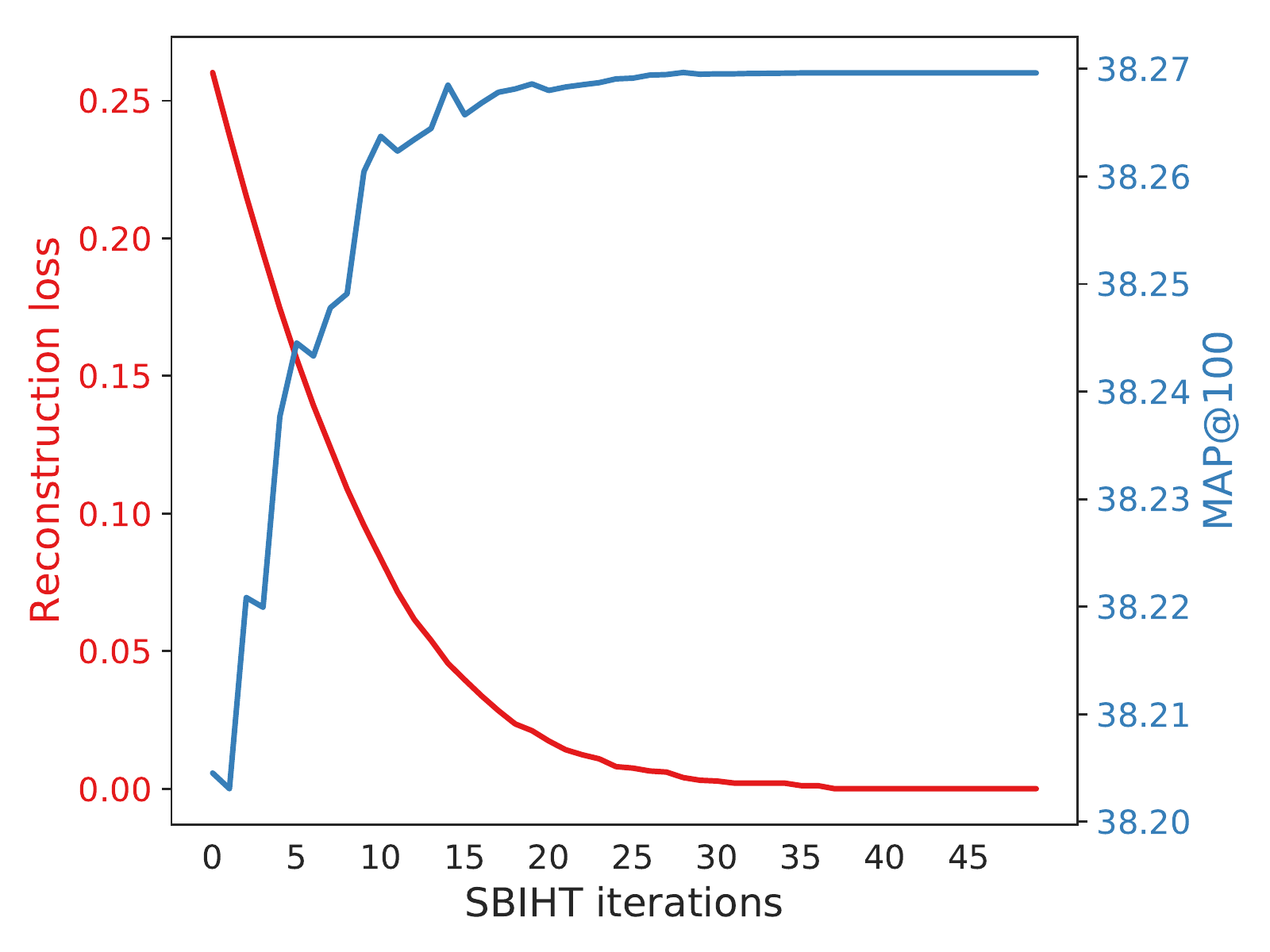}
	
	\caption{SBIHT (\cref{algo:sbiht}) monotonically decreases the objective function of \cref{eq:decoder_consistent}. Here, we plot the accumulated loss across all retrieved samples. In the last 13 iterations, the error is exactly zero for all the retrieved samples. Although not using similarity search accuracy at all in SBIHT lower objective function values correlate with improved similarity search accuracy (blue curve). However, the difference is minimal. For this plot, we use a subset of CIFAR10-GIST where, for each class, we randomly sample 1000 and 100 target and query vectors, respectively.}
	\label{fig:SBIHT}
\end{figure}

\section{Experimental setup}
\label{sec:experimental_setup}

We use datasets that are considered standard benchmarks in the similarity search literature. \Cref{tab:datsets} describes their characteristics.

LabelMe-12-50K~\cite{uetzLargescaleObjectRecognition2009} is a particularly challenging dataset because the data distribution among classes is imbalanced: the five largest classes represent 91\% of the data and and the smallest class, just 0.5\%. Additionally, 50\% of the images show a centered object while the other 50\% show a randomly selected region of a randomly selected image (i.e., visual clutter). These characteristics resemble real-world conditions.

\begin{table}
	\caption{Datasets used in the report. We include datasets of small and large scale, and with varied numbers of dimensions.}
	\label{tab:datsets}
	
	\centering
	\begin{threeparttable}[b]
		\sisetup{
			group-minimum-digits=4,
		}
		\begin{tabular}{l c S[table-format=4.0] S[table-format=7.0] S[table-format=5.0]}
			\toprule
			\multicolumn{1}{c}{Dataset} & Feature & {Dimensions} & {Target set size} & {Query set size} \\
			\midrule
			MNIST~\cite{lecunGradientbasedLearningApplied1998a}\tnote{1} & - & 784 & 69000 & 1000 \\
			MNIST~\cite{lecunGradientbasedLearningApplied1998a}\tnote{1} & GIST & 512 & 66505 & 6996 \\
			CIFAR10~\cite{krizhevskyLearningMultipleLayers2009}\tnote{1} & GIST & 512 & 54000 & 6000 \\
			CIFAR10~\cite{krizhevskyLearningMultipleLayers2009}\tnote{1} & VGG & 4096 & 54000 & 6000 \\
			LabelMe-12-50K~\cite{uetzLargescaleObjectRecognition2009}\tnote{1} & GIST & 512 & 45007 & 4993 \\
			LabelMe-12-50K~\cite{uetzLargescaleObjectRecognition2009}\tnote{1} & VGG & 4096 & 45007 & 4993 \\
			Places205~\cite{zhouLearningDeepFeatures2014}\tnote{1} & AlexNet & 128 & 2428372 & 20500 \\
			\bottomrule
		\end{tabular}
		\begin{tablenotes}
			\item [1] \url{https://github.com/HolmesShuan/K-Nearest-Neighbors-Hashing}
		\end{tablenotes}
	\end{threeparttable}
\end{table}

\subsection{Algorithms}

Here, we briefly describe the hashing methods from the machine learning literature that are used in our comparisons. There is a breathtaking amount of research around this topic.\footnote{See the repository of papers at \url{https://learning2hash.github.io/papers.html}}
We restrict our comparisons to hashing methods that (1) are unsupervised and (2) have $O(n)$ training time. The only exception to these rules is Optimal Sparse Lifting, which has $O(n^2)$ training time. We include it in our comparisons as it is the first method derived from FruitFly tat uses machine learning.

\Cref{tab:hashing_methods} summarizes all the methods used in our work. Next, we describe the methods that have not been formally introduced in other parts of this work.
Let $\{ \vect{x}_i \in \Real^d \}_{i=1}^{n}$ be the target dataset. In all cases, $D$ is the hash code length. For FruitFly/POSH, $\alpha$ is the number of set bits.

\paragraph{LSH.}
This is the seminal work that introduced the idea of using hashing functions to perform similarity search~\cite{gionisSimilaritySearchHigh1999}. In its simplest form, the hash codes are computed with
\begin{equation}
\vect{h}_i = \operatorname{sign} ( \mat{W} \vect{x} )
,
\label{eq:lsh}
\end{equation}
where the entries of $\mat{W}$ are sampled from a normal distribution and the $\operatorname{sign}$ is applied entry-wise.

\paragraph{ITQ.}
Instead of using randomized LSH codes, this approach learns data-dependent codes~\cite{gongIterativeQuantizationProcrustean2013}. This dependence is obtained by learning a matrix $\mat{W}$ that is best suited to our data. Formally this is given by the solution to the optimization problem
\begin{equation}
\min_{\mat{W}, \{ \vect{h}_i \}_{i=1}^{n}} \sum_{i=1}^{n} \norm{ \vect{h}_i - \mat{W} \vect{x}_i }{2}^2
\quad\text{s.t.}\quad
\begin{gathered}
\transpose{\mat{W}} \mat{W} = \mat{I} , \\
\vect{h}_i \in \{ 0, 1 \}^{d} .
\end{gathered}
\label[problem]{prob:itq}
\end{equation}
The problem is solved by alternate optimization: fixing $\mat{W}$ and solving for all $\vect{h}_i$, and viceversa.
The solution for the former problem is given by \cref{eq:lsh}.
Whereas the latter is an instance of the orthogonal Procrustes Problem. We already pointed out the similarities and differences between this approach and POSH.
ITQ produces dense binary codes that are of the same dimension as the input data. Hence, high-dimensional input data will produce high-dimensional hash codes. In order to work with compact codes (which is needed when codes are dense), the data is first projected into a lower-dimensional space and decorrelated using PCA.

\paragraph{KNNH.} This algorithm incorporates a preprocessing phase to ITQ~\cite{heKNearestNeighborsHashing2019}. We denote by $\set{N}(\vect{x}_i)$ the ground truth nearest neighbors of each point $\vect{x}_i$ (obtained by brute force search). In our experiments, we always use 20 nearest neighbors, as suggested in~\cite{heKNearestNeighborsHashing2019}. KNNH creates a new dataset $\{ \tilde{\vect{x}}_i \}_{i=1}^{n}$, where
\begin{equation}
	\tilde{\vect{x}}_i = \frac{1}{|\set{N}(\vect{x}_i)|} \sum_{\vect{y} \in \set{N}(\vect{x}_i)} \vect{y} .	
	\label{eq:knnh}
\end{equation}
This effectively produces a ``denoised'' version of the original dataset. Finally, hash codes are learned using ITQ. Notice that this method cannot truly run online, as the nearest neighbor computation requires to know the entire dataset in advance and has a complexity of $O(n^2)$. Nonetheless, we include it in our comparisons as KNNH is state-of-the-art in the LSH family of algorithms.

\paragraph{FruitFly.} We use the Bernoulli parameter $p=0.2$, following the experiments in \cref{fig:sparse_vs_orthogonal}. See \cref{sec:fruit_fly} for further details about the algorithm.

\paragraph{BioHash.} We use the parameters specified in~\cite[Appendix D]{ryaliBioInspiredHashingUnsupervised2020}. The sole exception is \cref{fig:bio_vs_spherical}, where we change the initial learning rate is changed purposely. See \cref{sec:biohash} for further details about the algorithm.

\begin{table}
	\caption{Characterization of the hashing methods used for our comparisons.}
	\label{tab:hashing_methods}
	
	\centering
	\begin{threeparttable}
		\begin{tabular}{lccc}
			\toprule
			\multicolumn{1}{c}{Method} & Hash codes & Dimensionality & Definition \\
			\midrule
			LSH~\cite{gionisSimilaritySearchHigh1999} & dense & compressed & \cref{eq:lsh}) \\
			ITQ~\cite{gongIterativeQuantizationProcrustean2013} & dense & compressed & \cref{prob:itq} \\
			KNNH~\cite{heKNearestNeighborsHashing2019} & dense & compressed & \cref{eq:knnh} \\
			FruitFly~\cite{dasguptaNeuralAlgorithmFundamental2017} & sparse & expanded & \cref{eq:fruit_fly_hashing} \\
			BOSL\tnote{1} & sparse & expanded & \cref{sec:osl} \\
			BioHash~\cite{ryaliBioInspiredHashingUnsupervised2020} & sparse & expanded & \cref{sec:biohash}\\
			SphericalHash & sparse & expanded & \cref{sec:biohash}\\
			POSH  & sparse & expanded & \cref{sec:posh} \\
			\bottomrule
		\end{tabular}
		\begin{tablenotes}
			\item [1] BOSL follows the spirit of OSL~\cite{liFastSimilaritySearch2018}, but uses a different optimization technique. See \cref{sec:osl} for more details.
		\end{tablenotes}
	\end{threeparttable}	
\end{table}

\subsection{Preprocessing}

For all methods, the data is preprocessed before hashing. We compute and store the data mean over the target dataset. Before hashing target and query vectors, we subtract the stored mean.

Additionally, whenever $d > D$, we compute and store the PCA transform of the target database. For dense hashing methods (\cref{tab:hashing_methods}) we set the number of PCA components to  $D$.

The case of $d > D$ merits a discussion for sparse hashing methods (\cref{tab:hashing_methods}).
These methods are designed to expand the dimensionality, not to reduce it. In fact, the matrix $\mat{W}$ in POSH cannot be orthogonal if we are reducing the dimensionality. For these reasons, when $d > D$, we use PCA, setting the number of components to $\alpha$.

\subsection{Training}

We observe that for most methods, training with a large-scale scale training set is not necessary. This also reflects real-world conditions, where the indexed dataset might be slightly different (and probably larger) than the training set. Of course, we assume that both datasets come from the same distribution. We thus randomly and uniformly sample 5000 examples from the target set of each dataset and use them for training. We use the same training set for every method. This comparison method also allows to train BOSL on exactly the same data as the other methods, as it cannot run on large-scale traning sets (the original OSL~\cite{liFastSimilaritySearch2018} shares the same limitation).

This training protocol can be detrimental to extract the full performance out of KNNH, see~\cref{eq:knnh}. However, for the reasons stated above we believe that hashing methods should be resilient to a training set sampled from the same distribution as the target set. As such, this is a problem with the method, and not with the training setup.

We use a mini-batches of 100 elements to train all methods.

\subsection{Evaluation metrics}

We use the standard Mean Average Precision (MAP), which averages precision over different recall, as our evaluation measure. In particular, we use MAP@n, where n is the number of retrieved target elements. We believe that MAP@n is a more representative measure for real use-cases of similarity search, as a full sorting of the target elements is rarely needed in practice. We also present results using Precision@n.

For each method, we always run 10 different trials (i.e., starting from different random initializations) and report aggregated results.

\section{Additional experimental results}
\label{sec:additional_experiments}

For completeness and reproducbility, we provide numerical results in \cref{tab:per_bits_mnist,tab:per_bits_mnist-gist,tab:per_bits_cifar10-gist,tab:per_bits_cifar10-vgg,tab:per_bits_labelme-gist,tab:per_bits_labelme-vgg} for the curves in \cref{fig:per_bits}.

\begin{table}
	\caption{MAP@1000 results corresponding to the MNIST dataset in \cref{fig:per_bits}}
	\label{tab:per_bits_mnist}

	\centering
	\begin{small}
	\begin{tabular}{l *{10}{ @{\hspace{8pt}} S[table-format=2.2,round-mode=places,round-precision=2] @{\hspace{8pt}} S[table-format=1.2,round-mode=places,round-precision=2]}}
		\toprule
        & \multicolumn{2}{c}{8 bits} & \multicolumn{2}{c}{16 bits} & \multicolumn{2}{c}{32 bits} & \multicolumn{2}{c}{64 bits} & \multicolumn{2}{c}{128 bits} \\
        \cmidrule(lr){2-3} \cmidrule(lr){4-5} \cmidrule(lr){6-7} \cmidrule(lr){8-9} \cmidrule(lr){10-11}
        & {MEAN}    & {STD}    & {MEAN}    & {STD}    & {MEAN}    & {STD}    & {MEAN}    & {STD}    & {MEAN}    & {STD}    \\
		\midrule
		LSH           &       41.299227 &        2.971746 &        54.819521 &         0.662192 &        62.248258 &         1.730992 &        69.490989 &         0.286361 &         75.541584 &          0.349362 \\
		ITQ           &       57.207546 &        1.037724 &        69.169752 &         1.026098 &        76.743339 &         0.340692 &        80.008127 &         0.323895 &         81.869453 &          0.298746 \\
		KNNH          &       58.243941 &        0.956279 &        69.909346 &         0.663315 &        77.956801 &         0.374958 &        81.201798 &         0.321392 &         83.148353 &          0.168069 \\
		FruitFly      &       58.186818 &        0.768243 &        67.258457 &         0.627975 &        74.007030 &         0.485671 &        78.370552 &         0.256711 &         80.790420 &          0.194562 \\
		BOSL          &       72.611763 &        0.298237 &        75.665313 &         0.338606 &        77.424644 &         0.198480 &        78.310337 &         0.287293 &         79.987173 &          0.389141 \\		
		BioHash       &       61.560242 &        0.957880 &        69.517839 &         0.507168 &        75.564542 &         0.383517 &        78.856061 &         0.364383 &         81.205214 &          0.274274 \\
		SphericalHash & \bfseries 84.156391 &        0.338577 & \bfseries 83.736119 &         0.230584 & \bfseries 82.111057 &         0.384816 &        80.370271 &         0.284635 &         78.820698 &          0.182206 \\
		POSH          &       75.260167 &        0.348947 &        79.452880 &         0.287062 &        81.491914 &         0.242512 & \bfseries 82.929763 &         0.235357 & \bfseries 83.656549 &          0.067164 \\
		\bottomrule	              
	\end{tabular}
	\end{small}
\end{table}

\begin{table}
	\caption{MAP@1000 results corresponding to the MNIST-GIST dataset in \cref{fig:per_bits}}
	\label{tab:per_bits_mnist-gist}

	\centering
	\begin{small}
	\begin{tabular}{l *{10}{ @{\hspace{8pt}} S[table-format=2.2,round-mode=places,round-precision=2] @{\hspace{8pt}} S[table-format=1.2,round-mode=places,round-precision=2]}}
		\toprule
        & \multicolumn{2}{c}{8 bits} & \multicolumn{2}{c}{16 bits} & \multicolumn{2}{c}{32 bits} & \multicolumn{2}{c}{64 bits} & \multicolumn{2}{c}{128 bits} \\
        \cmidrule(lr){2-3} \cmidrule(lr){4-5} \cmidrule(lr){6-7} \cmidrule(lr){8-9} \cmidrule(lr){10-11}
        & {MEAN}    & {STD}    & {MEAN}    & {STD}    & {MEAN}    & {STD}    & {MEAN}    & {STD}    & {MEAN}    & {STD}    \\
		\midrule
		LSH           &       46.365747 &        4.904722 &        56.800426 &         3.166077 &        67.885235 &         0.850751 &        75.861307 &         1.022221 &         80.695031 &          0.301907 \\
		ITQ           &       58.647502 &        1.212063 &        74.039202 &         0.717892 &        81.072637 &         0.230787 &        84.998139 &         0.306214 &         87.207430 &          0.054438 \\
		KNNH          &       60.343982 &        0.986644 &        75.126294 &         0.641789 &        82.682721 &         0.387531 &        86.811426 &         0.332023 &         89.016944 &          0.154251 \\
		FruitFly      &       68.415230 &        0.503170 &        76.694714 &         0.651476 &        81.357305 &         0.427243 &        83.874345 &         0.090010 &         85.567014 &          0.340684 \\
		BOSL          &       84.963282 &        0.427866 & \bfseries 87.086767 &         0.224680 &        88.026212 &         0.129135 &        87.637339 &         0.169720 &         87.493856 &          0.170119 \\
		BioHash       &       65.358007 &        1.344298 &        75.195877 &         0.711763 &        80.552712 &         0.415548 &        83.705769 &         0.263908 &         85.379093 &          0.283965 \\
		SphericalHash & \bfseries 87.299811 &        0.270418 &        86.099416 &         0.291720 &        83.947659 &         0.171461 &        80.874773 &         0.359662 &         77.050998 &          0.329308 \\
		POSH          &       83.136181 &        0.520277 &        87.077339 &         0.286185 & \bfseries 88.972998 &         0.192691 & \bfseries 90.042332 &         0.107513 & \bfseries 90.230717 &          0.098105 \\
		\bottomrule	              
	\end{tabular}
	\end{small}
\end{table}

\begin{table}
	\caption{MAP@1000 results corresponding to the CIFAR10-GIST dataset in \cref{fig:per_bits}}
	\label{tab:per_bits_cifar10-gist}

	\centering
	\begin{small}
	\begin{tabular}{l *{10}{ @{\hspace{8pt}} S[table-format=2.2,round-mode=places,round-precision=2] @{\hspace{8pt}} S[table-format=1.2,round-mode=places,round-precision=2]}}
		\toprule
        & \multicolumn{2}{c}{8 bits} & \multicolumn{2}{c}{16 bits} & \multicolumn{2}{c}{32 bits} & \multicolumn{2}{c}{64 bits} & \multicolumn{2}{c}{128 bits} \\
        \cmidrule(lr){2-3} \cmidrule(lr){4-5} \cmidrule(lr){6-7} \cmidrule(lr){8-9} \cmidrule(lr){10-11}
        & {MEAN}    & {STD}    & {MEAN}    & {STD}    & {MEAN}    & {STD}    & {MEAN}    & {STD}    & {MEAN}    & {STD}    \\
		\midrule
		LSH           &       18.125835 &        0.907542 &        20.221060 &         0.560220 &        23.825168 &         0.523075 &        25.723192 &         0.164368 &         28.011689 &          0.309649 \\
		ITQ           &       20.988552 &        0.290260 &        24.218053 &         0.229157 &        27.091732 &         0.160561 &        28.849303 &         0.160345 &         30.440780 &          0.046250 \\
		KNNH          &       20.972742 &        0.245054 &        24.805861 &         0.187142 &        27.735256 &         0.121716 &        29.686297 &         0.167386 &         31.170373 &          0.068730 \\
		FruitFly      &       21.831603 &        0.348243 &        25.052363 &         0.175397 &        27.487172 &         0.236882 &        29.757511 &         0.143634 &         31.314356 &          0.101964 \\
		BOSL          & \bfseries 29.856044 &        0.291349 & \bfseries 31.490123 &         0.223959 & \bfseries 32.148501 &         0.187008 &        32.048648 &         0.071377 &         32.463922 &          0.151833 \\
		BioHash       &       23.083825 &        0.293249 &        25.605629 &         0.284468 &        27.780720 &         0.227816 &        29.861828 &         0.173133 &         31.447358 &          0.186902 \\
		SphericalHash &       29.274271 &        0.210321 &        29.798273 &         0.180257 &        29.841731 &         0.113176 &        29.582895 &         0.070162 &         29.215383 &          0.114321 \\
		POSH          &       26.212244 &        0.227857 &        28.606460 &         0.169747 &        30.467426 &         0.177102 & \bfseries 32.157333 &         0.103945 & \bfseries 33.160734 &          0.108774 \\
		
		\bottomrule	              
	\end{tabular}
	\end{small}
\end{table}

\begin{table}
	\caption{MAP@1000 results corresponding to the CIFAR10-VGG dataset in \cref{fig:per_bits}}
	\label{tab:per_bits_cifar10-vgg}

	\centering
	\begin{small}
	\begin{tabular}{l *{10}{ @{\hspace{8pt}} S[table-format=2.2,round-mode=places,round-precision=2] @{\hspace{8pt}} S[table-format=1.2,round-mode=places,round-precision=2]}}
		\toprule
        & \multicolumn{2}{c}{8 bits} & \multicolumn{2}{c}{16 bits} & \multicolumn{2}{c}{32 bits} & \multicolumn{2}{c}{64 bits} & \multicolumn{2}{c}{128 bits} \\
        \cmidrule(lr){2-3} \cmidrule(lr){4-5} \cmidrule(lr){6-7} \cmidrule(lr){8-9} \cmidrule(lr){10-11}
        & {MEAN}    & {STD}    & {MEAN}    & {STD}    & {MEAN}    & {STD}    & {MEAN}    & {STD}    & {MEAN}    & {STD}    \\
		\midrule
        LSH           &       28.515979 &        0.816470 &        32.345490 &         1.331963 &        34.639979 &         1.380626 &        37.938387 &         0.616784 &         42.001554 &          0.528998 \\
        ITQ           &       34.028447 &        0.447975 &        39.153728 &         0.383220 &        41.576252 &         0.194501 &        44.239824 &         0.178528 &         46.412338 &          0.124988 \\
        KNNH          &       33.980778 &        0.333276 &        38.628789 &         0.457023 &        41.901513 &         0.170581 &        44.674243 &         0.151030 &         46.833274 &          0.116909 \\
        FruitFly      &       34.909903 &        0.286323 &        40.394011 &         0.352230 &        42.627627 &         0.202109 &        44.720214 &         0.330022 &         45.661232 &          0.243765 \\
		BOSL          &       38.580720 &        0.364655 &        44.790267 &         0.249881 &        47.162788 &         0.129765 &        49.067205 &         0.153955 &         48.507609 &          0.161135 \\
        BioHash       &       39.705205 &        0.274082 &        43.109969 &         0.383667 &        44.635643 &         0.317536 &        45.868998 &         0.252035 &         46.886013 &          0.235199 \\
        SphericalHash &       41.112100 &        0.107240 &        46.160091 &         0.120986 &        47.916236 &         0.173106 &        48.415863 &         0.193181 &         47.161759 &          0.263012 \\
        POSH          &       37.953799 &        0.239184 &        44.438826 &         0.195473 &        47.274094 &         0.117879 &        49.393478 &         0.119901 &         50.171443 &          0.060018 \\
		\bottomrule	              
	\end{tabular}
	\end{small}
\end{table}

\begin{table}
	\caption{MAP@1000 results corresponding to the LabelMe-12-50K-GIST dataset in \cref{fig:per_bits}}
	\label{tab:per_bits_labelme-gist}

	\centering
	\begin{small}
	\begin{tabular}{l *{10}{ @{\hspace{8pt}} S[table-format=2.2,round-mode=places,round-precision=2] @{\hspace{8pt}} S[table-format=1.2,round-mode=places,round-precision=2]}}
	\toprule
        & \multicolumn{2}{c}{8 bits} & \multicolumn{2}{c}{16 bits} & \multicolumn{2}{c}{32 bits} & \multicolumn{2}{c}{64 bits} & \multicolumn{2}{c}{128 bits} \\
        \cmidrule(lr){2-3} \cmidrule(lr){4-5} \cmidrule(lr){6-7} \cmidrule(lr){8-9} \cmidrule(lr){10-11}
        & {MEAN}    & {STD}    & {MEAN}    & {STD}    & {MEAN}    & {STD}    & {MEAN}    & {STD}    & {MEAN}    & {STD}    \\
		\midrule
		LSH           &       15.306649 &        0.362580 &        16.344039 &         0.435870 &        19.327935 &         0.690045 &        21.107218 &         0.346002 &         23.040022 &          0.154999 \\
		ITQ           &       17.652506 &        0.120258 &        20.523762 &         0.177075 &        22.426868 &         0.266526 &        23.735533 &         0.192619 &         24.950618 &          0.079899 \\
		KNNH          &       17.855069 &        0.096692 &        20.952772 &         0.145158 &        22.814321 &         0.202137 &        24.274385 &         0.139926 &         25.431793 &          0.141145 \\
		FruitFly      &       18.070155 &        0.458770 &        20.436336 &         0.195987 &        22.391403 &         0.238587 &        24.429974 &         0.217589 &         25.517787 &          0.019855 \\
		BOSL          & \bfseries 24.398816 &        0.273531 & \bfseries 25.593448 &         0.210736 & \bfseries 26.404583 &         0.220642 & \bfseries 26.532455 &         0.167699 &         26.726800 &          0.201700 \\		
		BioHash       &       18.625815 &        0.323149 &        20.732482 &         0.191230 &        22.721094 &         0.243305 &        24.327879 &         0.144493 &         25.443924 &          0.196720 \\
		SphericalHash &       24.313733 &        0.112083 &        24.629330 &         0.160349 &        24.616089 &         0.182575 &        24.464447 &         0.121860 &         23.997034 &          0.146686 \\
		POSH          &       22.303395 &        0.308398 &        23.969311 &         0.220961 &        25.436416 &         0.146971 &        26.437014 &         0.088704 & \bfseries 27.212659 &          0.128882 \\
		
		\bottomrule	              
	\end{tabular}
	\end{small}
\end{table}

\begin{table}
	\caption{MAP@1000 results corresponding to the LabelMe-12-50K-VGG dataset in \cref{fig:per_bits}}
	\label{tab:per_bits_labelme-vgg}

	\centering
	\begin{small}
	\begin{tabular}{l *{10}{ @{\hspace{8pt}} S[table-format=2.2,round-mode=places,round-precision=2] @{\hspace{8pt}} S[table-format=1.2,round-mode=places,round-precision=2]}}
	\toprule
        & \multicolumn{2}{c}{8 bits} & \multicolumn{2}{c}{16 bits} & \multicolumn{2}{c}{32 bits} & \multicolumn{2}{c}{64 bits} & \multicolumn{2}{c}{128 bits} \\
        \cmidrule(lr){2-3} \cmidrule(lr){4-5} \cmidrule(lr){6-7} \cmidrule(lr){8-9} \cmidrule(lr){10-11}
        & {MEAN}    & {STD}    & {MEAN}    & {STD}    & {MEAN}    & {STD}    & {MEAN}    & {STD}    & {MEAN}    & {STD}    \\
		\midrule
        LSH           &       18.002353 &        0.864707 &        20.510355 &         0.695885 &        23.228842 &         0.521026 &        26.419685 &         0.172882 &         29.050196 &          0.362776 \\
        ITQ           &       22.800149 &        0.519947 &        25.739648 &         0.506300 &        28.420269 &         0.239005 &        31.310452 &         0.246265 &         33.761422 &          0.132957 \\
        KNNH          &       23.060154 &        0.281613 &        25.309556 &         0.304017 &        28.904332 &         0.339931 &        31.685577 &         0.247715 &         34.425609 &          0.120468 \\
        FruitFly      &       21.667246 &        0.141275 &        25.937632 &         0.217119 &        29.925684 &         0.212039 &        32.197673 &         0.176762 &         33.690128 &          0.280115 \\
        BOSL          &       24.994246 &        0.233777 &        29.336024 &         0.159031 &        34.187897 &         0.215678 &        35.305011 &         0.112832 &         36.132754 &          0.148445 \\
        BioHash       &       26.587151 &        0.199711 &        28.999849 &         0.327159 &        31.404585 &         0.369695 &        33.270819 &         0.242563 &         34.453950 &          0.201414 \\
        SphericalHash &       27.222900 &        0.125988 &        31.107173 &         0.227212 &        34.323932 &         0.293342 &        34.557080 &         0.210491 &         33.639905 &          0.216453 \\
        POSH          &       24.469864 &        0.126122 &        29.210659 &         0.223633 &        32.920120 &         0.274864 &        36.117830 &         0.242022 &         37.632215 &          0.173324 \\
		\bottomrule	              
	\end{tabular}
	\end{small}
\end{table}

We include in \cref{fig:per_bits} but measuring Precision@1000 instead of MAP@1000. Similar results are observed in both figures.

\begin{figure}
	\centering
	\begin{small}
		\begin{tabu} to \textwidth {*{3}{ @{\hspace{0pt}} X[c,m] @{\hspace{0pt}} }}
			MNIST & CIFAR10-GIST & LabelMe-12-50K-GIST \\
			\includegraphics[width=\linewidth]{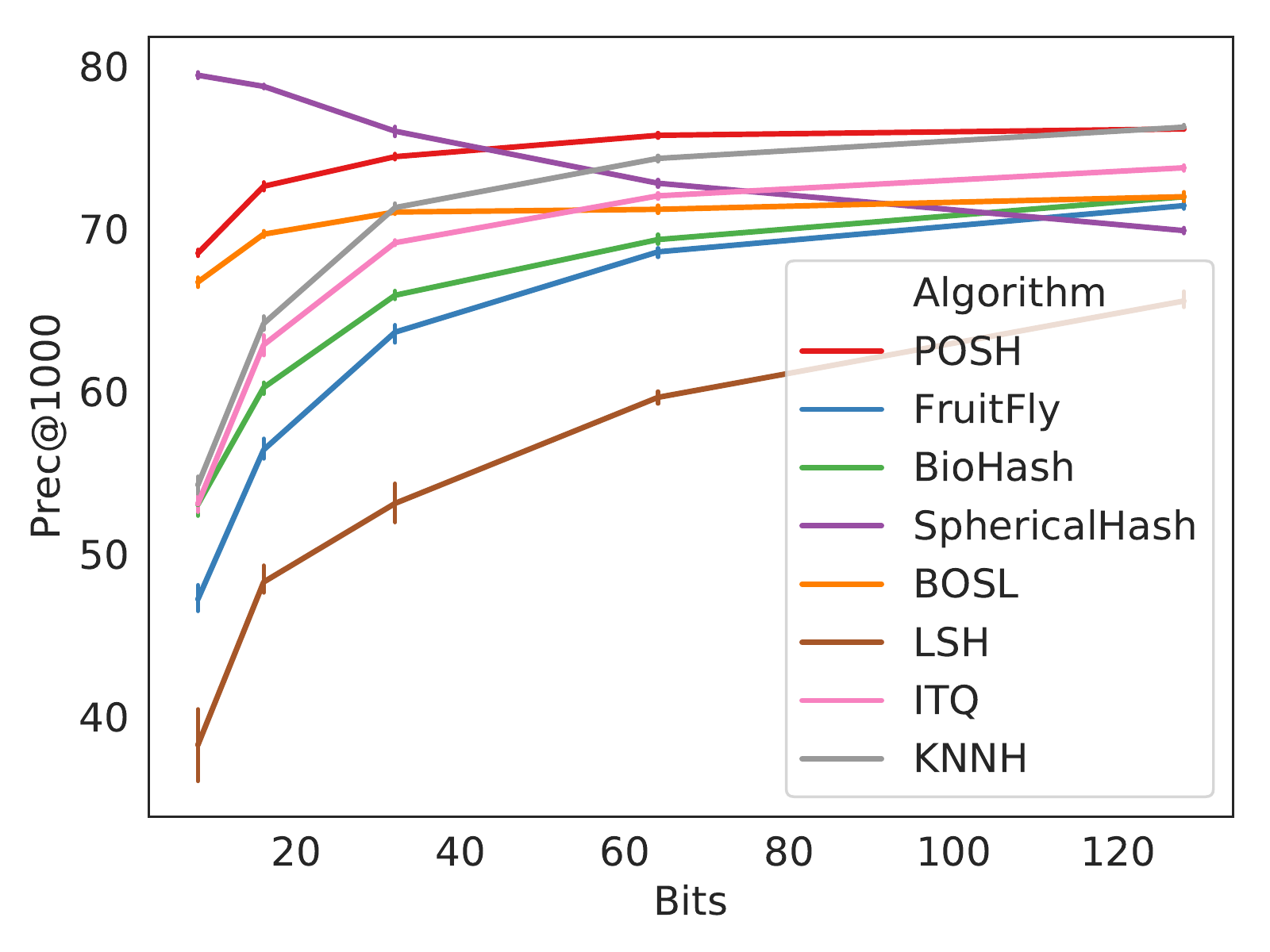} &
			\includegraphics[width=\linewidth]{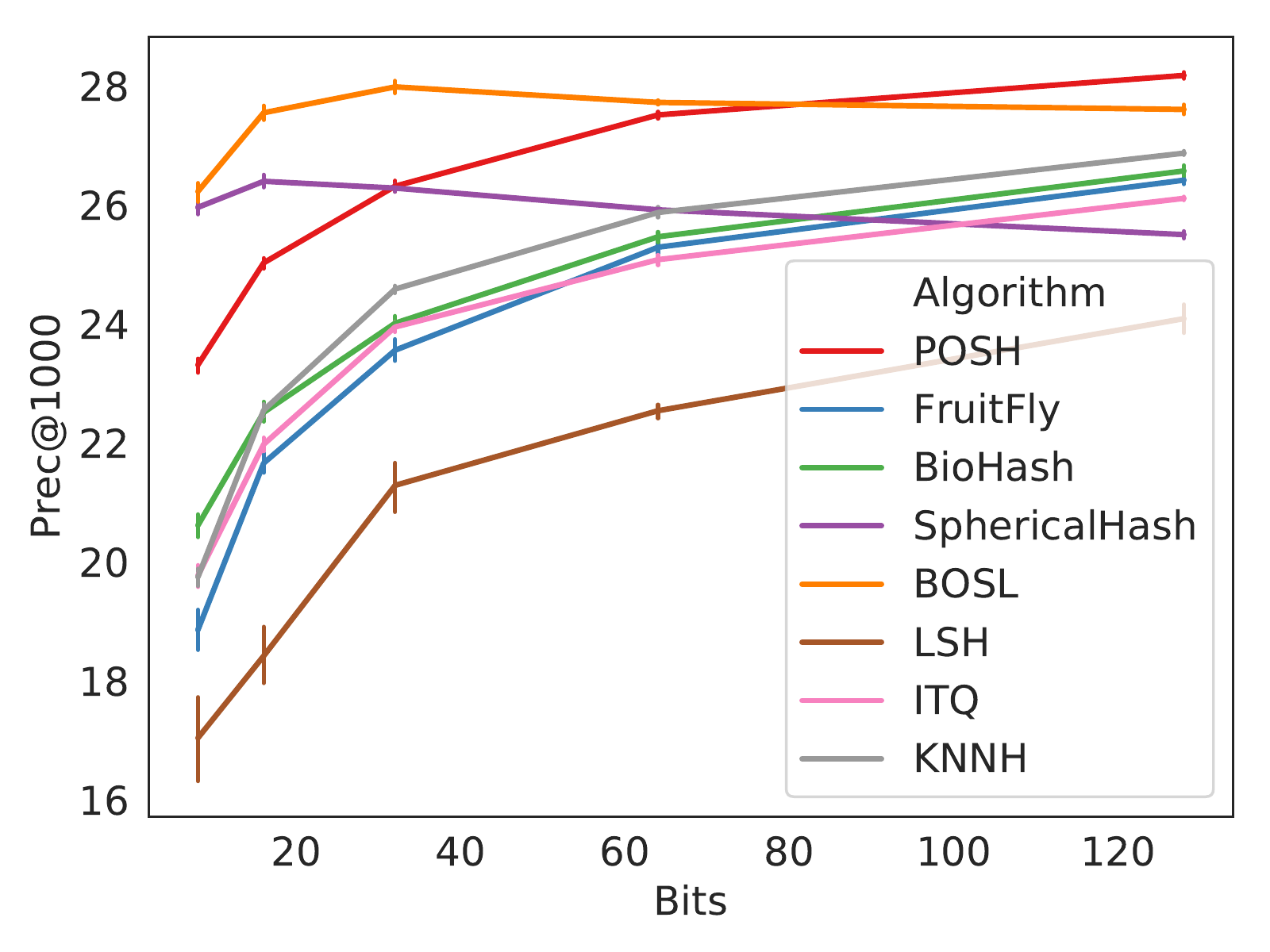} &
			\includegraphics[width=\linewidth]{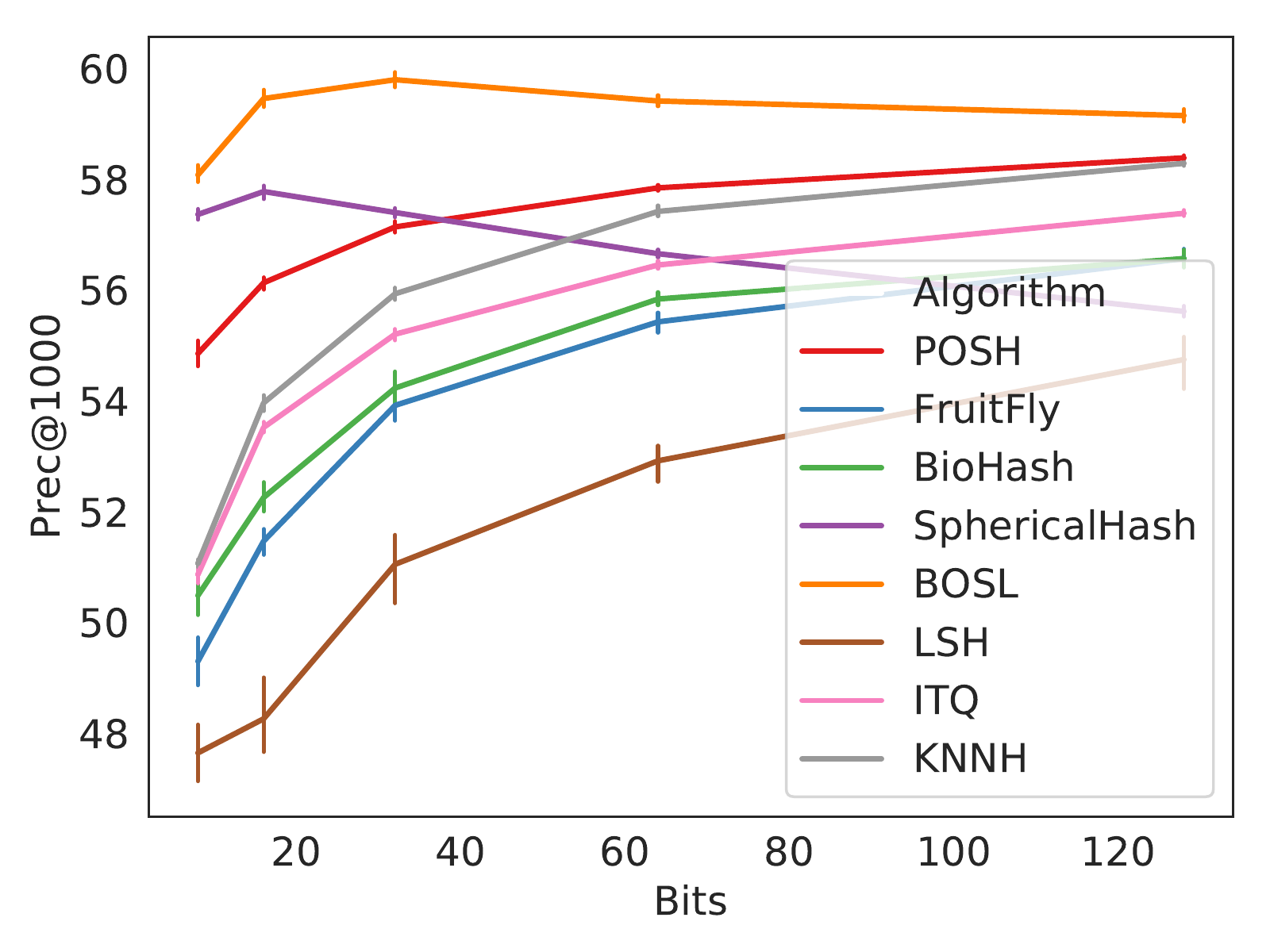} \\
			MNIST-GIST & CIFAR10-VGG & LabelMe-12-50K-VGG \\
			\includegraphics[width=\linewidth]{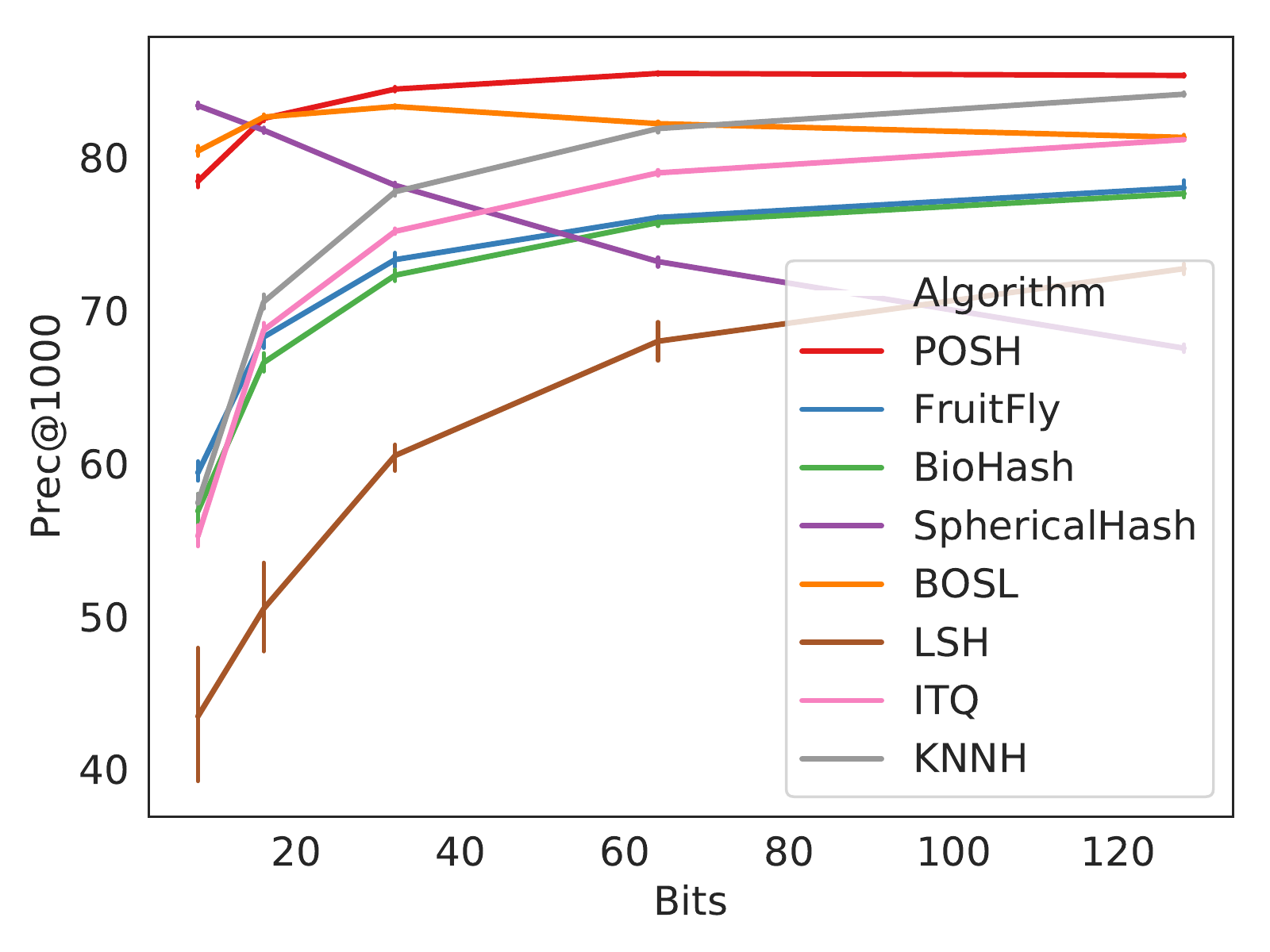} &
			\includegraphics[width=\linewidth]{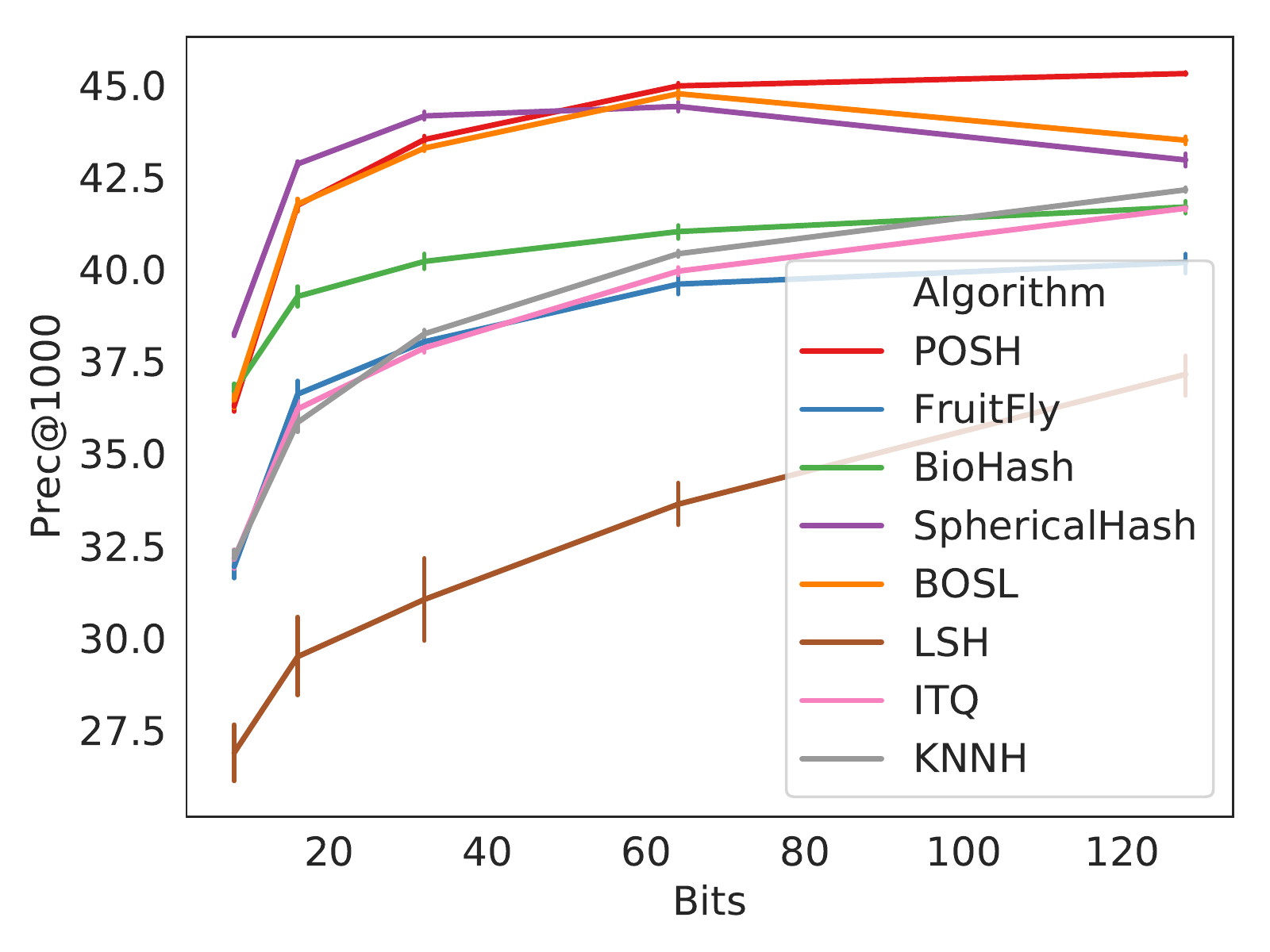} &
			\includegraphics[width=\linewidth]{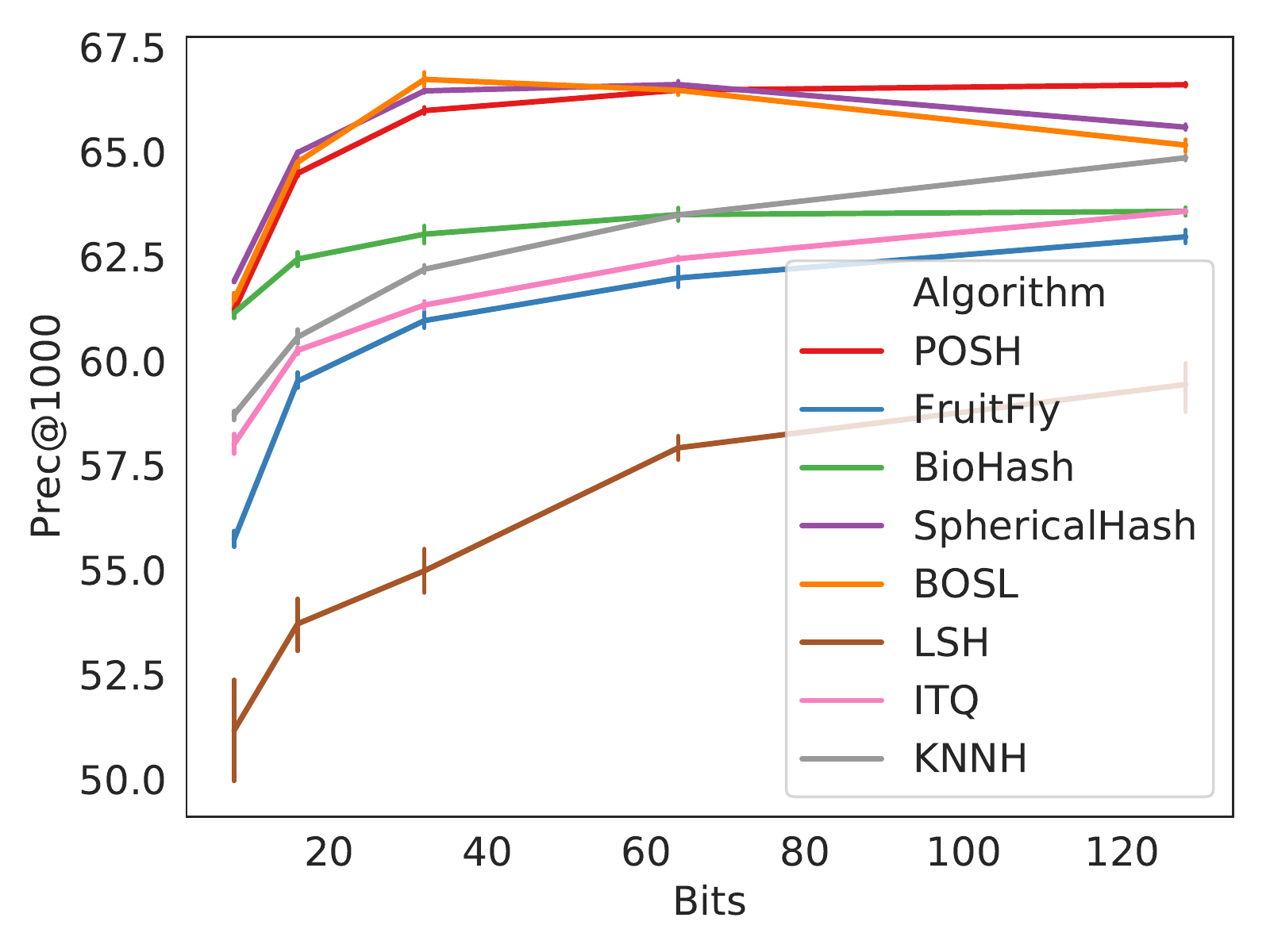} \\
		\end{tabu}
	\end{small}
	
	\caption{Comparison of different hashing methods under different configurations. The abscissa represents the hash length $D$ for dense codes (LSH, ITQ, KNNH) and the number $\alpha$ of set bits for sparse codes (POSH, FruitFly, BioHash, SphericalHash, BOSL). For sparse codes, $D=1024$. Error bars represent 95\% confidence intervals. Additional results are available in \cref{sec:additional_experiments}.}
	\label{fig:per_bits_precision}
\end{figure}

%
%

In \cref{fig:hash_length}, we compare the performance of FruitFly, SphericalHash, and POSH for $\alpha \in \{16, 32, and 64 \}$ across different hash lengths $D$. As in \cref{fig:per_bits}, SphericalHash performs generally better when $\alpha=16$ across different values of $D$, while POSh dominates when $\alpha=64$.

\begin{figure}[t]
	\centering
	\begin{small}
	\begin{tabu} to \textwidth { @{\hspace{0pt}} X[c,m,1] *{3}{ @{\hspace{0pt}} X[c,m,10] @{\hspace{0pt}} }}
		& $\alpha=16$ & $\alpha=32$ & $\alpha=64$ \\
		\begin{sideways}MNIST\end{sideways} &
		\includegraphics[width=\linewidth]{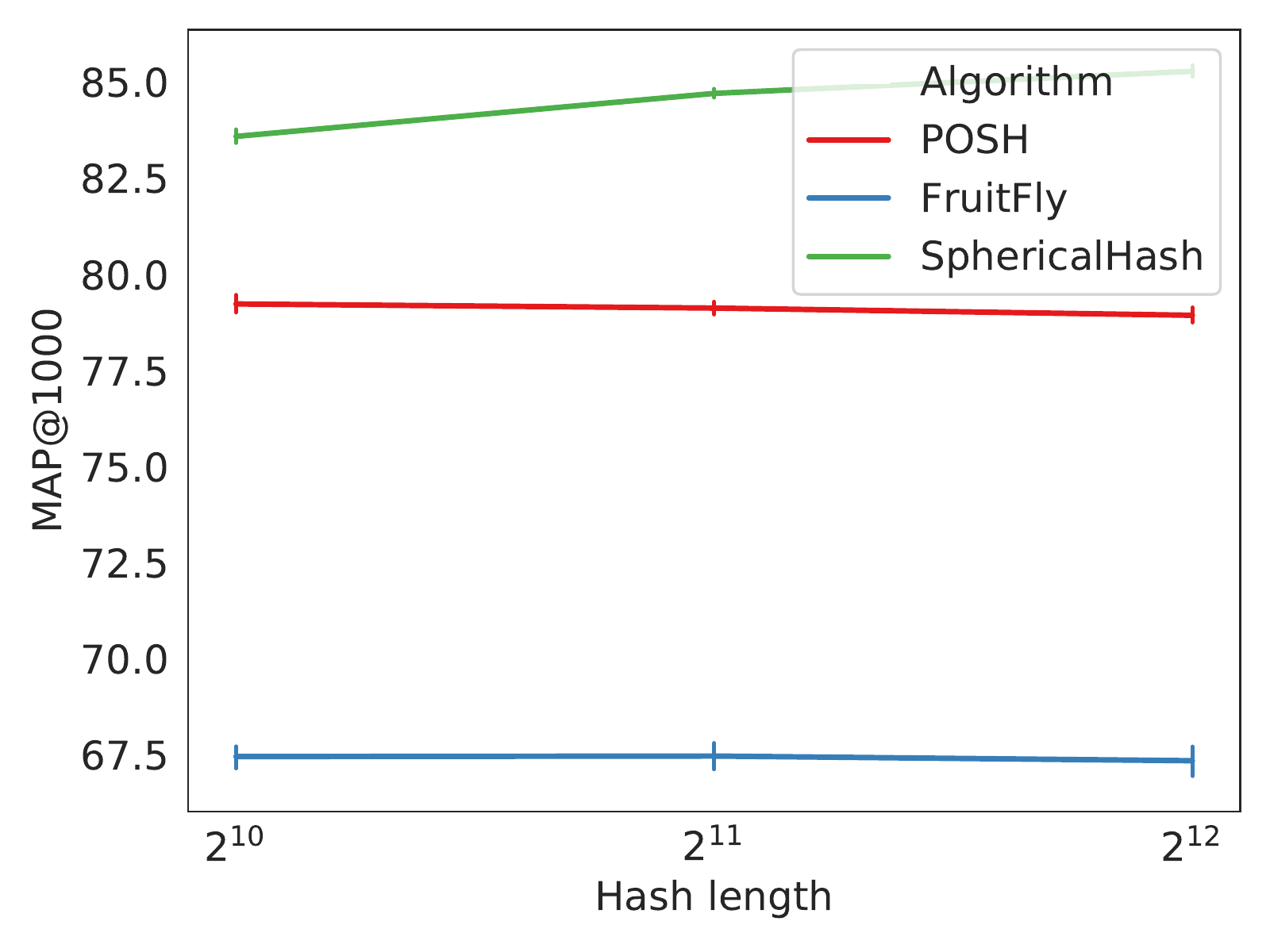} &
		\includegraphics[width=\linewidth]{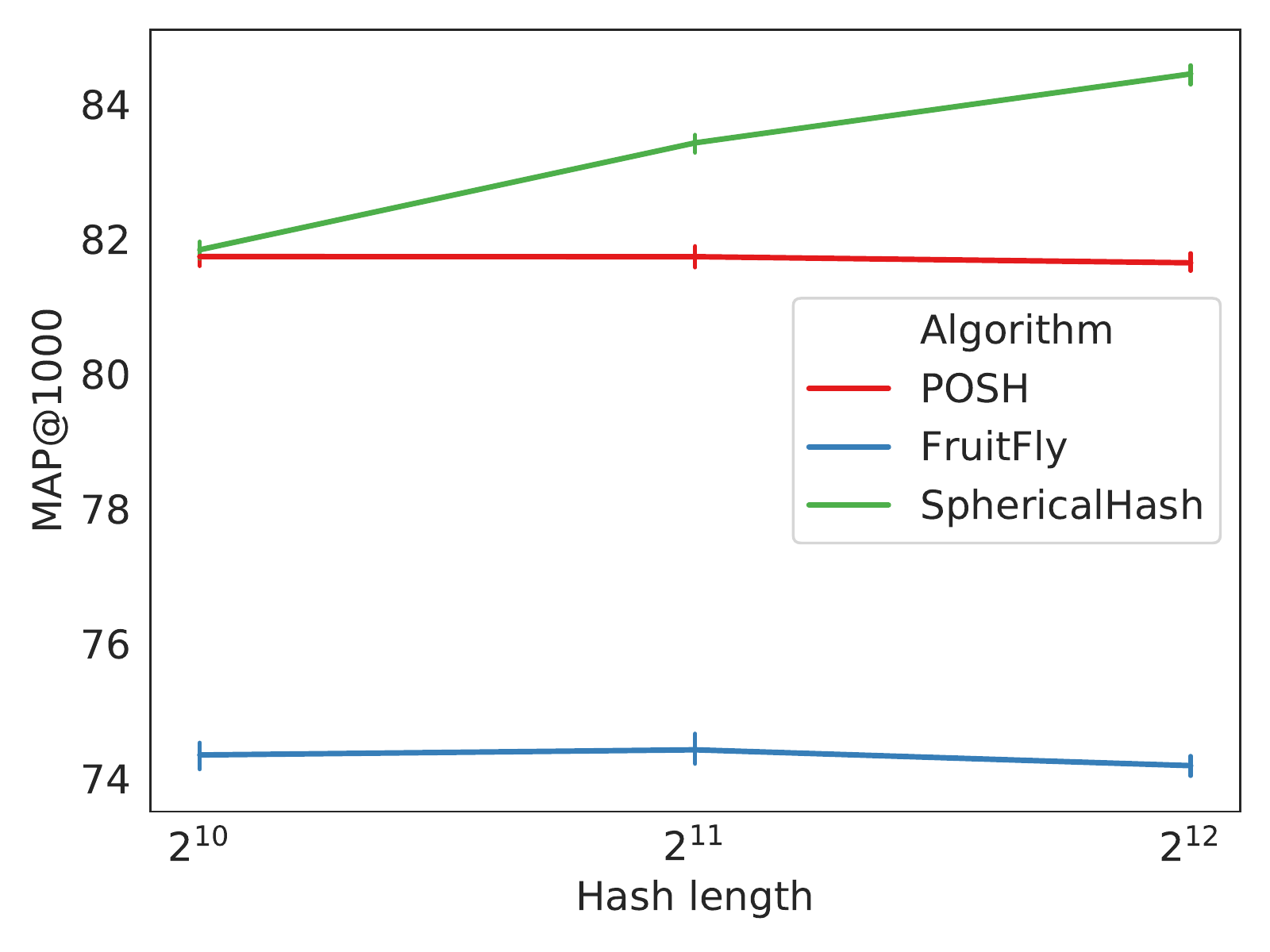} &
		\includegraphics[width=\linewidth]{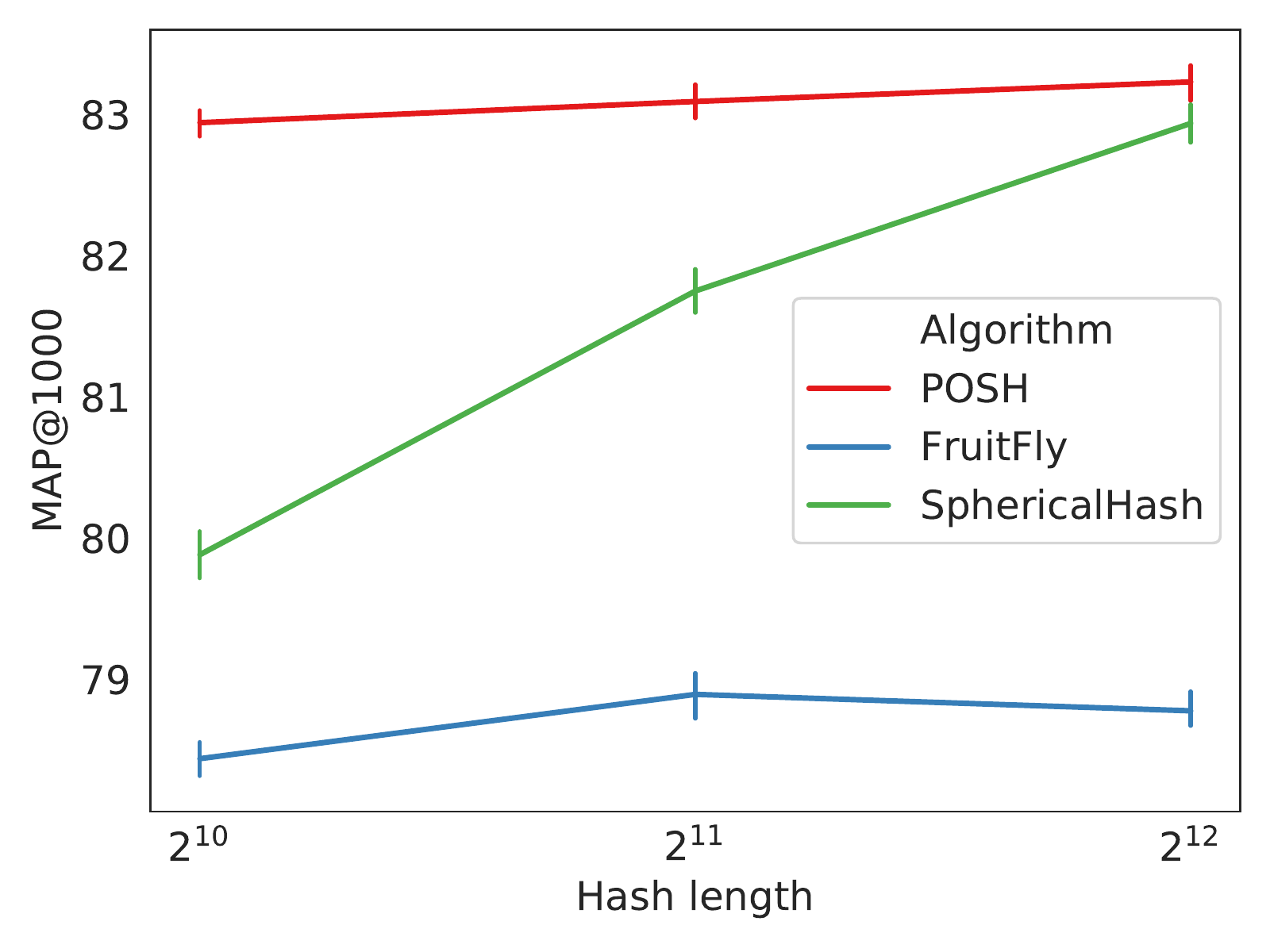} \\
		\begin{sideways}MNIST-GIST\end{sideways} &
		\includegraphics[width=\linewidth]{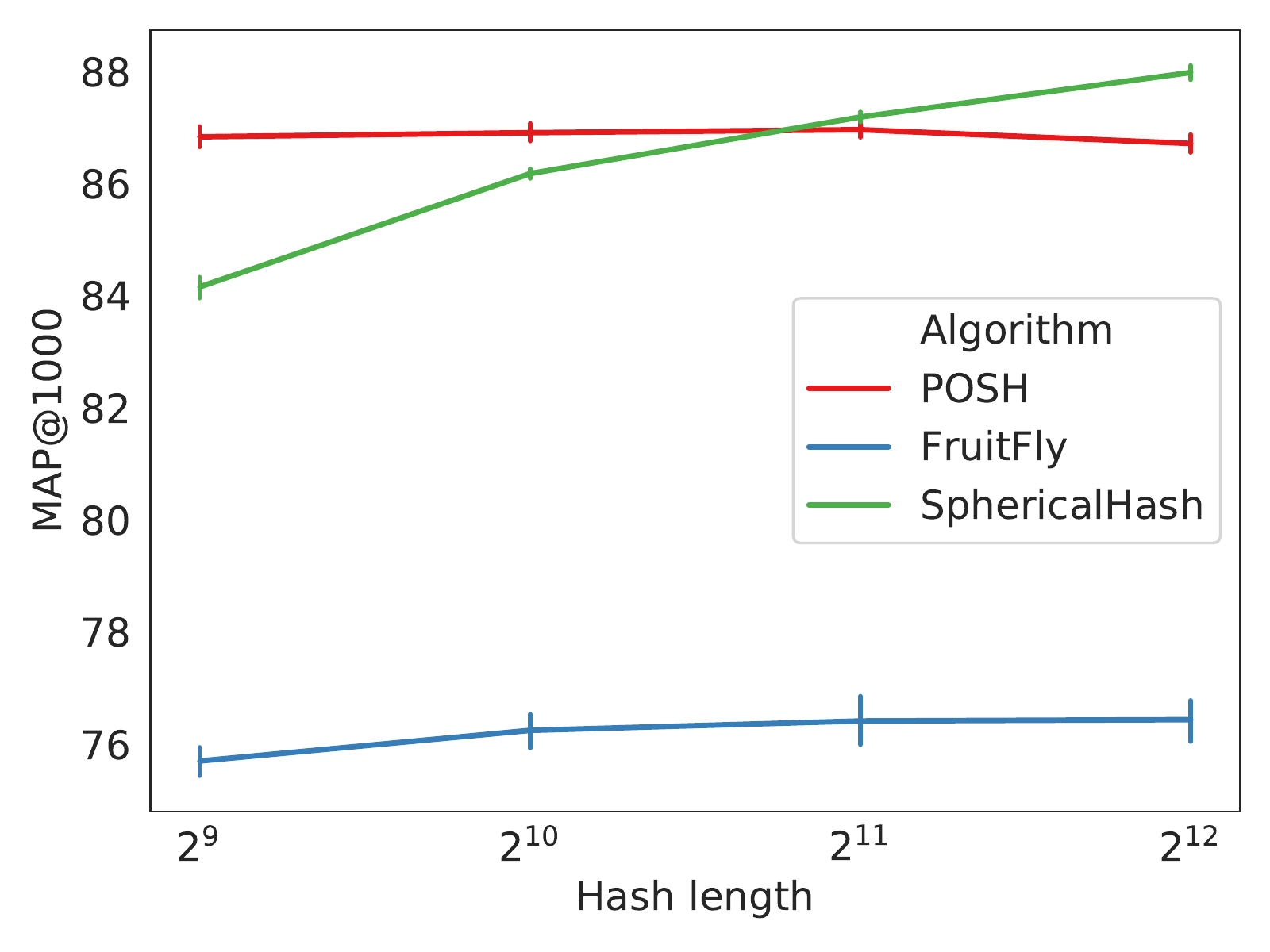} &
		\includegraphics[width=\linewidth]{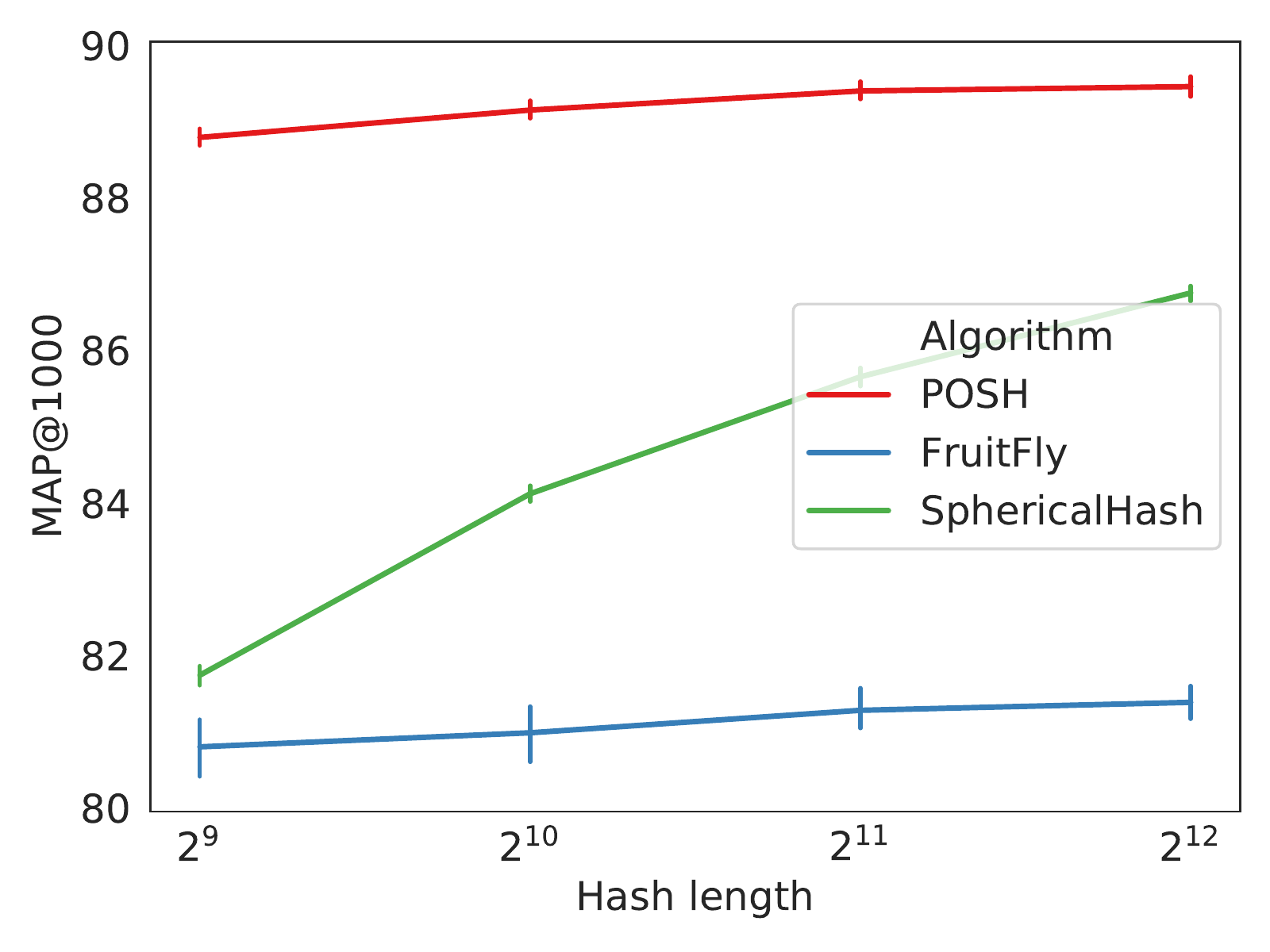} &
		\includegraphics[width=\linewidth]{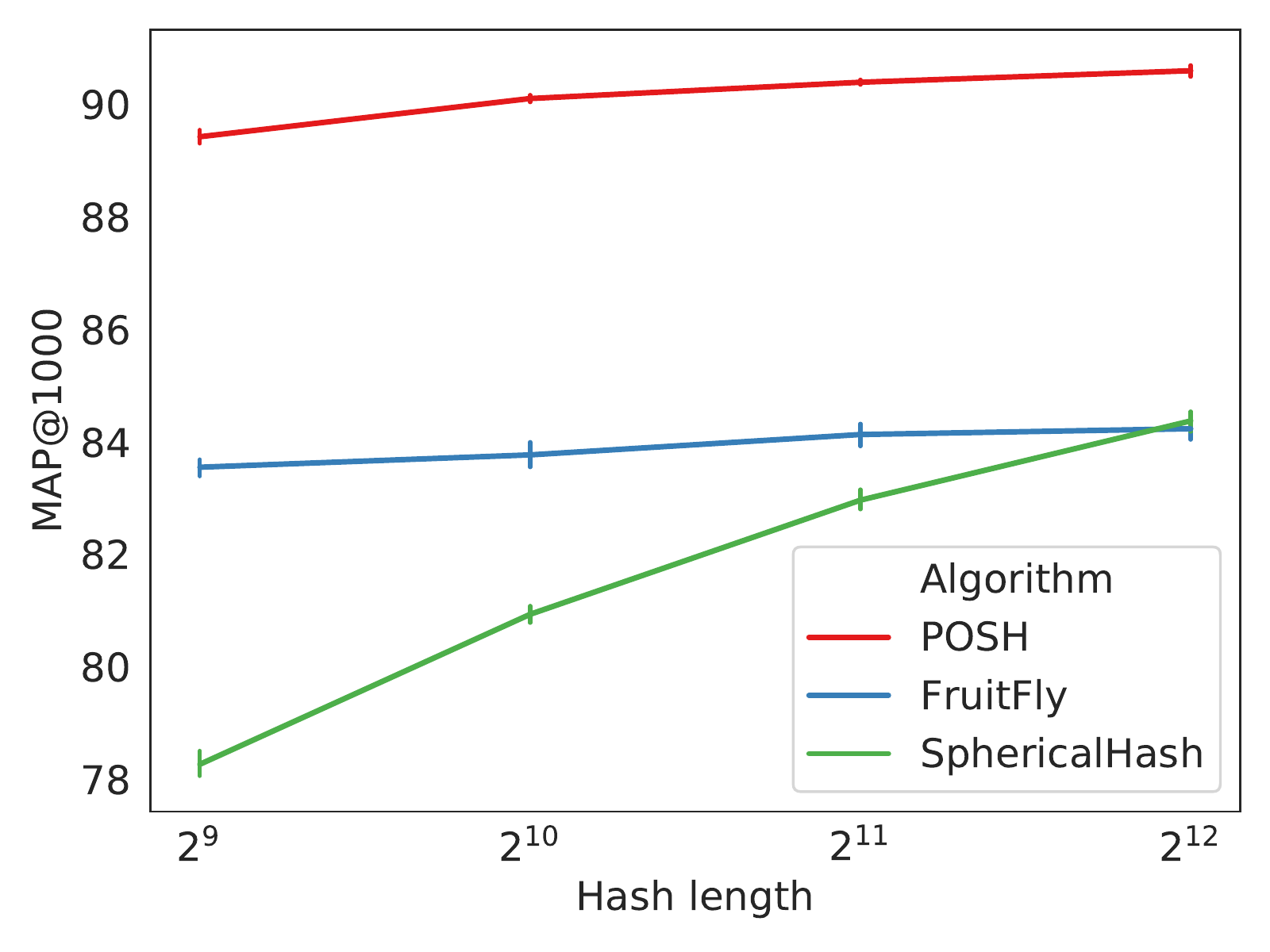} \\
		\begin{sideways}CIFAR10-GIST\end{sideways} &
		\includegraphics[width=\linewidth]{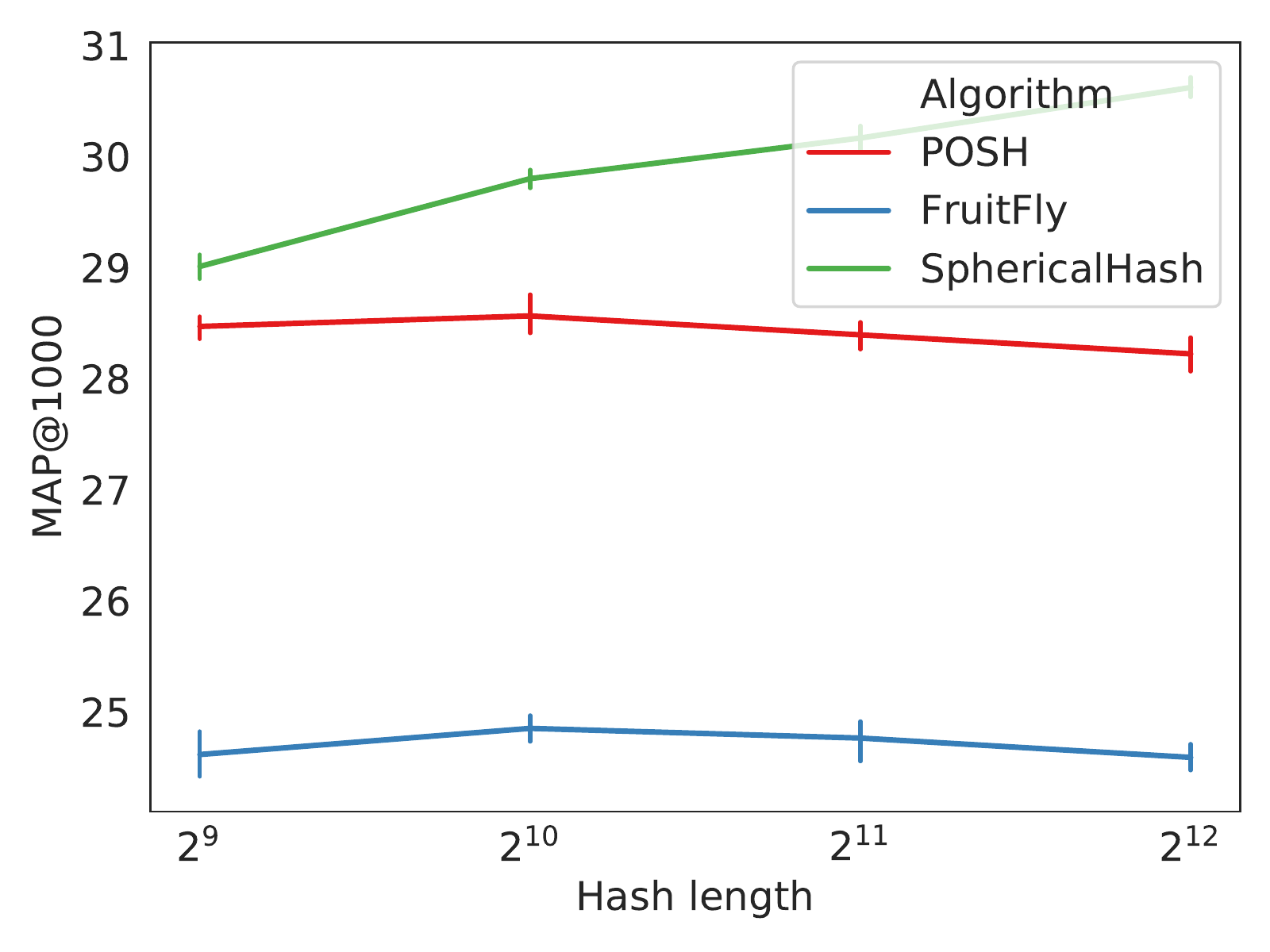} &
		\includegraphics[width=\linewidth]{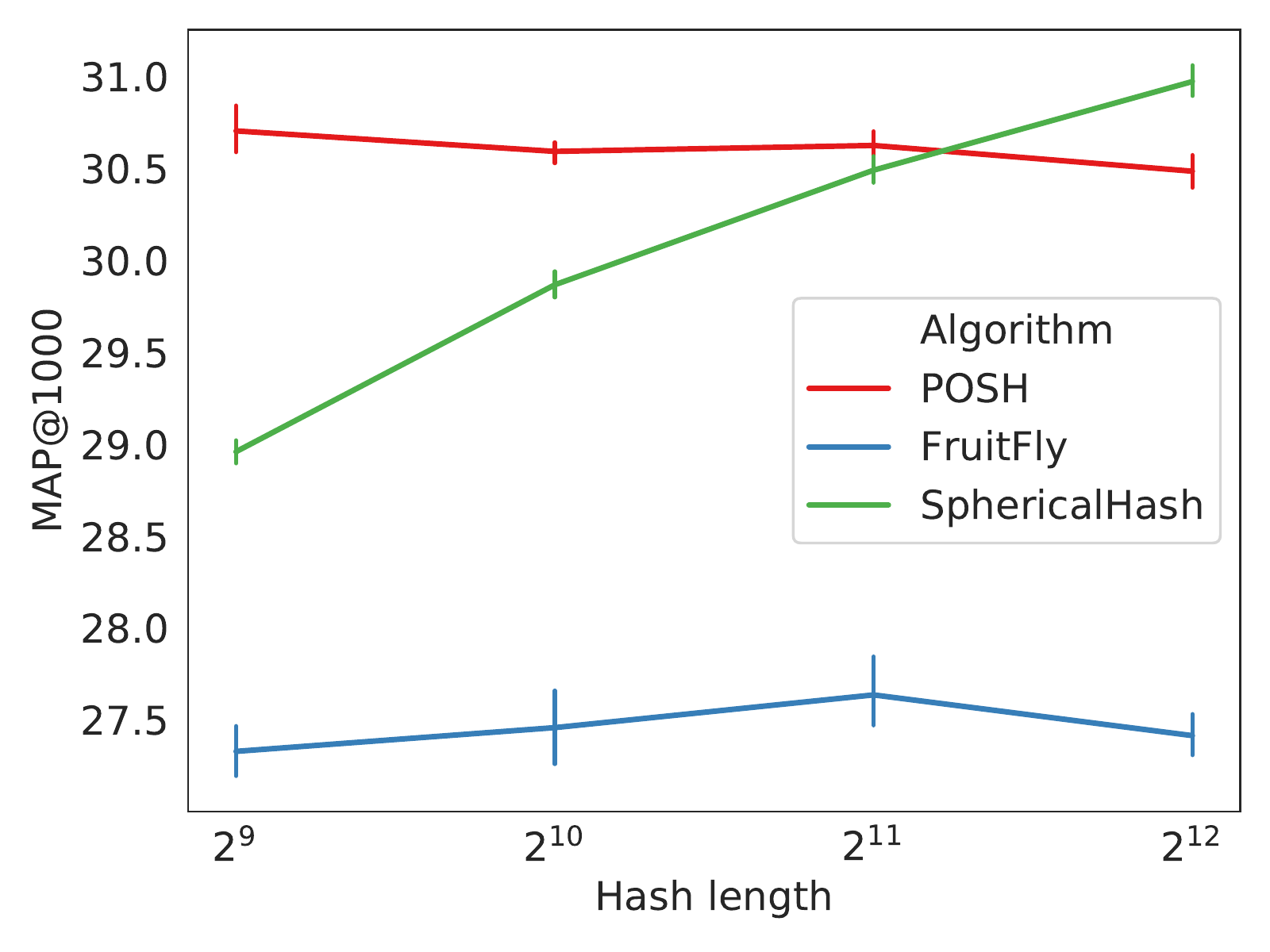} &
		\includegraphics[width=\linewidth]{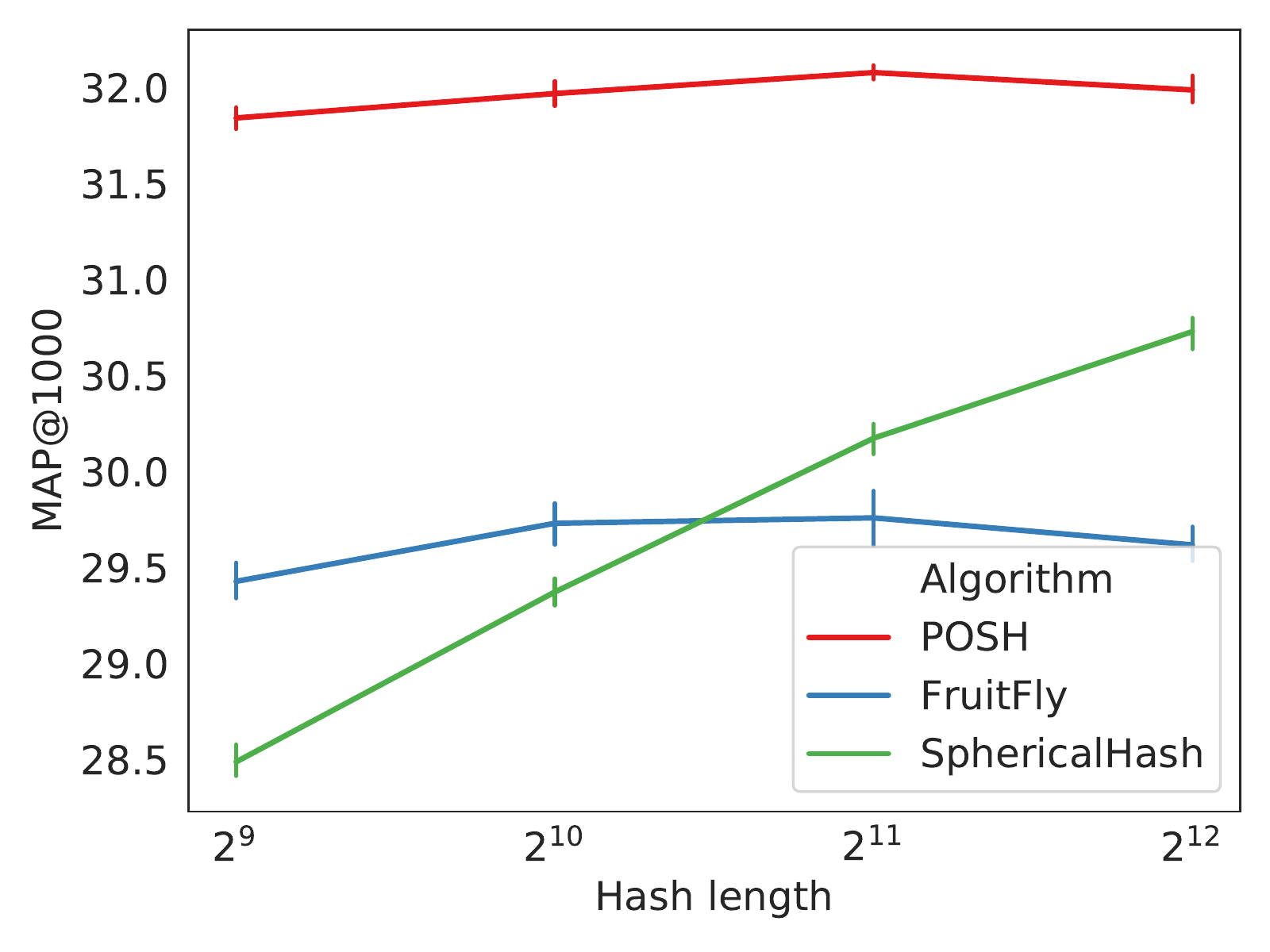} \\
		\begin{sideways}LabelMe-12-50K-GIST\end{sideways} &
		\includegraphics[width=\linewidth]{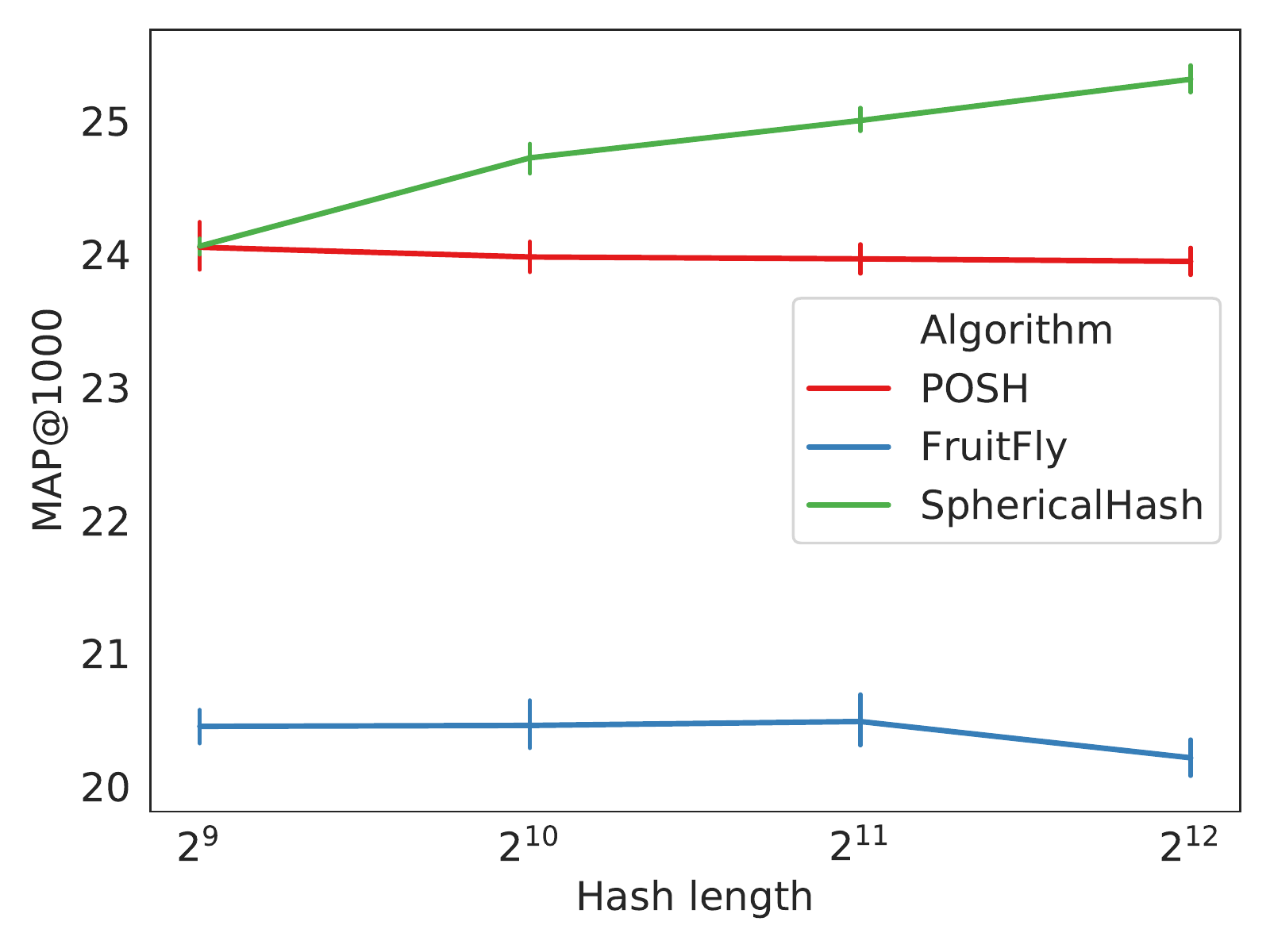} &
		\includegraphics[width=\linewidth]{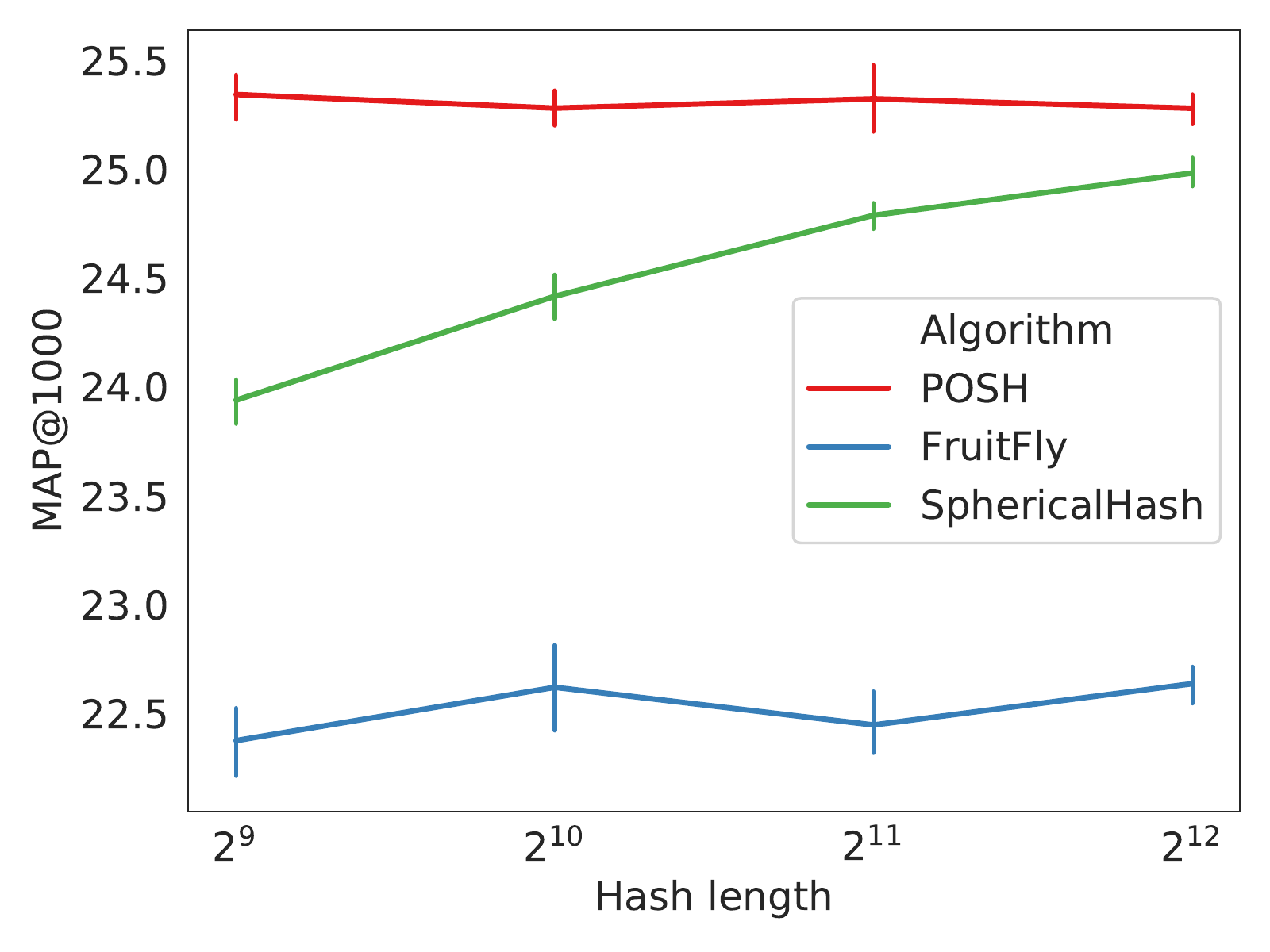} &
		\includegraphics[width=\linewidth]{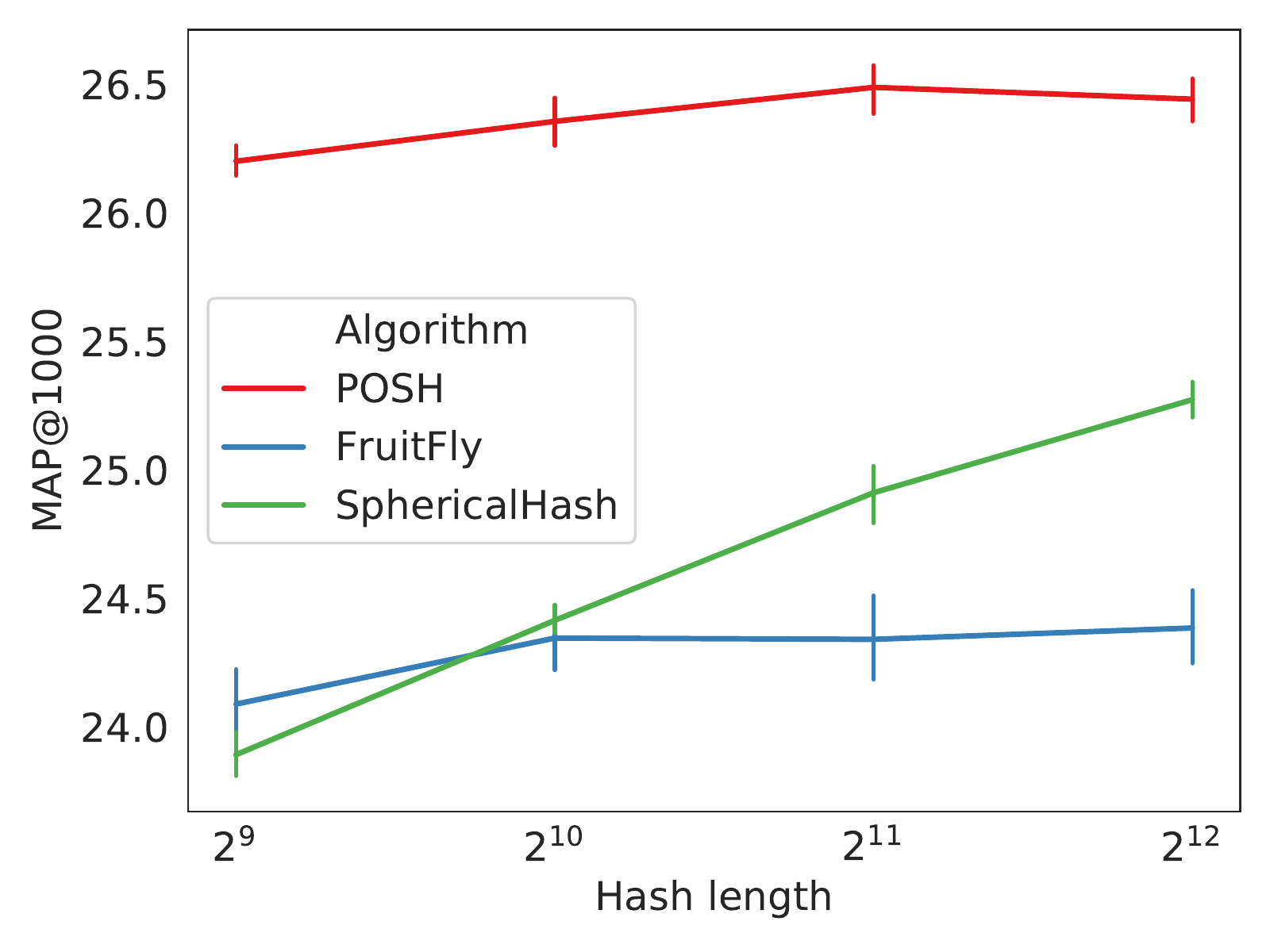} \\
	\end{tabu}
	\end{small}

	\caption{Comparison of different sparse hashing methods under different numbers $\alpha$ of set bits. The abscissa represents the hash length $D$. Error bars represent 95\% confidence intervals.}
	\label{fig:hash_length}
\end{figure}

In \cref{tab:places205_precision}, we present the same comparison as in \cref{tab:places205_map}, but measuring Precision@1000 instead of MAP@1000. Again, similar results are observed in both tables.

\begin{table}[t]
	\caption{Results (Precision@1000) for the large-scale dataset Places205 (approximately 2.5M targets and 20K queries).}
	\label{tab:places205_precision}

	\centering
	\begin{small}
	\begin{tabular} {l *{7}{S[table-format=2.2]}}
		\toprule
		& {LSH} & {ITQ} & KNNH & {FruitFly} & {SphericalHash} & {POSH} & {POSH+CR2} \\
		\midrule
		16 bits &  8.36 & 12.09 & 12.15 & 23.97 & 24.23 & \bfseries 26.81 & 27.94 \\
		32 bits & 15.75 & 20.02 & 19.63 & 26.91 & 24.26 & \bfseries 28.81 & 29.42 \\
		64 bits & 22.52 & 25.50 & 25.29 & 28.65 & 24.03 & \bfseries 29.96 & 30.13 \\
		\bottomrule
	\end{tabular}
	\end{small}	
\end{table}

\clearpage

\section{Reproducibility}

\begin{verbatim}
import numpy as np


def compute_hash(W, X, n_bits, center=None):
    """
    :param W: numpy array with shape (d, D)
    :param X: Input data, numpy array with shape (n, d)
    :param n_bits: number of set bits (alpha)
    :param center: numpy array with shape(1, D) with a vector at
    which to center the data (commonly its mean).
    If None, no re-centering occurs.
    :return: Hash codes, binary numpy array with shape (n, D)
    """
    if center is not None:
        X = X - center

    Y = X.dot(W)
    
    idx_rows = np.arange(len(Y))[:, np.newaxis]
    
    idx_cols = np.argpartition(-Y, n_bits - 1, axis=1)
    idx_cols = idx_cols[:, :n_bits]
    
    H = np.zeros_like(Y, dtype=np.int)
    H[idx_rows, idx_cols] = 1
    
    return H


def learn_POSH(X, D, n_bits, n_epochs=50, mini_batch_size=100):
    """
    :param X: Input data, numpy array with shape (n, d). We assume that
    it has been properly re-centered (de-meaned)
    :param D: total number of bits in each hash code
    :param n_bits: number of set bits (alpha)
    :param n_epochs: number of training epochs
    :param mini_batch_size: number of samples in each mini-batch
    :return: Projection matrix, numpy array with shape (d, D)
    """
    W = np.random.randn(X.shape[1], D)
    U, _, Vt = np.linalg.svd(W, full_matrices=False)
    W = U.dot(Vt)
    
    M = W
    for t in range(n_epochs):
        for i in range(0, X.shape[0], mini_batch_size)
            Y = X[i:i + batch_size]
            H = compute_hash(W, Y, n_bits)
            M += Y.T.dot(H)
    	
            U, _, V = np.linalg.svd(M, full_matrices=False)
            W = U.dot(V)
            
    return W



def spherical_kmeans(X, D, n_epochs=50, mini_batch_size=100):
    """
    :param X: Input data, numpy array with shape (n, d). We assume that
    it has been properly re-centered (de-meaned)
    :param D: total number of bits in each hash code
    :param n_epochs: number of training epochs
    :param mini_batch_size: number of samples in each mini-batch
    :return: Projection matrix, numpy array with shape (d, D)
    """
    W = np.random.randn(X.shape[1], D)
    W = W / np.linalg.norm(W, axis=0, keepdims=True)
    
    for t in range(n_epochs):
        M = 0
        for i in range(0, X.shape[0], mini_batch_size)
            Y = X[i:i + batch_size]
            Y /= np.linalg.norm(Y, axis=1, keepdims=True)
            S = compute_hash(W, Y, 1)
            M += Y.T.dot(S)
    	
        norm = np.linalg.norm(M, axis=0, keepdims=True)
        norm[norm == 0] = 1
        W = M / norm
    
    return W
\end{verbatim}

\end{document}